\DeclareMathOperator*{\argmin}{argmin}
\newcommand{\at}[2][]{#1\bigg|_{#2}}
\newcommand{\dprod}[2]{\left\langle #1,#2\right\rangle}
\newcommand{\norm}[1]{\left\lVert #1\right\rVert}
\newcommand{\pDeriv}[2]{\frac{\partial #1}{\partial #2}}
\newcommand{\expPow}[2]{e^{-\lVert #1 - #2 \rVert^2/\epsilon}}
\newcommand{\expect}[1]{\mathbb{E}\left[#1\right]}
\newcommand{\dbtilde}[1]{\accentset{\approx}{#1}}
\newcommand{\re}[1]{\text{Re}\left\{#1\right\}}
\newcommand{\im}[1]{\text{Im}\left\{#1\right\}}
\algnewcommand{\Initialize}[1]{
  \State \textbf{Initialize:}\:#1
}
\newtheorem{theorem}{Theorem}
\newtheorem{lemma}[theorem]{Lemma}
\theoremstyle{definition}
\newtheorem{remark}{Remark}
\newtheorem{definition}[theorem]{Definition}
\numberwithin{equation}{section}
\newcommand{\man}{
	\mathcal{M}
}
\newcommand{\subman}{
	\mathcal{N}
}
\newcommand{\Hspace}{
	\mathcal{H}
}
\newcommand{\I}{
	\mathcal{I}
}
\newcommand{\CD}{
	\mathbb{C}^\mathcal{D}
}
\newcommand{\U}{
	\text{U}
}
\begin{document}

\title{The $G$-invariant graph Laplacian}

\author[1]{Eitan Rosen}
\author[2]{Paulina Hoyos}
\author[3]{Xiuyuan Cheng}
\author[2]{Joe Kileel}
\author[1]{Yoel Shkolnisky}

\affil[1]{Department of Applied Mathematics, Tel Aviv University}
\affil[2]{Department of Mathematics, The University of Texas at Austin}
\affil[3]{Department of Mathematics, Duke University}







\date{}

\maketitle
\begin{abstract}
     Graph Laplacian based algorithms for data lying on a manifold have been proven effective for tasks such as dimensionality reduction, clustering, and denoising. In this work, we consider data sets whose data points lie on a manifold that is closed under the action of a known unitary matrix Lie group~$G$. We propose to construct the graph Laplacian by incorporating the distances between all the pairs of points generated by the action of~$G$ on the data set. We deem the latter construction the~``$G$-invariant Graph Laplacian'' ($G$-GL). 
	 We show that the $G$-GL converges to the Laplace-Beltrami operator on the data manifold, while enjoying a significantly improved convergence rate compared to the standard graph Laplacian which only utilizes the distances between the points in the given data set. Furthermore, we show that the $G$-GL admits a set of eigenfunctions that have the form of certain products between the group elements and eigenvectors of certain matrices, which can be estimated from the data efficiently using FFT-type algorithms. We demonstrate our construction and its advantages on the problem of filtering data on a noisy manifold closed under the action of the special unitary group~$SU(2)$.
\end{abstract}

\section{Introduction}\label{secIntro}
A popular modeling assumption in data analysis is that the observed data lie on a low dimensional manifold~$\man$  that is embedded in high dimensional Euclidean space. When~$\man$ is a linear subspace, it can be identified by using principal component analysis (PCA). However, most often~$\man$ is non linear. A leading approach for analyzing data with a nonlinear manifold structure is to encode the data by using a graph, whose vertices are the data points, and whose edge weights encode the similarities between pairs of points. These similarities can be used to form a matrix known as the graph Laplacian, and its eigenvectors and eigenvalues are used for tasks such as dimensionality reduction, clustering, and denoising (\cite{diffMaps,lapMap,diffMapsSigProc}). While the term graph Laplacian has been given different definitions in different contexts \cite{chung,godsil}, in this paper we adopt the definition and notation in~\cite{belkin2003laplacian} and~\cite{diffMaps}.

Formally, let $\left\{ x_1,\ldots,x_N \right\}$ be a set of points that reside on a compact and smooth~$d$-dimensional manifold~$\man$ embedded in~$\mathbb{C}^n$. We form the matrix~$W\in  \mathbb{R}^{N\times N}$ with~$W_{ij} = K(x_i,x_j)$, where~$K$ is a positive semi-definite kernel function. 
The graph Laplacian is then defined as the matrix $L\in \mathbb{R}^{N\times N}$ given by
\begin{equation}\label{intro:classicalGLDef}
	L = D-W,\quad D_{ii} = \sum_{j=1}^{N}W_{ij},
\end{equation}
where $D$ is the $N\times N$ diagonal matrix with the $i$-th' element on the diagonal given by~$D_{ii}$ in~\eqref{intro:classicalGLDef}. Various choices for the kernel~$K$ have been utilized in the literature \cite{belkin2003laplacian,singer2012vector}. In this work, we make the popular choice of~$K$ being the Gaussian kernel function, due to its favorable analytical properties. In this case,
\begin{equation}\label{intro:gaussKerDef}
	W_{ij} = K(x_i,x_j) = e^{-\norm{x_i-x_j}^2/\epsilon}, \quad i,j\in\left\{1,\ldots,N\right\},
\end{equation}
where~$\epsilon$ is a bandwidth to be determined from the data. 

A particularly important matrix related to~$L$ of~\eqref{intro:classicalGLDef} is the random-walk normalized graph Laplacian, defined as
\begin{equation}\label{intro:classicalNormGLDef}
	\tilde{L}= D^{-1}L=I-P, \quad P = D^{-1}W.
\end{equation}
The matrix~$P$ is row-stochastic, and thus, may be viewed as the transition probability matrix of a random walk over the data points (which gives~$\tilde{L}$ its name).  
The latter view was adopted in the seminal work~\cite{diffMaps}, where the eigenvectors of~$P$ (which are identical to those of~$\tilde{L}$) and its eigenvalues are used to construct ``diffusion maps'', a successful machine learning framework for dimensionality reduction and clustering of manifold data. 
Furthermore, in~\cite{lapMapConv} it was shown that if the data points are sampled uniformly from a manifold~$\man$, then~$\tilde{L}$ converges to the Laplace-Beltrami operator~~$\Delta_\man$ on $\man$ as $\epsilon\rightarrow 0$ and $N\rightarrow \infty$.  Formally, it was shown that for a sufficiently smooth~$f$, with high probability
\begin{equation}\label{intro:classicalConv}
	\frac{4}{\epsilon}\sum_{j=1}^N \tilde{L}_{ij}f(x_j) = \Delta_\man f (x_i)+O\left(\frac{1}{N^{1/2}\epsilon^{1/2+d/4}}\right)+O(\epsilon). 
\end{equation}
The latter result has important theoretical and practical consequences. First, we observe that the convergence rate of the graph Laplacian to~$\Delta_\man$ depends on the intrinsic dimension $d$ of $\man$ and not on the ambient high dimension~$n$ of the data points, mitigating the ``curse of dimensionality''~\cite{Chen2009Curse}. Second, it is known that the eigenfunctions of~$\Delta_\man$ provide a basis for the space~$L^2(\man)$ of square-integrable functions on~$\man$ \cite{rosenberg}. For example, when~$\man$ is the circle~$S^1$, the eigenfunctions of~$\Delta_{S^1}$ are given by the Fourier modes~$\left\{ e^{im\theta}\right\}$. Recent results~\cite{CHENG2022132} show that the eigenvectors of~$\tilde{L}$ converge to the eigenfunctions of~$\Delta_\man$. This implies that the eigenvectors of the graph Laplacian constructed from a data set sampled from~$S^1$ are discrete approximations to the Fourier modes, giving rise to classical discrete Fourier analysis. 
Analogously, the eigenvectors of the graph Laplacian constructed by using a sample from a general compact manifold~$\man$ can be employed for a data-driven discrete Fourier analysis on~$\man$ \cite{diffMapsSigProc}.

In various scenarios, the data set under consideration is closed under the action of a group, namely, there is a known group $G$ such that if $x_{i}$ is a point in our data set, then for each $A\in G$ the point $A\cdot x_{i}$ resulting from the action of~$A$ on~$x_i$ is a valid data point (which is not necessarily in the data set, but may be added to it). Such data sets are called~``$G$-invariant''.  For example, in electron-microscopy imaging, a method to determine the 3D structure of a molecule from its 2D images acquired by an electron microscope~\cite{Frank}, all the images lie on a manifold of dimension~3 (diffeomorphic to the 3D rotations group~$SO(3)$). The planar rotation of any such image is a valid image that may have been acquired by the microscope. Thus, the manifold of images is closed under the action of the rotations group $G=SO(2)$. 
	
In~\cite{steerMaps}, it was shown how to construct the graph Laplacian from all given images and all their infinitely many in-plane rotations. This construction is deemed ``the steerable graph Laplacian''. A key result of~\cite{steerMaps} is that the steerable graph Laplacian converges to the Laplace-Beltrami operator on~$\man$ (in this case a~$SO(2)$-invariant compact manifold) faster than the graph Laplacian~\eqref{intro:classicalNormGLDef}. Specifically, it was shown that the steerable graph Laplacian approximates $\Delta_{\man}$ with an error that is given asymptotically by
\begin{equation}\label{intro:convSpeedUp}
	O\left(\frac{1}{N^{1/2}\epsilon^{1/2+(d-1)/4}}\right)+O(\epsilon).
\end{equation} 
The latter error converges to zero at a rate that depends on $d-1$, and so converges to zero faster than the corresponding error term~\eqref{intro:classicalConv} of the standard graph Laplacian~\eqref{intro:classicalNormGLDef}, whose convergence rate depends on~$d$. This improved convergence rate is attributed to the following two facts. The first is that all infinitely many in-plane rotations of each given image are known. The second is that the action of the rotations group on the image manifold~$\man$ accounts for one dimension of~$\man$ (since planar rotations are parametrized by a single angle in~$[0,2\pi)$). 
Combining these two facts implies that the error depends only on~$d-1$ dimensions.
Furthermore, it was shown in~\cite{steerMaps} that the eigenfunctions of the steerable graph Laplacian are tensor products between certain $N$-dimensional vectors and complex exponentials. This special form of the eigenfunctions gives rise to efficient algorithms for their computation. These eigenfunctions are used in~\cite{steerMaps} for filtering noisy functions on $\man$ using a Fourier-like scheme, and are shown to result in an improved error bound compared to the bound achieved by employing the eigenvectors of the standard graph Laplacian~\cite{diffMapsSigProc}. 

This paper is the first part of a two part work presenting a graph Laplacian based framework for the analysis of~$G$-invariant data sets. 
In this paper, we extend the results of~\cite{steerMaps} (which focuses on image manifolds closed under planar rotations) to the setting where the given data points lie on an arbitrary compact manifold~$\man$ of dimension~$d$, closed under the action of an arbitrary compact unitary matrix Lie group~$G$. An example of such a data set is a collection of subtomograms (volumes) in cryo-electron tomography~\cite{HYLTON2021102959,doerr2017cryo}, which due to the experimental setup are arbitrarily rotated in space, that is,~$G=SO(3)$. The results in the current paper also lay the foundations for Part~II~\cite{GDM}, where we develop low-dimensional embeddings of the~$G$-invariant data (which were not proposed in~\cite{steerMaps}) of two types. The first type is a $G$-invariant embedding, which means that any two points which are related by the action of an element of~$G$ are embedded into the same point. In the context of machine learning, this embedding may be used to organize the data into clusters where the points in each cluster are related by the action of a group element (for example, images which are rotations of one another). The second type is a $G$-equivariant embedding, which means that the embeddings of two points which are related by the action of an element of $G$, are themselves related by the action of the same element. Such embeddings may be applied, for instance, to align images which are rotations of one another. 

The contributions of this paper are as follows. 
First, we construct the $G$-invariant graph Laplacian ($G$-GL), which is conceptually the standard graph Laplacian~\eqref{intro:classicalNormGLDef} constructed using a data set consisting of the given data points as well as all (infinitely many) points generated by applying $G$ to the given points. Second, we show that if $d_G$ is the dimension of $G$, then the $G$-GL approximates the Laplace-Beltrami operator on the manifold with an error given asymptotically by
\begin{equation}\label{intro:convSpeedUpG}
	O\left(\frac{1}{N^{1/2}\epsilon^{1/2+(d-d_G)/4}}\right)+O(\epsilon),
\end{equation}  
analogously to \eqref{intro:classicalConv} and \eqref{intro:convSpeedUp}. The result~\eqref{intro:convSpeedUpG} is of great practical importance, as the improved convergence rate implies that significantly less data is required in order to achieve a prescribed accuracy, compared to the standard graph Laplacian.
Third, we derive the eigenfunctions of the $G$-invariant graph Laplacian and show that they admit the form of products between certain vectors and the elements of the irreducible unitary representations of~$G$. Furthermore, we show that this form of the eigenfunctions enables their efficient computation while avoiding explicitly augmenting the input data
(that is, by adding the points~$A\cdot x$ for every point~$x$ in the data set, and all~$A\in G$). 
We then demonstrate the utility of these eigenfunctions in filtering a noisy data set sampled from the four-dimensional unit sphere. 
We comment that different proofs for some of the theoretical results in this paper can be found in~\cite{pauJoe}. The proof strategies in both papers differ in that here we explicitly construct a $G$-invariant local parametrization of the data manifold, whereas~\cite{pauJoe} uses a less concrete approach by passing to the abstract quotient manifold. The advantage of the approach taken here is that while~\cite{pauJoe} uses advanced machinery from fiber bundle theory, here we employ mostly basic instruments from manifold calculus in Euclidean spaces, which is accessible to a wider audience. 

This paper is organized as follows. In Section~\ref{SecRelatedWork}, we review some related work on group invariance and compare it with our approach. In Section~\ref{SecPreliminaries}, we discuss the structure induced on the data manifold by the group action, and introduce some basic machinery from representation theory used in this work. In Section \ref{SecGInv}, we introduce the $G$-invariant graph Laplacian and present its key properties. In Section~\ref{secDenoise}, we demonstrate how to use the eigenfunctions of the $G$-invariant graph Laplacian to filter noisy data sets. In Section~\ref{numericsSec} we describe the details of the numerical computation of the~$G$-GL, and discuss computational complexity. Lastly, in Section~\ref{SecSummary}, we summarize our results and discuss future work. 

\section{Related work}\label{SecRelatedWork}
Other works dealing with group invariance typically focus on rotation invariance, especially in image processing algorithms~\cite{eller2017rotation,zimmer2008rotationally,ji2009moment,singer2012vector,singer2011viewing,zhao2014rotationally}. There are four main approaches in the literature towards rotation invariance. The first approach is based on the steerable PCA~\cite{FFBsPCA2015,landa2017steerable}, which computes the PCA of a set of images and all their infinitely many rotations, namely, finds the linear subspace which best spans a set of images and all their rotations. In a sense, our work is a generalization of this approach to nonlinear manifolds and to general compact matrix Lie groups (and not just rotations). The second approach towards rotation invariance is defining a rotationally-invariant distance for measuring pairwise similarities and constructing graph Laplacians using this distance \cite{singer2012vector}. Unfortunately, it is often not obvious what invariant distance is most appropriate for the task at hand, and how to compute it efficiently. Furthermore, in general,
the limiting operator resulting from such a construction is either unknown, or is not the Laplace-Beltrami operator~\cite{manLearnArbitraryNorm}, in which case, its properties are not well understood. 
In our approach, on the other hand, we consider not only the distance between best matching rotations of image pairs (nor any other type of a rotationally-invariant distance), but rather the standard (Euclidean) distance between all rotations of all pairs of images. We show that all these pointwise Euclidean distances can be computed efficiently by using FFT-type algorithms (when available), and that the resulting operator converges to the Laplace-Beltrami operator on the data manifold. This enables us to preserve the geometry of the underlying manifold (in contrast to various rotation-invariant distances) while making the resulting operator (the $G$-invariant graph Laplacian) invariant to the action of the group on our data set. Moreover, our approach is applicable not only to rotations, but rather to any compact matrix Lie group~$G$.  The third approach to group invariance is based on CNNs~\cite{cnnSO3,cnnSE3,cnnSO2} that produce group equivariant features (for low dimensional rotation groups) by convolving the data with steerable basis functions in each layer. However, this approach lacks solid theory, and in particular, provides no error bounds and no means for analyzing the properties of the resulting tools. Furthermore, unlike CNNs, our approach is applied directly to unlabeled data. 
The fourth approach, also commonly based on CCN's, is to augment a given data set~$X=\left\{x_1,\ldots,x_N\right\}$, by adding to it all the points of the form~$A\cdot x_i$ for some finite set of elements~$A\in G$ \cite{Cohen2016GroupEC,cycSymmCNN,rotInNeuro,Marcos2016LearningRI,bestPracticeCNN}. This approach suffers from several shortcomings. First, since we have chosen a finite set of elements~$A_1,\ldots,A_K\in G$, the augmented data set is only approximately invariant to the action of~$G$. Second, the augmented data set is larger than the original data set by a factor of $K$, which poses computational challenges. Third, if the data are noisy, this approach introduces correlations in the noise of different data points. In contrast, in our approach, we derive a numerically efficient construction of a~$G$-invariant operator that is equivalent to constructing the standard graph Laplacian in~\eqref{intro:classicalGLDef} from all the (infinitely many) points generated by the action of~$G$ on the points in~$X$, without explicitly augmenting~$X$.

We note the work~\cite{ZhangProduct}, which although does not deal with group invariance, makes an important contribution by deriving an algorithm for manifold factorization of product manifolds. We can consider the algorithm in~\cite{ZhangProduct} as a form of invariant learning, as the goal of the algorithm there is to learn submanifolds that are independent of the other submanifolds comprising the product, and thus, can be used to learn the submanifolds which are generated by the action of the group, and factor them out. However, as we later explain, in our setting, any sufficiently small neighborhood of the data manifold~$\man$ is isomorphic to a product of manifolds, but~$\man$ itself need not be a product of manifolds. In that sense, the setting in~\cite{ZhangProduct} is much more restrictive.

Finally, a special attention should be given to~\cite{fan2019unsupervised}. Similarly to the works mentioned above, this work also defines a group invariant distance by looking at a single group element that best ``aligns'' a given pair of points. In that sense, its approach is fundamentally different from what we propose here. Yet, this is the first work we know of that addresses the group invariance problem for arbitrary Lie groups.

\section{Preliminaries} \label{SecPreliminaries}

\subsection{Manifolds under actions of matrix Lie groups}\label{secLieGroupAction}
In this section, we describe our model for data sets closed under the action of a matrix Lie group. In particular, we define matrix Lie groups and their action on the data set. 

\begin{definition}\label{secLieGroupAction:LieGroupDef}
	A matrix Lie group is a smooth (that is, differentiable) manifold $G$, whose points form a group of matrices. 
\end{definition}
For example, consider the group $SU(2)$ of $2\times2$ unitary matrices with determinant~1. Each matrix $A\in SU(2)$ can be written using Euler angles as
\begin{equation}\label{secLieGroupAction:fundIUR}
	A(\alpha,\beta,\gamma) = 
	\begin{pmatrix}
		\cos{\frac{\beta}{2}}e^{i(\alpha+\gamma)/2} & \sin{\frac{\beta}{2}}e^{i(\alpha-\gamma)/2}\\ 
		i\sin{\frac{\beta}{2}}e^{-i(\alpha-\gamma)/2} & \cos{\frac{\beta}{2}}e^{-i(\alpha+\gamma)/2}
	\end{pmatrix},
\end{equation}
where $\alpha\in [0,2\pi), \beta \in [0,\pi)$ and $\gamma\in [-2\pi,2\pi)$. Using the fact that 
the sum of squares of the entries of $A(\alpha,\beta,\gamma)$ equals one, it is easily inferred that~$SU(2)$ is diffeomorphic to the three-dimensional unit sphere $S^3$. Other important examples for matrix Lie groups include the group of  three-dimensional rotation matrices~$SO(3)$, and the $n$-dimensional torus $\mathbb{T}^n$, which is simply the group of diagonal $n\times n$ unitary matrices. 

\begin{definition}\label{secLieGroupAction:gActDef}
	The action of a group $G$ of $n\times n$ matrices on a subset $S\subseteq \mathbb{C}^{n}$ is the map $\text{'}\cdot\text{'}:G\times S\rightarrow S$, defined for each $A\in G$ and $x\in S$ by matrix multiplication on the left~$A\cdot x$. 
	We say that a set~$S$ is closed under the action of a group~$G$, or simply~$G$-invariant, if~$A\cdot x\in S$ for all~$x\in S$ and~$A\in G$.
\end{definition}
In this work, we assume that we are given a data set~$X=\left\{x_1,\ldots,x_N\right\}$ sampled from a smooth, compact, and $G$-invariant manifold~$\man$ without boundary, embedded in~$\mathbb{C}^{n}$, where~$G$ is a unitary matrix Lie group. In particular, the $G$-invariance implies that $A\cdot x \in \man$ for all $x\in\man$ and $A\in G$. 
An additional useful characterization of~$G$-invariant manifolds is derived from the following definition. 
\begin{definition}\label{secLieGroupAction:orbitDef}
	For a fixed point~$x\in \man$, the orbit generated by the action of~$G$ on~$x$ is defined as the set
	\begin{equation}
		G\cdot x \coloneq \left\{A\cdot x \; : \; A\in G,\; x\in \man\right\}. 
	\end{equation}
\end{definition}
Thus, a manifold~$\man$ is~$G$-invariant if~$G\cdot x\subset\man$ for all~$x\in \man$, that is, $\man$ contains all the orbits of the action of~$G$ on its points. In particular, this implies that the set 
\begin{equation}\label{secLieGroupAction:GXDef}
	G\cdot X \coloneq \left\{A\cdot x_i \; :\; A\in G,\; x_i\in X\right\} = \bigcup_{i=1}^N G\cdot x_i,
\end{equation}
of points generated by the action of~$G$ on the data set~$X$ is a~$G$-invariant subset in~$\man$. 
In Section~\ref{SecGInv}, we construct the central object in our framework, namely, the~$G$-invariant graph Laplacian ($G$-GL), which is a graph Laplacian constructed by using not only the points in~$X$ but rather all the points in~$G\cdot X$.

Finally, we will assume that the Lie group~$G$ is also compact. In the rest of this section, we give a short introduction to the theory of harmonic analysis on compact Lie groups, which is essential for the construction of the~$G$-GL. 

\subsection{Haar integration}
The theory of harmonic analysis on matrix Lie groups requires integrating functions over these groups. This is known as ``Haar integration'' since it is performed with respect to the Haar measure, which we now define. 
\begin{definition}
	A Haar measure over a Lie group $G$ is a finite valued, non-negative function~$\eta(\cdot)$ over all (Borel) subsets $S\subseteq G$, such that 
	\begin{equation}\label{secLieGroupAction:leftInvar}
		\eta(A\cdot S) = \eta(S)\quad \text{ for all} \quad A\in G.
	\end{equation}
\end{definition}
By Haar's theorem (see .e.g~\cite{harmAnalysisFolland}), for every compact matrix Lie group there exists a Haar measure which is unique up to a multiplicative constant. In this work, we choose (without loss of generality) the unique measure~$\eta$ such that
\begin{equation}\label{secLieGroupAction:probMeasure}
	\eta(G) =1,
\end{equation}
and henceforth refer to this~$\eta$ as ``the Haar measure over~$G$''.
Essentially, the function $\eta(\cdot)$ measures the volume of subsets of the manifold $G$. Specifically, property~\eqref{secLieGroupAction:probMeasure} makes $\eta(\cdot)$  a probability measure over $G$. Furthermore,  property~\eqref{secLieGroupAction:leftInvar}, known as 'left invariance', means that multiplication by a matrix $A$ from the left maps the set $S\subseteq G$ to another subset of $G$ of the same measure, implying that~$\eta(\cdot)$ is uniform over $G$.
In the context of integration, property~\eqref{secLieGroupAction:leftInvar} implies that the Haar integral is left-invariant, namely, for any $B\in G$ we have that  
\begin{equation}\label{secLieGroupAction:haarIntLeftInvar}
	\int_G f(BA)d\eta(A) = \int_G f(C)d\eta(B^*C) = \int_G f(C)d\eta(C)= \int_G f(A)d\eta(A),
\end{equation} 
where we substituted~$C=BA$ in the first equality, and used~\eqref{secLieGroupAction:leftInvar} in the second equality. 

As an example of a Haar integration, the integral of a function $f$ over $G=SU(2)$ can be computed in terms of Euler angles by (see~\cite{nonComHarmAnalys})
\begin{equation}\label{secLieGroupAction:SU2IntEuler}
	\int_{SU(2)}f(A)d\eta(A) =\frac{1}{16\pi^2}\int_0^{2\pi}\int_0^{\pi}\int_{-2\pi}^{2\pi}f(A(\alpha,\beta,\gamma))\sin\beta d\alpha d\beta d\gamma,
\end{equation}
where $A(\alpha,\beta,\gamma)$ is defined in~\eqref{secLieGroupAction:fundIUR}. In this case, the volume element~$d\eta(A)$ induced by the Haar measure is just $d\alpha d\beta d\gamma$ multiplied by $\frac{\sin\beta}{16\pi^2}$, which is the absolute value of the Jacobian determinant of the parametrization of $SU(2)$ by Euler angles.  

\subsection{Harmonic analysis on compact matrix Lie groups}\label{secHarmAnalysis}
The framework we develop below in Section \ref{SecGInv} employs series expansions of functions over compact matrix Lie groups. The expansion of a function $f:G\rightarrow \mathbb{C}$ is obtained in terms of the elements of certain matrix-valued functions, known as the irreducible unitary representations of $G$, which we now define. 

\begin{definition}\label{harmAnalysisOnG:URdef}
An $n$-dimensional unitary representation of a group $G$ is a unitary matrix-valued function $U(\cdot)$ from $G$ to the group U(n) of $n\times n$ unitary matrices, such that  
	\begin{equation}\label{harmAnalysisOnG:hMorphProp}
		U(A\cdot B) = U(A)\cdot U(B),
	\end{equation}
\end{definition}
and the identity element in~$G$ is mapped to the identity element in~U(n). The homomorphism property \eqref{harmAnalysisOnG:hMorphProp}, implies that the set $\left\{U(A)\right\}_{A\in G}$ is also a matrix Lie group. In particular, the latter implies that each element of the matrix valued function~$U(\cdot)$ is a smooth function over $G$. 

\begin{definition}\label{harmAnalysisOnG:IURdef}
A group representation $U(\cdot)$ is called reducible if there exists a unitary matrix~$P$ such that $P\cdot U(A)\cdot P^{-1}$ is block diagonal for all $A\in G$. A group representation is called irreducible if it is not reducible. We abbreviate irreducible unitray representation as IUR.
\end{definition}
By the Peter-Weyl theorem~\cite{lieGroups}, there exists a countable family $\left\{U^{\ell}\right\}$ of finite dimensional IURs of~$G$, such that the collection~$\left\{U^{\ell}_{ij}(\cdot)\right\}$ of all the elements of all these IURs forms an orthogonal basis for $L^2(G)$.
This implies that any smooth function $f:G\rightarrow \mathbb{C}$ can be expanded in a series of the elements of the IURs of~$G$. 
For example, the IURs of $SU(2)$ in \eqref{secLieGroupAction:fundIUR} are given by a sequence of matrices $\left\{U^{\ell}\right\}$, where~$\ell=0,1/2,1,3/2,\ldots$, and $U^{\ell}(A)$ is a $(2\ell+1)\times (2\ell+1)$ dimensional matrix for each $A\in G$ (see e.g. \cite{nonComHarmAnalys}). In fact, the matrices in~\eqref{secLieGroupAction:fundIUR} correspond to the IUR of~$SU(2)$ with~$\ell=1/2$. 

The series expansion of a function $f:G\rightarrow \mathbb{C}$ is then given by 
\begin{equation}\label{harmAnalysisOnG:SO3Fourier}
	f(A) = \sum_{\ell\in \I_G}d_\ell\cdot\sum_{m,n=1}^{d_\ell} \hat{f}^\ell_{mn}U_{mn}^{\ell}(A),\quad \hat{f}^\ell_{mn} = \int_G f(B)\overline{U^{\ell}_{mn}(B)}d\eta(B),
\end{equation}
where~$\I_G$ is a countable set that enumerates the IURs of~$G$, $d_\ell$ is the dimension of the~$\ell$-th IUR, and $\eta(\cdot)$ is the Haar measure on~$G$. The latter can also be written in the form
\begin{equation}\label{secLieGroupAction:SU2FourierMatCoeff}
	f(A)=\sum_{\ell\in \I_G}  d_\ell\cdot \text{trace}\left(\hat{f}^\ell\cdot  U^\ell(A)\right),
\end{equation}
where $\hat{f}^\ell$ is the $d_\ell\times d_\ell$ matrix given by 
\begin{equation}\label{sec2:fHatDef}
	\hat{f}^\ell= \int_{G} f(A)\overline{U^\ell(A)}d\eta(A),
\end{equation}
for all $\ell\in \I_G$. 

\begin{remark}
The group $SO(2)$ of two-dimensional rotations is a one dimensional matrix Lie group, whose IURs are given by the Fourier modes $\left\{e^{im\theta}\right\}_{m=-\infty}^\infty$. Thus, the series expansion of an $SO(2)$-valued function in terms of the IURs of $SO(2)$ is nothing but the classical Fourier series. In this sense, the expansion~\eqref{secLieGroupAction:SU2FourierMatCoeff} can be viewed as generalized Fourier series over~$G$, with the Fourier modes replaced by the IURs~$\left\{U^\ell\right\}$, and with coefficients given by the matrices $\left\{\hat{f}^\ell\right\}$ of~\eqref{sec2:fHatDef}.
\end{remark}

\section{The $G$-invariant graph Laplacian}\label{SecGInv}
In this section, we construct the~$G$-invariant graph Laplacian ($G$-GL) - a generalization of the standard graph Laplacian~\eqref{intro:classicalGLDef} for data sets sampled from a~$G$-invariant manifold~$\man$. We then compute the~$G$-GL's eigendecomposition, and show that a proper normalization of the~$G$-GL converges to the Laplace-Beltrami operator on~$\man$ significantly faster than~\eqref{intro:classicalGLDef}.

Let $X = \left\{x_1,\ldots, x_N\right\}$ be a data set sampled from a~$G$-invariant (see Definition~\ref{secLieGroupAction:gActDef}) compact manifold~$\man\subset \mathbb{C}^n$.
Our goal is to construct the graph Laplacian by using all the points in the~$G$-invariant set $G\cdot X$ in~\eqref{secLieGroupAction:GXDef}. As we will see shortly, our construction results in an operator (rather than a matrix) over a certain Hilbert space, which we now define. 

\begin{definition}\label{Ginvsec:HspaceDef}
	Given a data set $X=\left\{1,\ldots,N\right\}$, let~$\Gamma$ be the set of pairs
	\begin{equation}\label{Ginvsec:GammaSetDef}
		\Gamma \coloneq \left\{1,\ldots,N\right\}\times G  = \left\{(i,A)\; :\; i\in\left\{1,\ldots,N\right\},\; A\in G\right\}, 
	\end{equation}
	where each pair~$(i,A)$ corresponds to the point $A\cdot x_i\in G\cdot X$. 
	We define the Hilbert space $\Hspace = L^2(\Gamma)$ as the space of functions of the form~$f(i,A) = f_i(A)$, where~$f_i\in L^2(G)$ for all~$i\in \left\{1,\ldots,N\right\}$, endowed with the inner product 
	\begin{equation}\label{GinvSec:InnerProdDef}
		\left \langle f,g \right \rangle_{\mathcal{H}} = \sum_{i=1}^N\int_{G}f_i(A)\overline{g_i(A)}d\eta(A),
	\end{equation} 
	where $\eta(\cdot)$ is the Haar measure on $G$. 
\end{definition}
Now, let~$D$ be an~$N\times N$ diagonal matrix. We define the action of~$D$ on a function~$f\in \Hspace$ by
\begin{equation}\label{GinvDef:DMatAction} 
	\left\{Df\right\}(i,A) =D_{ii}\cdot f_i(A), \quad A\in G,
\end{equation}
where~$D_{ii}$ is the $i$'th element on the diagonal of~$D$. Equipped with Definition~\ref{Ginvsec:HspaceDef} and~\eqref{GinvDef:DMatAction}, we are now ready to define the~$G$-GL. 
\begin{definition}
	Let~$W:\Hspace\rightarrow \Hspace$ be the operator acting on functions~$f\in \Hspace$ by
	\begin{equation}\label{GinvDef:Wdef}
		\left\{Wf\right\}(i,A) = \sum_{j=1}^{N}\int_{G} W_{ij}(A,B)f_j(B)d\eta(B), \quad W_{ij} = e^{-\norm{A\cdot x_i-B\cdot x_j}^2/\epsilon},
	\end{equation}
	and let~$D$ be the~$N\times N$ diagonal matrix defined by
	\begin{equation}\label{GinvDef:Ddef}
		D=\operatorname{diag}\left(D_{11},\ldots,D_{NN}\right),\quad 	D_{ii} = \sum_{j=1}^N\int_{G}W_{ij}(I,C)d\eta(C),
	\end{equation}
	where~$I$ is the identity element in~$G$. The $G$-invariant graph Laplacian ($G$-GL) is defined as the operator~$L:\Hspace \rightarrow \Hspace$ given by
	\begin{equation}\label{GinvDef:Ldef}
		L =  D-W.
	\end{equation}
\end{definition}
Note that by~\eqref{GinvDef:Wdef}, we have that $W_{ij}(A,B) = W_{ji}(B,A)$ for all~$i,j\in\left\{1,\ldots,N\right\}$ and~$A,B\in G$, which implies that the operator~$W$ is symmetric. Combining the latter with~\eqref{GinvDef:Ldef} implies the same for~$L$. The following result asserts that~$L$ is a positive semi-definite operator. 
\begin{lemma}\label{GinvDef:quadFormLemma}
	The~$G$-GL admits the positive semi-definite quadratic form
	\begin{equation}\label{GinvSec:quadForm}
		\dprod{f}{Lf}_\Hspace=\frac{1}{2}\sum_{i,j=1}^{N}\int_G\int_G W_{ij}(A,B) \left|f_i(A)-f_j(B)\right|^2 d\eta(A)d\eta(B).
	\end{equation}
\end{lemma}
The proof of~Lemma~\ref{GinvDef:quadFormLemma} is given in~Appendix~\ref{quadFormLemmaPrf}.
The form~\eqref{GinvSec:quadForm} is analogous to the quadratic form of the standard graph Laplacian~\cite{manReg}. Thus, this form is important on its own right since it can be used as a smoothness regularization term in various machine learning algorithms where the objective function is assumed to have been sampled over a compact~$G$-invariant manifold. This idea was first proposed in~\cite{manReg}, and rigorously justified in~\cite{Cheng2020ConvergenceOG}. Intuitively, the quantity~$\dprod{f}{Lf}_\Hspace$ puts large penalties on the differences~$\left|f_i(A)-f_j(B)\right|$ when $W_{ij}$ is large, that is, when there exist~$A,B\in G$ such that the points~$A\cdot x_i$ and~$B\cdot x_j$ are close.
Thus, the quantity~$\dprod{f}{Lf}_\Hspace$ can be viewed as imposing a notion of smoothness on functions over the domain~$\Gamma$ in~\eqref{Ginvsec:GammaSetDef}. 

Analogously to the results in \cite{steerMaps,convRate}, below we show that the normalization 
\begin{equation}\label{GinvDef:normGLapDef}
	\tilde{L} = D^{-1}L = I-D^{-1}L,
\end{equation}
of~$L$ in~\eqref{GinvDef:Ldef} converges to the Laplace-Beltrami operator on~$\man$. 
While other useful normalizations of~$L$ are possible, in the current work, we mainly focus on~\eqref{GinvDef:normGLapDef}. Thus, we henceforth refer to~\eqref{GinvDef:normGLapDef} as the normalized~$G$-GL.

As mentioned above, unlike previous works~\cite{lapMap,singer2012vector}, our construction results in an operator over a Hilbert space rather than a matrix (compare with~\eqref{intro:classicalGLDef}). This is a direct consequence of the continuous nature of the set~$\Gamma$ in~\eqref{Ginvsec:GammaSetDef}, being a product between a discrete set and a Lie group~$G$, on account of~$G$ being a smooth manifold by Definition~\ref{secLieGroupAction:LieGroupDef}.
As we will see next, the continuity of~$G$ also implies that the~$G$-GL admits an infinitely-countable basis of eigenfunctions for the space~$\Hspace$ (see Definition~\ref{Ginvsec:HspaceDef}) instead of the finite set of eigenvectors of the graph Laplacian matrix in~\eqref{intro:classicalGLDef}. In particular, the eigenfunctions of the~$G$-GL can be evaluated for any~$A\in G$.

\subsection{Eigendecomposition of the $G$-GL}
We now derive the eigendecompostions of the $G$-GL~\eqref{GinvDef:Ldef}, and its normalized version~\eqref{GinvDef:normGLapDef}.
Let~$\man\subset \mathbb{C}^n$ be a~$G$-invariant compact and smooth manifold, without a boundary, where~$G$ is a compact Lie group of unitary~$n\times n$ matrices. Let $X = \left\{x_1,\ldots,x_N\right\}\subset \man$ be a data set sampled from~$\man$. 
By~\eqref{GinvDef:Wdef}, and since~$G$ is unitary, for each~$i,j\in\left\{1,\ldots,N\right\}$ we have that
\begin{equation}\label{GinvSec:affinityKernelDef}
	W_{ij}(A,B) = W_{ij}(I,A^*B), \quad A,B\in G.
\end{equation}
That is, each function~$W_{ij}$ in~\eqref{GinvDef:Wdef} only depends on the quotient~$A^*B$.
Thus, by using~\eqref{secLieGroupAction:SU2FourierMatCoeff}, we can expand the function $W_{ij}(I,A^*B)\in \mathcal{H}$ in the Fourier series
\begin{equation}\label{sec1:fourierSO3}
	W_{ij}(I,A^*B)=\sum_{\ell\in \I_G } d_\ell\cdot \text{trace}\left(\hat{W}^\ell_{ij}U^\ell(A^*B)\right),
\end{equation}
where by~\eqref{sec2:fHatDef} and~\eqref{secLieGroupAction:haarIntLeftInvar}, $\hat{W}_{ij}^\ell$ is the $d_\ell\times d_\ell$ matrix given by
\begin{equation}\label{sec2:hat{W}Def}
	\hat{W}_{ij}^\ell= \int_G W_{ij}(I,A^*B)\overline{U^\ell(A^*B)}d\eta(B)=  \int_G W_{ij}(I,A)\overline{U^\ell(A)}d\eta(A),
\end{equation}
for each $\ell\in\I_G$. 

Clearly, the $G$-GL is completely characterized by the set of matrices~$\hat{W}^\ell_{ij}$ of \eqref{sec2:hat{W}Def}, since for any $i$ and $j$, the kernel function $W_{ij}(I,A^*B)$ can be recovered from them. 
The following theorem characterizes the eigendecomposition of $L$ of~\eqref{GinvDef:Ldef} in terms of certain products between the columns of the IURs $U^\ell$, and the eigenvectors of the block matrices \begin{equation}\label{sec2:blockFourierMat}
	\hat{W}^\ell = 	\begin{pmatrix}
		\hat{W}^\ell_{11} & \hat{W}^\ell_{12}& ... & \hat{W}^\ell_{1N}\\
		\vdots & \ddots &  & \vdots\\
		\vdots & & \ddots   & \vdots\\
		\hat{W}^\ell_{N1} & \hat{W}^\ell_{N2}& ... & \hat{W}^\ell_{NN}		
	\end{pmatrix}, 
	\quad \ell\in \I_G,
\end{equation}
of dimension $Nd_\ell\times Nd_\ell$ whose $ij$-th block of size~$d_\ell\times d_\ell$ is~$\hat{W}^\ell_{ij}$ of~\eqref{sec2:hat{W}Def}.
To derive the eigendecomposition, we introduce the following notation. 
For any vector $v \in \mathbb{C}^{Nd_\ell}$ and any $j\in\{1,\ldots,N\}$, we denote by
\begin{equation}\label{sec1:parVecNotation}
	e^j(v) = (v((j-1)d_\ell+1),\ldots,v(jd_\ell))\in \mathbb{C}^{d_\ell},
\end{equation}
the elements $(j-1)\cdot d_\ell+1$ up to $j\cdot d_\ell$ of $v$ stacked in a $d_\ell$-dimensional row vector.
\begin{theorem}\label{GInvLapProp:Thrm1}
	For each $\ell\in \I_G$, let $D^\ell$ be the $Nd_\ell\times Nd_\ell$ block-diagonal matrix whose $i$-th block of size $d_\ell\times d_\ell$ on the diagonal is given by the product of the scalar~$D_{ii}$ in~\eqref{GinvDef:Ddef} with the~$d_\ell\times d_\ell$ identity matrix. 
	Then, the $G$-invariant graph Laplacian~$L$ in~$\eqref{GinvDef:Ldef}$ admits the following:
	\begin{enumerate}
		\item A sequence of non-negative eigenvalues $\{\lambda_{1,\ell},\ldots,\lambda_{Nd_\ell,\ell}\}_{\ell\in \I_G}$, where~$\lambda_{n,\ell}$ is the~$n$-th eigenvalue of the matrix~$D^\ell-\hat{W}^\ell$. 
	  \item  A sequence $\{\Phi_{\ell,1,1},\ldots,\Phi_{\ell,d_\ell,Nd_\ell}\}_{\ell\in \I_G}$ of eigenfunctions, which are orthogonal and complete in $\mathcal{H}$ and are given by 
	  \begin{equation}\label{GInvLapProp:EigenFuncForm}
	  	\Phi_{\ell,m,n}(i,A) = 
	   e^i(v_{n,\ell})\cdot
	  	U^\ell_{\cdot,m}(A^*),
	  \end{equation}
	  where $v_{n,\ell}$ is the eigenvector of $D^\ell-\hat{W}^\ell$ which corresponds to its eigenvalue~$\lambda_{n,\ell}$. Furthermore, for each $n\in \{1,\ldots, Nd_\ell\}$ and $\ell\in\I_G$, the eigenfunctions $\{\Phi_{\ell,1,n},\ldots,\Phi_{\ell,d_\ell,n}\}$ correspond to the eigenvalue $\lambda_{n,\ell}$ of the $G$-invariant graph Laplacian. 
	\end{enumerate}
\end{theorem}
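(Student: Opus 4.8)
The plan is to diagonalize $L$ by exploiting the convolution structure of the kernel $W_{ij}(A,B) = W_{ij}(I,A^*B)$ in the group variables, and reduce the eigenvalue problem on $\mathcal{H} = L^2(\Gamma)$ to the finite-dimensional spectral problems for the matrices $D^\ell - \hat{W}^\ell$. First I would substitute the Fourier expansion \eqref{sec1:fourierSO3} of $W_{ij}(I,A^*B)$ into the defining formula \eqref{GinvDef:Ldef2} for $L$ and use the homomorphism property $U^\ell(A^*B) = U^\ell(A^*)U^\ell(B) = U^\ell(A)^* U^\ell(B)$ to write the integral operator as
\begin{equation}\label{prop:Wdecomp}
	\left\{Wf\right\}(i,A) = \sum_{\ell\in\I_G} d_\ell \sum_{j=1}^N \operatorname{trace}\left(\hat{W}_{ij}^\ell\, U^\ell(A)^* \int_G U^\ell(B) f_j(B)\, d\eta(B)\right).
\end{equation}
This shows that $W$ (and hence $L = D - W$) preserves, for each fixed $\ell$, the subspace $\mathcal{H}_\ell \subseteq \mathcal{H}$ spanned by the matrix coefficients $\{U^\ell_{mn}(A^*)\}_{m,n}$ across the $N$ copies; by the Peter--Weyl theorem these subspaces are mutually orthogonal and together span $\mathcal{H}$. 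So it suffices to compute the spectrum of $L$ restricted to each $\mathcal{H}_\ell$.

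Next I would introduce the natural unitary identification between $\mathcal{H}_\ell$ and $(\mathbb{C}^{Nd_\ell}) \otimes \mathbb{C}^{d_\ell}$: a function $f\in\mathcal{H}_\ell$ with $f_i(A) = \sum_{m} \big(e^i(v_m)\big) \cdot U^\ell_{\cdot,m}(A^*)$ for coefficient vectors $v_1,\dots,v_{d_\ell}\in\mathbb{C}^{Nd_\ell}$ corresponds to the tuple $(v_1,\dots,v_{d_\ell})$. I would then compute the action of $L$ in these coordinates. The diagonal part $D$ acts as $D_{ii}$ on block $i$, which translates to the block-diagonal matrix $D^\ell$ acting on each $v_m$. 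For $W$, plugging the ansatz into \eqref{prop:Wdecomp} and using the Peter--Weyl orthogonality relations $\int_G U^\ell_{pm}(B^*)\overline{U^{\ell'}_{qn}(B)}\,d\eta(B) = \frac{1}{d_\ell}\delta_{\ell\ell'}\delta_{pq}\delta_{mn}$ (equivalently, orthogonality of columns of $U^\ell$ against the $\hat{W}$-integrals), the group integral collapses and one is left with ordinary matrix multiplication by $\hat{W}^\ell$ on each coefficient vector $v_m$. Thus on $\mathcal{H}_\ell$ the operator $L$ is unitarily equivalent to $(D^\ell - \hat{W}^\ell) \otimes I_{d_\ell}$, i.e. $d_\ell$ independent copies of the matrix $D^\ell - \hat{W}^\ell$ indexed by $m = 1,\dots,d_\ell$. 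This immediately gives the eigenvalues as those of $D^\ell - \hat{W}^\ell$, each with multiplicity (at least) $d_\ell$, and the eigenfunctions in the tensor-product form \eqref{GInvLapProp:EigenFuncForm}: if $v_{n,\ell}$ is an eigenvector of $D^\ell - \hat{W}^\ell$ with eigenvalue $\lambda_{n,\ell}$, then choosing $v_m = \delta_{mn'} v_{n,\ell}$ for a fixed column index $n'=m$ yields $\Phi_{\ell,m,n}$.

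Two points need care. First, self-adjointness and non-negativity of the eigenvalues: I would verify that $L$ is self-adjoint on $\mathcal{H}$ with respect to the inner product \eqref{GinvSec:InnerProdDef}, which follows from the symmetry $W_{ij}(A,B) = W_{ji}(B,A)$ of the Gaussian kernel (so $\hat{W}^\ell$ is Hermitian, using that $\overline{\hat{W}^\ell_{ij}} = (\hat{W}^\ell_{ji})^T$ after the appropriate manipulation) together with positive semidefiniteness of the graph-Laplacian quadratic form $\langle Lf,f\rangle = \tfrac12\sum_{ij}\int\int W_{ij}(A,B)|f_i(A) - f_j(B)|^2$, which is manifestly $\ge 0$; hence each $D^\ell - \hat{W}^\ell$ is Hermitian positive semidefinite and its eigenvalues $\lambda_{n,\ell}$ are real and non-negative. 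Second, completeness of $\{\Phi_{\ell,m,n}\}$ in $\mathcal{H}$: this is where the main work lies, and I expect the Peter--Weyl-based bookkeeping — showing that the $\mathcal{H}_\ell$ exhaust $\mathcal{H}$ and that within each $\mathcal{H}_\ell$ the $d_\ell \cdot (Nd_\ell)$ functions $\Phi_{\ell,m,n}$ form a basis — to be the most delicate part, since one must track carefully which index of $U^\ell$ is being summed over in \eqref{GInvLapProp:EigenFuncForm} and confirm that the map from $(v_1,\dots,v_{d_\ell})$ to $f$ is a bijection onto $\mathcal{H}_\ell$. Orthogonality of the $\Phi_{\ell,m,n}$ then follows from orthogonality across $\ell$ (Peter--Weyl), orthogonality across $m$ (orthogonality of distinct columns of $U^\ell$), and orthogonality across $n$ for fixed $\ell,m$ (orthogonality of eigenvectors of the Hermitian matrix $D^\ell - \hat{W}^\ell$, or Gram--Schmidt within degenerate eigenspaces).
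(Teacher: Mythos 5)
Your proposal follows essentially the same route as the paper: expand the kernel in the Peter--Weyl basis, use Schur orthogonality to collapse the group integral so that $L$ block-diagonalizes into $d_\ell$ copies of $D^\ell-\hat{W}^\ell$ acting on the coefficient vectors, establish Hermiticity of $\hat{W}^\ell$ (the paper does this via invariance of the Haar measure under $A\mapsto A^*$), and obtain completeness by expanding an arbitrary $f\in\mathcal{H}$ first in matrix coefficients and then in eigenvectors of $\hat{W}^\ell$. Your explicit quadratic-form identity $\langle Lf,f\rangle_{\mathcal{H}}=\tfrac{1}{2}\sum_{i,j}\iint W_{ij}(A,B)\,\lvert f_i(A)-f_j(B)\rvert^2\,d\eta(A)\,d\eta(B)\ge 0$ is a welcome addition, since the paper asserts non-negativity of the eigenvalues without explicitly proving it.
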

The proof of Theorem~\ref{GInvLapProp:Thrm1} is given in Appendix~\ref{eigDecomPrf}.
A nearly identical theorem (Theorem~\ref{GInvLapProp:Thrm1Norm}) characterizing the eigendecomposition of the normalized $G$-GL in~\eqref{GinvDef:normGLapDef} is given below in Appendix~\ref{secEigDecompNGGL}. Theorem~\ref{GInvLapProp:Thrm1Norm} states that for the normalized~$G$-GL we only need to replace the eigenvectors $\left\{v_{n,\ell}\right\}_{n,\ell}$ above with the eigenvectors $\left\{\tilde{v}_{n,\ell}\right\}_{n\ell}$ of the sequence of matrices 
\begin{equation}\label{GinvProp:normFourierMat}
	S^\ell = I-(D^\ell)^{-1}\hat{W}^\ell, \quad \ell \in \I_G,
\end{equation}
with the only difference that the resulting eigenfunctions 
\begin{equation}\label{GInvLapProp:normEigenFunc}
	\{\tilde{\Phi}_{\ell,1,1},\ldots,\tilde{\Phi}_{\ell,d_\ell,Nd_\ell}\}_{\ell\in \I_G},
\end{equation}
are no longer orthogonal due to the fact that the matrices in \eqref{GinvProp:normFourierMat} are generally not Hermitian.

The form of the eigenfunctions in~\eqref{GInvLapProp:EigenFuncForm} is of practical importance for numerical computations, as it implies that the 
eigendecomposition of the $G$-GL can be obtained by diagonalizing the sequence of matrices $\hat{W}^\ell$ of~\eqref{sec2:blockFourierMat}. Furthermore, for groups which are common in applications (e.g. $SO(3)$) all the elements of the Fourier matrices $\left\{\hat{W}^\ell\right\}$ can be computed efficiently by employing generalized FFT algorithms~\cite{nonComHarmAnalys,fastNFTSO3Potts}. We provide the details of such a computational procedure for the case $G=SU(2)$ in Appendix~\ref{appnedix:numerIntegration} below. 

\subsection{Convergence of the $G$-invariant graph Laplacian}\label{secGGLConv}
We now show that the normalized $G$-invariant graph Laplacian \eqref{GinvDef:normGLapDef} converges to the Laplace-Beltrami operator on $\mathcal{M}$. Furthermore, we show that the convergence has an improved rate which scales with $d-d_G$ instead of $d$, where $d_G$ is the dimension of the group $G$.  
\begin{theorem}\label{sec2:ConvThrmUnnormalized}
	Let $\man$ be a smooth $d$-dimensional compact manifold without boundary, closed under the action of a matrix Lie group~$G$. Let $\left\{x_1,\ldots,x_N\right\}\in \mathcal{M}$ be i.i.d with the uniform probability density function $p(x) = 1/\text{Vol}(\mathcal{M})$, and suppose that $A\cdot x_i \neq x_i$ for all $A\in G$ with probability one. Let $f:\mathcal{M}\rightarrow\mathbb{R}$ be a smooth function, and define $g\in\mathcal{H}$ so that $g(i,A) = f(A\cdot x_i)$. Then, with high probability, we have that 
	\begin{equation}\label{convSec:errFormula}
		\frac{4}{\epsilon}\left\{\tilde{L}g\right\}(i,A) = \Delta_{\mathcal{M}}f(A\cdot x_i)+ O\left(\frac{1}{N^{1/2}\epsilon^{1/2+(d-d_G)/4}}\right) +O(\epsilon).
	\end{equation}
\end{theorem}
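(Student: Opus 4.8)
\emph{Proof proposal.} The plan is to follow the classical bias--variance analysis of the graph Laplacian, exploiting one structural simplification: because each $C\in G$ acts on $\man$ by the restriction of a Euclidean unitary map, it is an isometry of $\man$ and preserves the Riemannian volume. First I would reduce to $A=I$: writing $\tilde f := f(A\,\cdot)$ and using $\norm{Ax_i-Bx_j}=\norm{x_i-A^*Bx_j}$ together with the substitution $C=A^*B$ and left/right invariance of $\eta$, one gets $\{\tilde Lg\}(i,A)=\{\tilde L\tilde g\}(i,I)$ with $\tilde g(j,C)=\tilde f(Cx_j)$, while $\Delta_\man\tilde f(x_i)=\Delta_\man f(Ax_i)$ since $A$ is an isometry of $\man$; all estimates below are uniform in $A$ because $\tilde f$ and its derivatives are bounded uniformly over the compact group. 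A direct computation then gives
\begin{equation*}
	\frac{4}{\epsilon}\{\tilde Lg\}(i,I)=\frac{4}{\epsilon}\cdot\frac{\frac1N\sum_{j=1}^N\xi_j}{\frac1N\sum_{j=1}^N K_\epsilon(x_i,x_j)},\qquad \xi_j:=\int_G e^{-\norm{x_i-Cx_j}^2/\epsilon}\bigl(f(x_i)-f(Cx_j)\bigr)\,d\eta(C),
\end{equation*}
where $K_\epsilon(x,y):=\int_G e^{-\norm{x-Cy}^2/\epsilon}\,d\eta(C)$; note that $\xi_j$ and $K_\epsilon(x_i,\cdot)$ depend on $x_j$ only through its orbit $G\cdot x_j$.

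For the \textbf{bias} I would compute the expectations over the uniform sampling. By Fubini, the substitution $z=Cy$ (valid since $C$ preserves $\man$ and its volume), and $\eta(G)=1$, the orbit average collapses:
\begin{equation*}
	\mathbb{E}[K_\epsilon(x_i,x_j)]=\frac{1}{\mathrm{Vol}(\man)}\int_\man e^{-\norm{x_i-z}^2/\epsilon}\,dV(z),\qquad \mathbb{E}[\xi_j]=\frac{1}{\mathrm{Vol}(\man)}\int_\man e^{-\norm{x_i-z}^2/\epsilon}\bigl(f(x_i)-f(z)\bigr)\,dV(z).
\end{equation*}
Hence $\mathbb{E}[\tfrac1N\sum\xi_j]/\mathbb{E}[\tfrac1N\sum K_\epsilon(x_i,x_j)]$ is \emph{exactly} the population ratio of the classical Gaussian graph Laplacian on $\man$, so $\frac4\epsilon$ times it equals $\Delta_\man f(x_i)+O(\epsilon)$ by the population limit of \eqref{intro:classicalConv} (equivalently, by the standard Laplace-type expansion of Gaussian kernels on a compact manifold, cf.\ \cite{lapMapConv}). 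In particular, $G$-invariance does not improve the bias --- the entire speed-up lives in the variance. The same expansion also fixes the order of magnitude of the means, $\mathbb{E}[K_\epsilon(x_i,x_j)]\asymp\epsilon^{d/2}$ and $\mathbb{E}[\xi_j]=O(\epsilon^{1+d/2})$.

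For the \textbf{variance} I would Taylor-expand the ratio about $(\mathbb{E}[\tfrac1N\sum\xi_j],\mathbb{E}[\tfrac1N\sum K_\epsilon])$, reducing the problem to Bernstein bounds $|\tfrac1N\sum\xi_j-\mathbb{E}[\xi_j]|\lesssim (\mathbb{E}[\xi_j^2]/N)^{1/2}$ and $|\tfrac1N\sum K_\epsilon-\mathbb{E}[K_\epsilon]|\lesssim(\mathbb{E}[K_\epsilon^2]/N)^{1/2}$, valid with high probability. The crux is that $y\mapsto K_\epsilon(x_i,y)$ and $y\mapsto\xi(y)$ are concentrated, up to Gaussian tails, on the $\sqrt\epsilon$-tube $T_\epsilon=\{y\in\man:\mathrm{dist}_\man(y,G\cdot x_i)\lesssim\sqrt\epsilon\}$ about the $d_G$-dimensional orbit, on which $K_\epsilon(x_i,y)\asymp\epsilon^{d_G/2}$ and $|\xi(y)|\lesssim\bigl(\mathrm{dist}_\man(y,G\cdot x_i)+\sqrt\epsilon\bigr)\epsilon^{d_G/2}$. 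Granting this, the tubular-neighbourhood theorem and the coarea formula give $\mathrm{Vol}(T_\epsilon)\asymp\epsilon^{(d-d_G)/2}$, hence $\mathbb{E}[K_\epsilon^2]\lesssim\epsilon^{(d+d_G)/2}$ and $\mathbb{E}[\xi_j^2]\lesssim\epsilon^{1+(d+d_G)/2}$ --- each smaller by a factor $\epsilon^{d_G/2}$ than its classical analogue, while the means are unchanged. Feeding these, together with $\mathbb{E}[K_\epsilon]\asymp\epsilon^{d/2}$ and $\mathbb{E}[\xi_j]=O(\epsilon^{1+d/2})$, into the Taylor expansion, the dominant fluctuation comes from the numerator and is of order $\frac4\epsilon\cdot(\mathbb{E}[\xi_j^2]/N)^{1/2}/\mathbb{E}[K_\epsilon]\asymp N^{-1/2}\epsilon^{-1/2-(d-d_G)/4}$, matching \eqref{convSec:errFormula}; the denominator contribution, the higher-order Taylor terms, and the $j=i$ self-terms are all smaller by a factor $O(\sqrt\epsilon)$ or $O\bigl((N\epsilon^{(d-d_G)/2})^{-1/2}\bigr)$, hence negligible in the regime $N\gtrsim\epsilon^{-(d-d_G)/2}$ where the asserted bound is itself $O(1)$. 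Undoing the reduction recovers the statement for general $A$.

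The hard part will be the concentration and magnitude estimates for $K_\epsilon(x_i,\cdot)$ and $\xi(\cdot)$ on the tube, i.e.\ turning the Haar integral over $G$ into a genuine $d_G$-dimensional Gaussian integral with constants uniform in $x_i$. This is where freeness enters: the hypothesis $Cx_i\neq x_i$ for $C\neq I$ (a.s.\ in $x_i$) makes the infinitesimal action $\Xi\mapsto\Xi x_i$ injective on the Lie algebra, so $\norm{Cx_i-x_i}\asymp\mathrm{dist}_G(C,I)$ for $C$ near $I$, and --- combined with compactness of $G$, which forbids distant returns of the orbit to a neighbourhood of $x_i$ --- this lets one localise the integral to a neighbourhood of the identity and apply Laplace's method there. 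Concretely, I would build a $G$-invariant local parametrization of $\man$ near the orbit of the form $(C,s)\mapsto C\cdot\sigma(s)$ for a $(d-d_G)$-dimensional slice $s\in\Sigma$ transverse to the orbits; in these coordinates, after a change of variable absorbing the $G$-component, $K_\epsilon$ and $\xi$ become functions of $s$ alone amenable to a fibered Laplace expansion, and carrying this out with constants controlled uniformly over $\man$ is the technical heart of the argument.

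\bigskip\noindent
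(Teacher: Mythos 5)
Your proposal follows essentially the same route as the paper's proof: reduction to $A=I$, the bias obtained by collapsing the orbit average via the change of variables $z=Cy$ to the classical population kernel ratio, and the variance gain traced to second moments of order $\epsilon^{(d+d_G)/2}$ coming from the concentration of the $G$-averaged kernel on a codimension-$(d-d_G)$ tube around the orbit, established through a local product parametrization $\man'\cong G\times\subman$ with a transverse slice (the paper takes $\subman$ to be the normal slice of closest points on orbits to $x_i$, and runs the tail bound via Chernoff on the combination $\mathbb{E}[G_i]H_i-\mathbb{E}[H_i]G_i$ rather than your Taylor-plus-Bernstein variant, but these are interchangeable). The "technical heart" you correctly flag — the fibered Laplace expansion turning the Haar integral into a $d_G$-dimensional Gaussian integral with uniform constants — is precisely what the paper's Appendices C.3--C.6 carry out.
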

The proof of Theorem~\ref{sec2:ConvThrmUnnormalized} is given in~Appendix~\ref{convTheoremPrf}.
We point out that the requirement that $Ax_i\neq x_i$ with probability one ensures that the orbits generated by the action of~$G$ (see Definition~\ref{secLieGroupAction:orbitDef}) are diffeomorphic to $G$. This eliminates pathological cases in which 
the convergence analysis of the $G$-GL may become over-complicated or even superfluous, while still accounting for a broad class of data manifolds. 

Inspecting \eqref{convSec:errFormula}, we observe that as $N\rightarrow \infty$, the $G$-GL estimates $\Delta_\man$ with a bias error of~$O(\epsilon)$ given by the third term on the r.h.s. The second term on the r.h.s accounts for the variance of the estimator when the sample size~$N$ is finite. Thus, we conclude that the $G$-GL reduces the variance error compared to that of the standard GL in~\eqref{intro:classicalConv}, proportionally to the dimension $d_G$ of~$G$. 
The improvement in the variance error~\eqref{convSec:errFormula} in comparison to~\eqref{intro:classicalConv} can be explained as follows. 
In the proof of Theorem~\ref{sec2:ConvThrmUnnormalized}, we show that any sufficiently small neighborhood~$\man'\subset \man$ can be written as a disjoint union of orbits generated by~$G$. In fact, we show that there exists a set of $d$ coordinates for~$\man'$ such that given a point~$x\in \man'$, the first~$d-d_G$ coordinates specify the orbit in which~$x$ resides, while 
the last~$d_G$ coordinates indicate the position of~$x$ on that orbit. In other words, these last~$d_G$ coordinates are the ``directions'' in which~$G$ acts on~$\man$.   
The construction of the $G$-GL incorporates all the points in the set~$G\cdot X$ in~\eqref{secLieGroupAction:GXDef}, namely, those generated by following the directions in~$\man$ in which~$G$ acts on the data set~$X\subset \man$. Thus, the variance error of approximating $\Delta_\man$ by the $G$-GL stems entirely from sampling the remaining~$d-d_G$ directions in~$\man$, resulting in the reduced variance error in~\eqref{convSec:errFormula}.

\begin{remark}
	In Theorem~\ref{sec2:ConvThrmUnnormalized}, we have assumed that the sampling density~$p(x)$ is uniform over~$\man$. In~Appendix~\ref{secNonUniformDistProof}, we show that in the case where $p(x)$ is non-uniform, the operator~$\tilde{L}$ in~\eqref{GinvDef:normGLapDef} converges to the Fokker-Planck operator $\tilde{\Delta}_\man$, given for every smooth function~$f:\man \rightarrow \mathbb{R}$ by
	\begin{equation}
		\tilde{\Delta}_{\man} = \Delta_\man f -2 \frac{\dprod{\nabla_\man f(x)}{\nabla_\man \tilde{p}(x)}}{\tilde{p}(x)}, 
	\end{equation}
where~$\tilde{p}$ is the probability density given by
\begin{equation}\
	\tilde{p}(x) = \int_G p(A\cdot x)d\eta(A).
\end{equation}
Furthermore, we show that there exists a normalization~$\bar{L}$ of~$L$ in~\eqref{GinvDef:Ldef} (different from~$\tilde{L}$ in~\eqref{GinvDef:normGLapDef}) that still converges to~$\Delta_\man$.
\end{remark}

The practical importance of Theorem \ref{sec2:ConvThrmUnnormalized} is that we expect the eigenvalues and eigenfunctions of the $G$-GL to approximate those of the Laplace-Beltrami operator $\Delta_\man$ better than the standard normalized graph Laplacian~\eqref{intro:classicalNormGLDef}. We support this conjecture by simulations in the following section. 

\subsection{Numerical examples}\label{sec:toyExample}
At this point, we wish to demonstrate the improved convergence rate \eqref{convSec:errFormula} with some numerical examples. In the following simulation, we let the group $G=SU(2)$ (of $2\times 2$ unitary matrices with determinant one) act on a data set sampled from the four-dimensional sphere $S^4$, as follows. First, we sample a set of~$N$ points~$\left\{p_1,\ldots,p_N\right\}\in S^4$ and embed them in the Euclidean space $\mathbb{C}^3$ via the map
\begin{equation}\label{secConvSim:embed}
	x_i\left(p_{i,1},\ldots,p_{i,5}\right)= (p_{i,1}+ip_{i,2},p_{i,3}+ip_{i,4},p_{i,5}),\quad p_{i,1}^2+\cdots +p_{i,5}^2 =1,
\end{equation}
where we denote by~$p_{i,j}$ the $j$-th coordinate in~$p_i$. 
Then, we let the group $SU(2)$ act on each embedded point $x_i$ of \eqref{secConvSim:embed} via the multiplication 
\begin{equation}\label{secConvSim:toyExampleGroupAction}
	\begin{pmatrix}
	A & \\
	& 1 
\end{pmatrix}\cdot x_i,\quad  A\in SU(2),
\end{equation}
where $SU(2)$ was defined explicitly in \eqref{secLieGroupAction:fundIUR}.
We then apply the $SU(2)$-invariant graph Laplacian to the test function
\begin{equation}\label{secConvSim:testFunc}
	f(x_i) = \text{Re}(x_{i,1})+\text{Im}(x_{i,1}) = p_{i,1}+p_{i,2}, 
\end{equation}
at the point $x_0 = (1/2+i/2,1/2+i/2,0)$, where we denote by~$x_{i,j}$ the $j$-th coordinate of~$x_i$.
It can be shown that the coordinate functions $h_j(p_i)=p_{i,j}$ on~$S^4$ are eigenfunctions of the Laplace-Beltrami operator~$\Delta_{S^4}$ corresponding to the eigenvalue~$\lambda = -4$ (see~\cite{harmFunTheoryAxler}). Thus, we have that $\Delta_{S^4}f = -4\cdot (p_{i,1}+p_{i,2})$ and $\Delta_{S^4}f(x_0) = -4$. To demonstrate the convergence and variance error of~\eqref{convSec:errFormula}, we uniformly sample~$N=5000$ points~$p_i\in S^4$, and generate the data set $X=\{x_1,\ldots,x_N\}$ by using~\eqref{secConvSim:embed}. We then approximate~$\Delta_{S^4}f(x_0)$ by applying~$\tilde{L}$, the normalized $SU(2)$-GL, to the function $g(i,A) = f(A\cdot x_i)$ for~$A\in SU(2)$ 
by
\begin{figure}
	\centering
	\subfloat[]  	
	{
		\includegraphics[width=0.46\textwidth]{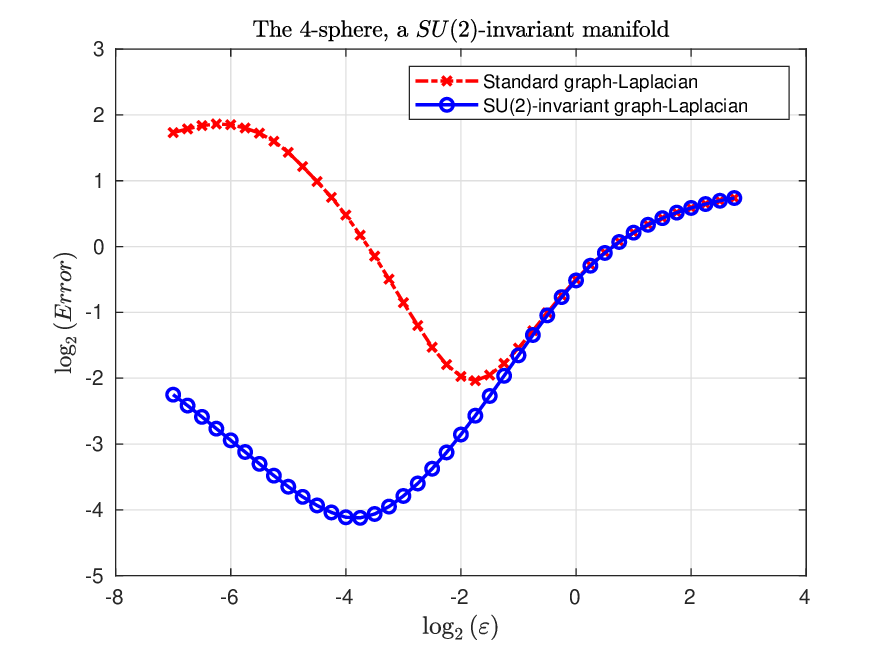}
	}
	\subfloat[]    
	{ 
		\includegraphics[width=0.46\textwidth]{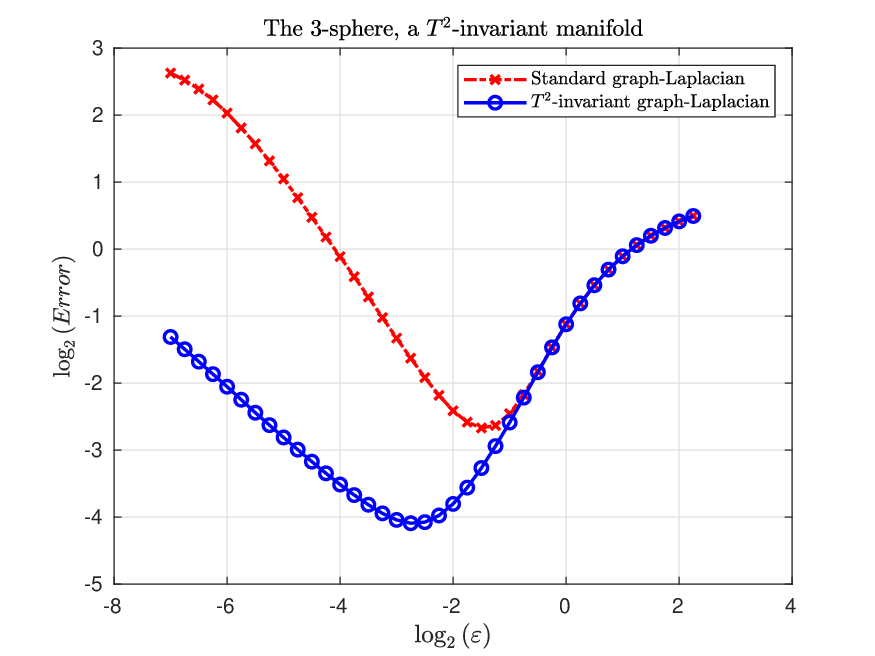}
	}
	\caption{Improved convergence rates of the ${SU}(2)$-invariant GL and the $\mathbb{T}^2$ invariant GL.} 
	\label{fig1}
\end{figure} 
\begin{align}\label{secConvSim:ParametrizedG}
	\frac{4}{\epsilon}&\left\{\tilde{L}g \right\}(0,I) = \frac{4}{\epsilon} \left[f(I\cdot x_0) - \frac{\sum_{j=1}^N\int_{SU(2)}W_{0,j}(I,A)f(A\cdot x_j)d\eta(A)}{\sum_{j=1}^N\int_{SU(2)}W_{0,j}(I,A)d\eta(A)}\right]. 
\end{align} 
The quantity~\eqref{secConvSim:ParametrizedG} can be approximated efficiently by using the parametrization~\eqref{secLieGroupAction:fundIUR} of~$SU(2)$ by Euler angles, together with Gauss-Legendre quadratures to approximate the integrals over~$SU(2)$.

We observe that for large values of~$\epsilon$, the error~\eqref{convSec:errFormula} is dominated by the term~$O(\epsilon)$, while
for small values of $\epsilon$, the error is dominated by the middle term on the r.h.s of~\eqref{convSec:errFormula}, which accounts for the sampling variance of the approximation. Thus, we refer to the error for small values of~$\epsilon$ as the 'variance dominated region' of the error. 

The results of this experiment are depicted in Figure~\ref{fig1}(a), where we plot the log-error of approximation of $\Delta_{S^4}f(x_0)$ by~\eqref{secConvSim:ParametrizedG} against different values of $\log(\epsilon)$.
The slope of the log-error in the variance dominated region is -1.4122 for the normalized standard graph Laplacian (abbreviated standard-GL) and -0.7048 for~the normalized~$SU(2)$-GL, supporting the classical result~\eqref{intro:classicalConv} for the normalized standard-GL, and~\eqref{convSec:errFormula} for~the normalized~$SU(2)$-GL, which predict slopes of~-1.5 and~-0.75 respectively, when substituting $d=4$ and $d_G=3$. 

As another example, we simulated the action of the torus group~$\mathbb{T}^2$, defined as the group of all diagonal~$2\times 2$ unitary matrices, on the unit 3-sphere~$S^3$. In a similar fashion to our first example, we embed the sampled data points~$\left\{p_1,\ldots,p_N\right\}\in S^3$ into~$\mathbb{C}^2$ via the map 
\begin{equation}\label{secConvSim:S3embed}
	x_i\left(p_{i,1},\ldots,p_{i,4}\right)= (p_{i,1}+ip_{i,2},p_{i,3}+ip_{i,4}),\quad p_{i,1}^2+\cdots +p_{i,4}^2 =1,
\end{equation}
and let~$\mathbb{T}^2$ act on each point~$p_i$ by matrix multiplication. We then repeat the steps of our first simulation, computing the~$\mathbb{T}^2$-GL by using~$N=5000$ samples from~$S^3$ and applying it to the function~$g(i,A) = f(A\cdot x_i)$ for $f$ in~\eqref{secConvSim:testFunc} (defined over~$S^3$), at the point~$x_0 = (1/2+i/2,1/2+i/2)$. Using the fact the coordinate functions~$h_j(p_i) = p_{i,j}$ on~$S^3$ are eigenfunctions of~$\Delta_{S^3}$ corresponding to the eigenvalue~$\lambda=-3$ (see~\cite{harmFunTheoryAxler}), we obtain that~$\Delta_{S^3}f = -3\cdot f$. 
The plot of the logs of the approximation errors of~$\Delta_{S^3}f(x_0)$ by the normalized~$\mathbb{T}^2$-GL and the normalized standard-GL against different values of~$\log(\epsilon)$ show the same qualitative picture as in the first simulation. In particular, the slope of the log-error in the variance dominated region is~$-1.2171$ for the normalized standard-GL, and~$-0.7454$ for the $\mathbb{T}^2$-GL, supporting the results ~\eqref{intro:classicalConv} and~\eqref{convSec:errFormula}, which predict slopes of~-1.25 and~-0.75, respectively, when~$d=3$ and~$d_G=2$ (since~$\mathbb{T}^2$ is a two-dimensional group).

\begin{figure}
	\centering
	\subfloat[]  	
	{
		\includegraphics[width=0.45\textwidth]{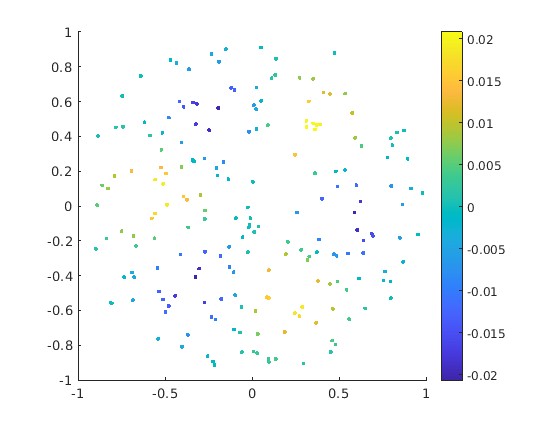}
	}\label{fig:eigVecOnX}
	\subfloat[] 	
	{
		\includegraphics[width=0.45\textwidth]{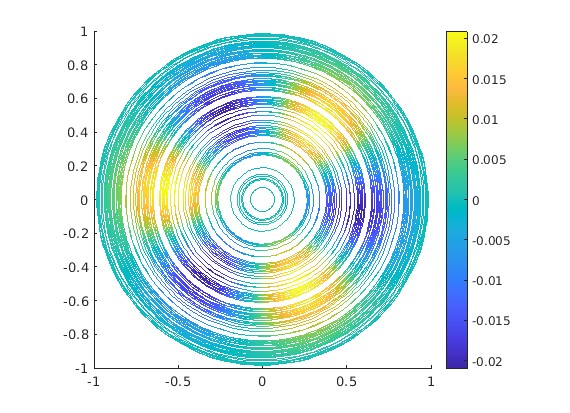}
	}\label{fig:eigVecOnGX}
	\centering
	\subfloat[]    
{ 
	\includegraphics[width=0.6\textwidth]{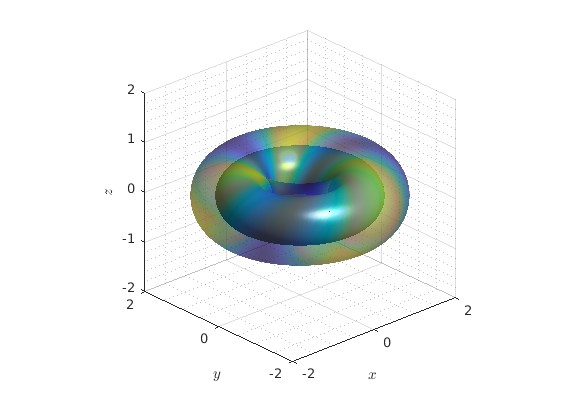}
}
	\caption[]
	{The real part of the eigenfunction~$\Phi_{(2,4),1,1}$. Figure~(a) shows the values at points in the data set~$X$ which were projected to the~$xy$ plane. Figure~(b) shows the values at circles generated by the action of~$\mathbb{T}^2$ on~$S^3$. Figure~(c) shows the nested tori obtained via stereographic projection onto~$\mathbb{R}^3$ of two of the orbits in~$S^3$ generated by the action of~$\mathbb{T}^2$.} \label{fig:sphere eigenfunctions}
\end{figure}

\begin{figure}
	\centering
	\subfloat  	
	{
		\includegraphics[width=0.95\textwidth]{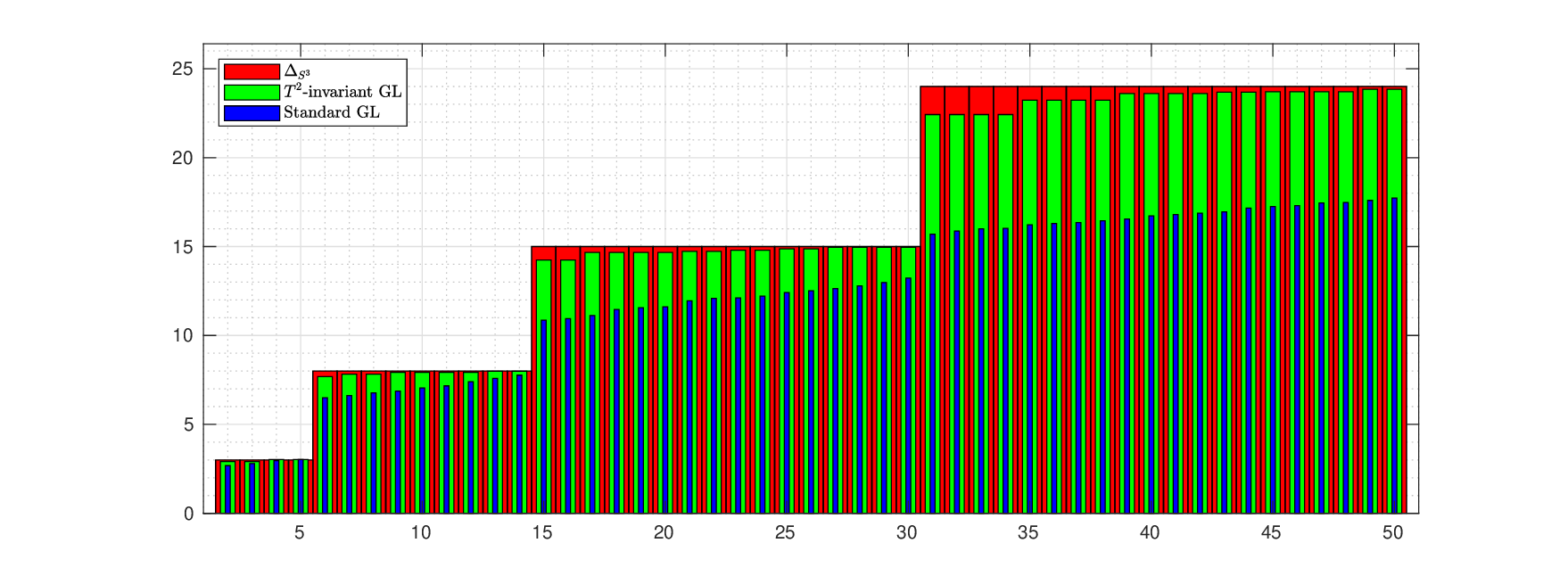}
	}
	\caption[sphere eigenvalues]
	{The~50 smallest eigenvalues of the normalized $\mathbb{T}^2$-GL, scaled by~$4/\epsilon$ (green), the normalized standard-GL, also scaled by~$4/\epsilon$ (blue), and~50 smallest eigenvalues of~$\Delta_{S^3}$ (red). Both graph Laplacians were computed by using the same~$N=5000$ data points, where~$\epsilon=2^{-7}$ for the normalized~$\mathbb{T}^2$-GL, and~$\epsilon=2^{-3}$ for the normalized standard-GL. } \label{fig:sphere eigenvalues} 
\end{figure}
We also computed the~$50$ smallest eigenvalues of the normalized $\mathbb{T}^2$-GL on $S^3$, scaled by~$4/\epsilon$ in accordance with~\eqref{convSec:errFormula}, and the~$50$ smallest eigenvalues of the normalized standard-GL, also scaled by~$4/\epsilon$ (see~\eqref{intro:classicalConv}). We used (the same)~$N=5000$ points for the construction of both graph Laplacians, with bandwidth parameter values of~$\epsilon = 2^{-7}$ for the normalized~$\mathbb{T}^2$-GL, and~$\epsilon=2^{-3}$ for the standard graph Laplacian. The values of~$\epsilon$ were chosen to minimize the mean absolute error of approximating the eigenvalues of~$\Delta_{S^3}$ by those of each graph Laplacian.  
The results are illustrated in Figure~\ref{fig:sphere eigenvalues}. The red bars depict the eigenvalues of~$\Delta_{S^3}$ which are given by the~$5$ unique values~$0,3,8,15$ and~$24$ with respective multiplicities~$1,4,9,16$ and~$25$ (see e.g. \cite{harmFunTheoryAxler}). The green and blue bars depict
the eigenvalues of the normalized~$\mathbb{T}^2$-GL, and those of the normalized standard-GL, respectively. 
While for both graph Laplacians the multiplicities are in agreement with those of~$\Delta_{S^3}$, it is clear that the eigenvalues of the normalized~$\mathbb{T}^2$-GL better approximate those of~$\Delta_{S^3}$ than those of the normalized standard-GL. 

Lastly, we illustrate how constructing the normalized~$\mathbb{T}^2$-GL by using all the points in~$\mathbb{T}^2\cdot X\subset S^3$ is manifested in the eigenfunctions~\eqref{GInvLapProp:EigenFuncForm}. The IUR's of~$\mathbb{T}^2$ (see Definitions~\ref{harmAnalysisOnG:URdef} and~\ref{harmAnalysisOnG:IURdef}) are all one-dimensional, and are given by the set of products of Fourier modes~$\{e^{il_1\theta}\cdot e^{il_2\varphi}\}$, which can be conveniently enumerated by the set~$\I_{\mathbb{T}^2} = \left\{(\ell_1,\ell_2)\;:\; \ell_1,\ell_2\in \mathbb{Z} \right\}$. Thus, Theorem~\ref{GInvLapProp:Thrm1} implies that the eigenfunctions~$\Phi_{\ell,m,n}$ in~\eqref{GInvLapProp:EigenFuncForm} take the form of a Kronecker product between an~$N$-dimensional vector and a bivariate function~$e^{il_1\theta}\cdot e^{il_2\varphi}$. 
To visualize the eigenfunctions, we first map the points in~$\mathbb{T}^2\cdot X$ to~$\mathbb{R}^3$ by using the stereographic projection from~$S^3\subset \mathbb{R}^4$. It can be shown that each orbit~$\mathbb{T}^2\cdot x_i\subset S^3$ gets projected to a torus in~$\mathbb{R}^3$ (a ``bagel-shaped" surface), and furthermore, that the image of~$S^3$ under this projection is a union of nested tori that fill all of~$\mathbb{R}^3$. Figure~\ref{fig:sphere eigenfunctions}(c) depicts two of these tori (one nested inside the other), generated by the action of~$\mathbb{T}^2$ on a pair of data points in~$X$, colored according to the values of~$\text{Re}\left\{\Phi_{(2,4),1,1}\right\}$, the real part of the function~$\Phi_{(2,4),1,1}$ (i.e. $\ell = (2,4))$. In Figure~\ref{fig:sphere eigenfunctions}(a), we show the values of~$\text{Re}\left\{\Phi_{(2,4),1,1}\right\}$ at the points of the stereographic projection of~$X\subset S^3$, which were projected to the~$xy$-plane in~$\mathbb{R}^3$, and in Figure~\ref{fig:sphere eigenfunctions}(b), we show the values of~$\text{Re}\left\{\Phi_{(2,4),1,1}\right\}$ at intersection of the~$xy$-plane with all the tori generated by the action of~$\mathbb{T}^2$ on those points, which happens at planar circles. In particular, each circle in Figure~\ref{fig:sphere eigenfunctions}(b) is generated by the action of~$\mathbb{T}^2$ on a point in Figure~\ref{fig:sphere eigenfunctions}(a), which illustrates how the eigenfucntions account for the group action.  

\section{Denoising $G$-invariant data sets}\label{secDenoise}
We now demonstrate how to apply Theorem \ref{GInvLapProp:Thrm1} to denoise a data set sampled from an $SU(2)$-invariant manifold. In the following simulations, we generate noisy samples from the $4$-sphere $S^4$ according to the following model. For a scalar $\sigma>0$, we define the $\sigma$-tubular neighborhood of $S^4$ by 
\begin{equation}
	S^4_\sigma = \left\{x \; :\; \min_{y\in S^4}\norm{x-y}<\sigma\right\}. 
\end{equation}
The set $S^4_\sigma$ is simply a spherical shell of width $2\sigma$ in $\mathbb{R}^5$. A noisy sample of $S^4$ is generated by drawing points uniformly from $S^4_\sigma$ for some fixed $\sigma >0$. Thus, the parameter $\sigma$ controls the amount of noise in the data set. 
We generate a data set~$X=\left\{x_1,\ldots,x_N\right\}$ by drawing~$N$ points $\left\{p_1,\ldots,p_N\right\}\in S^4_\sigma$, and then mapping each point~$p_i$ to a point $x_i\in\mathbb{C}^3$ by using the map~\eqref{secConvSim:embed}.

To apply our framework to denoise the data set $X$, we consider the action of the group $SU(2)$ on $X$ defined in~\eqref{secConvSim:toyExampleGroupAction}.
Using the notation in~\eqref{secConvSim:embed} and~\eqref{secConvSim:toyExampleGroupAction}, we define the functions
\begin{equation}\label{secDenoise:crdFuncDef}
	F_1(i,A) = \left(U(A)\cdot x_i\right)_1, \quad F_2(i,A) = \left(U(A)\cdot x_i\right)_2, \quad F_3(i,A)= x_{i,3},
\end{equation}
for all $A\in SU(2)$ and $i\in \left\{1,\ldots,N\right\}$, where $\left(\cdot\right)_1$ and $\left(\cdot\right)_2$ denote the first and second elements of a vector in $\mathbb{C}^3$. 
Clearly, we have that $	F_1,F_2$, and~$F_3$ are all elements of the Hilbert space $\mathcal{H} = L^2\left\{\left\{1,\ldots,N\right\}\times SU(2)\right\}$. 
For each $k\in \left\{1,2,3\right\}$, the function $F_k(i,\cdot):G\rightarrow \man$ is the $k$-th' coordinate of the points in the orbit $G\cdot x_i$, and thus $F_k$ is the $k$-th coordinate function of the points in $G\cdot X$. Denote by $S^4_{\mathbb{C}}$ the embedding of $S^4$ in $\mathbb{C}^3$ by the map in~\eqref{secConvSim:embed}. 
Thus, the function $F_k$ attains the values of the $k$-th coordinate of $S^4_{\mathbb{C}}$ sampled at the points in $G\cdot X$. 

We now denoise the data set $X$ as follows. First, we construct the normalized normalized $SU(2)$-GL by using the points in the data set~$X$, and compute its eigenfunctions $\left\{\Phi_{\ell,m,n}\right\}$ given by \eqref{GInvLapProp:EigenFuncForm}, as described by Theorem \ref{GInvLapProp:Thrm1}. We choose the bandwidth parameter~$\epsilon$ so as to make the matrices~$\hat{W}^\ell$ in~\eqref{sec2:blockFourierMat} sparse. Specifically, for a data set of~$N=5000$ points, we first subsample~$50$ points and sort the elements in each of the rows of~$\hat{W}^{(0)}$ (which is real valued) corresponding to those points in descending order. The bandwidth is then chosen such that the values of the sorted elements in each row decay exponentially fast, and such that the index of the elbow of the scree plot  of values in each row (defined as the first point where the derivative equals~$\approx -1$) is~$<250$ (which is~$5\%$ of the values). 
We then expand each of the functions of~\eqref{secDenoise:crdFuncDef} in terms of the eigenfunctions $\left\{\Phi_{\ell,m,n}\right\}$, and truncate the expansion.
A standard approach is to retain the terms in the expansion that correspond to eigenvalues $\lambda_{n,\ell}$ above some threshold value. However, we truncate the expansion using the following observation. The $4$-sphere can be completely recovered using the five eigenfunctions that correspond to the second leading eigenvalue of the Laplacian operator $\Delta_{S^4}$. This is simply due to the fact that the coordinate functions $h_1,\ldots,h_5$ defined for each~$p_i\in S^4$ by~$h_j(p_i) = p_{i,j}$, span the eigenspace that corresponds to the second smallest eigenvalue of~$\Delta_{S^4}$~\cite{harmFunTheoryAxler}. 
Thus, we expect that the functions in~\eqref{secDenoise:crdFuncDef} should be well approximated by the space spanned by the eigenfunctions corresponding to the five smallest eigenvalues of the normalized $SU(2)$-GL after excluding the smallest eigenvalue. This suggests retaining only the terms corresponding to the latter eigenfunctions in the expansion of each coordinate function in~\eqref{secDenoise:crdFuncDef}. Finally, for each $i\in\left\{1,2,3\right\}$ let $\tilde{F_i}\in\mathbb{C}^N$ denote the vector of values of the truncated expansion (just described) of the function $F_i$ of \eqref{secDenoise:crdFuncDef} at the points~$(j,I)$ for all $j\in\left\{1,\ldots,N\right\}$. The denoised data points $\tilde{x}_1,\ldots,\tilde{x}_N$ are then given by
\begin{equation}\normalsize
	\begin{pmatrix}
		-\:\tilde{x}_1\;-\\-\;\tilde{x}_2\;- \\ \vdots \\ -\;\tilde{x}_N\;-
	\end{pmatrix}= 
\begin{pmatrix}
	|&|&|&|&|\\
	\\
	\text{Re}\left\{\tilde{F}_1\right\}& \text{Im}\left\{\tilde{F}_1\right\}& \text{Re}\left\{\tilde{F}_2\right\}& \text{Im}\left\{\tilde{F}_2\right\}& \text{Re}\left\{\tilde{F}_3\right\}\\\\
	|&|&|&|&|
\end{pmatrix}.
\end{equation}

The denoising results of $N=5000$ points sampled from $S_\sigma^4$ for various values of~$\sigma$ are presented in Figure~\ref{fig:errorStatTable}. Defining the error of approximation of each noisy point~$x_i$ as the distance
\begin{equation}
	d_i = \min_{y\in S^4}\norm{x_i-y},
\end{equation}
for each value of $\sigma$, we report the mean squared error (MSE) of the approximation obtained by preforming our proposed denoising procedure using the normalized $SU(2)$-GL. For comparison, we also report the MSE for the same data sets denoised by the eigenvectors of the normalized standard~GL. Denoising using the normalized standard~GL is implemented by viewing each column $H_i$ of the matrix 
\begin{equation}
		\begin{pmatrix}
		| &  & |\\	
		H_1 & \cdots & H_5 \\
		| &  &  |\\
	\end{pmatrix} = 
	\begin{pmatrix}
		-\; x_1\;-\\-\; x_2\;- \\ \vdots \\ -\; x_N \;- 
	\end{pmatrix}
\end{equation}
formed by stacking the data points in rows, as a sample of a coordinate function on~$\man$, and projecting~$H_i$ on the eigenvectors that correspond to the five smallest eigenvalues of the standard GL, after excluding the smallest one.
We observe that for moderate noise levels $\sigma = 0.1,0.2$, denoising using the normalized $SU(2)$-GL outperforms denoising using the normalized standard-GL by an order of magnitude, recovering the 4-sphere with high accuracy. 
\begin{table}
		\centering
		\small
		\begin{tabular}{|c|c|c|c|c|}
			\hline
			$\sigma$	& noisy data MSE & standard GL denoised data MSE & $SU(2)$-GL denoised data MSE\\ \hline
			0.1	& 3.3E-03 &		9.3E-04 &	5.04E-05 \\ \hline
			0.2 &	1.33E-02 &	3.11E-03 &	3.30E-04 \\ \hline
			0.4 &	5.33E-02 &	1.745E-02 &	1.6E-02 \\ \hline
		\end{tabular}
		\caption{MSE of noisy data before and after denoising.}
		\label{fig:errorStatTable}	
\end{table}

\section{Implementation details and computational complexity}\label{numericsSec}
In this section, we describe a numerical procedure to compute the eigendecomposition of the~$G$-invariant graph Laplacian in the case where~$G=SU(2)$. We point out that almost all of our analysis can be readily generalized to the case where~$G$ is an arbitrary compact matrix Lie group, and we restrict ourselves to the case~$G=SU(2)$ whose representation theory is well understood, for the sake of clarity and concreteness. In particular, the important case where~$G=SO(3)$ is nearly identical to that of~$G=SU(2)$ since the IUR's of $SO(3)$ are a subset of those of~$SU(2)$.

With the exception of $SO(2)$ and the 2-dimensional torus $\mathbb{T}^2$, the dimension of a matrix Lie group is $\geq3$. Thus, even for a low-dimensional group such as $SU(2)$, the integrals in~\eqref{sec2:hat{W}Def}, required to construct the matrices \eqref{sec2:blockFourierMat}, need to be evaluated by  triple sums. Such sums are computationally expensive even for moderate values of $N$.  Fortunately, for groups such as $SU(2)$ (and the closely related $SO(3)$) there exist generalized FFT algorithms that compute the Fourier coefficients efficiently \cite{fastNFTSO3Potts}. 

The general approach for numerical integration over $SU(2)$ hinges upon the fact 
that the elements of its IURs can be parameterized by Euler angles, and written in a separable form as a product of factors, each of which depends on a single angle. The integrals are then evaluated using quadrature formulas that are computed using FFT-type algorithms applied to each factor seperately, requiring $O(\tilde{K}\log^2 \tilde{K})$ operations where $\tilde{K}$ is a prescribed sampling resolution over the group. We give a detailed exposition of an $SU(2)$-FFT in~Appendix~\ref{appnedix:numerIntegration} below. 

We now continue to describe analyze the complexity of computing the eigendecomposition presented in Theorem~\ref{GInvLapProp:Thrm1} for the case where~$G=SU(2)$ acts on a data set $\left\{x_1,\ldots,x_N\right\}\in\man \subset \mathbb{C}^\mathcal{D}$ by matrix multiplication. 
The first step of the algorithm requires computing the affinities~$W_{ij}$ in~\eqref{GinvDef:Wdef} at~$O(\tilde{K})$ sampling points, and in particular, the Euclidean pairwise distances inside each exponent.
In practice, the matrices $A\in G$ are usually block-diagonal where each block is an IUR of~$SU(2)$ (see e.g. \cite{momAbinito,steerMaps}). Formally, we write 
\begin{equation}\label{secDenoise:UmapDef}
	A = \text{diag}(U^{\ell_1}(A),\ldots, U^{\ell_{S}}(A)), \quad \ell_j\in \I_{\man},\quad j=1\ldots,S,
\end{equation}
where $U^{\ell_j}$ is the $\ell_j$-th dimensional IUR of $SU(2)$, and $\I_{\man}$ is the set of IURs that appear as blocks on the diagonal of $A$, such that $\ell_j\leq \ell_{j+1}$. Note that some of the IURs may appear more than once on the diagonal. 
Accordingly, we can now index the coordinates of a point $x_i$ in the data set to match the indices of the rows of the IURs in the blocks of $A$, by 
\begin{equation}\label{secDenoise:xCoordinates}
	x_i = \left(x_{i,(\ell_j,m)}\right), \quad \quad m=-\ell_j,\ldots,\ell_j, \quad \ell_j\in\I_{\man}.
\end{equation}
That is, the indexing \eqref{secDenoise:xCoordinates} partitions $x_i$ into $\#\left\{\I_\man\right\}$ tuples of length~$(2\ell_j+1)$ such that the action of $SU(2)$ on~$x\in\man$ can be written as
\begin{equation}\label{secDenoise:CoeffAction}
	(A\cdot x_i)_{l_j,m} = \sum_{r=-\ell_j}^{\ell_j} U_{m,r}^{\ell_j}(A)\cdot x_{i,(\ell_j,r)},
\end{equation}
for each $\ell_j$ and $m$ in \eqref{secDenoise:xCoordinates}. Altogether, in matrix form, we have that 
\begin{equation}\label{secDenoise:xCoordinatesMatForm}
	A\cdot x_i = \begin{pmatrix}
		U^{\ell_1}(A)&&&&\\ &\ddots&&&\\ &&\ddots&&\\ &&&\ddots&\\ &&&& U^{\ell_{S}}(A)
	\end{pmatrix}\cdot 
\begin{pmatrix}
	x_{i,-\ell_1}\\ \vdots \\ x_{i,\ell_1}\\ \vdots \\x_{i,-\ell_S}\\ \vdots \\x_{i,\ell_S}
\end{pmatrix}.
\end{equation}

To compute the matrices in~\eqref{sec2:hat{W}Def}, we must first evaluate the Euclidean distances 
\begin{equation}\label{su2Example:distsAllOribts}
	\norm{x_i-A\cdot x_j},\quad  A\in G, \quad i,j\in\left\{1,\ldots,N\right\}.
\end{equation}
Expanding the squared norm function, we have
\begin{align}\label{su2Example:distTermByTerm}
	\lVert x_i - A\cdot x_j \rVert^2 = \norm{x_i}^2+ \norm{x_j}^2-2\text{Re}\left\{\dprod{x_i}{ A\cdot x_j}\right\}.
\end{align}
Then, expanding the inner product in the third term on the right hand side of~\eqref{su2Example:distTermByTerm}, we get 
\begin{align}\label{secDenoise:distThirdTerm}
	\dprod{x_i}{A\cdot x_j} &= \sum_{\ell\in \I_{\man}}\sum_{m=-\ell}^{\ell}\sum_{\left\{k: \ell_k=\ell\right\}}x_{i,(\ell_k,m)}\sum_{r=-\ell}^{\ell}U^\ell_{mr}(A)\cdot x_{j,(\ell_k,r)}\\ \nonumber
	& = \sum_{\ell\in \I_{\man}}\sum_{m,r=-\ell}^{\ell}\sum_{\left\{k: \ell_k=\ell\right\}}c_{(i,j),(\ell,m,r)}\cdot U^\ell_{mr}(A), 
\end{align}
where we denote
\begin{equation}\label{su2Example:cellmj}
	c_{(i,j),(\ell,m,r)} = \sum_{\left\{k: \ell_k=\ell\right\}}x_{i,(\ell_k,m)}\cdot x_{j,(\ell_k,r)}.
\end{equation}

Given an integration parameter~$K$, we compute~\eqref{secDenoise:distThirdTerm} and subsequently~\eqref{su2Example:distTermByTerm} for all matrices~$A_{k1,k_2,k_3}$ defined by using~\eqref{secLieGroupAction:fundIUR} and~\eqref{secDenoise:UmapDef} as 
\begin{equation}\label{secDenoise:SU2DiscreteDef}
	A_{k_1,k_2,k_3}  \coloneq \text{diag}(U^{\ell_1}(A(\pi k_1/K,\pi k_2/K,\pi k_3/K),\ldots, U^{\ell_{S}}(A(\pi k_1/K,\pi k_2/K,\pi k_3/K))),
\end{equation}
where~$k_1=0,\ldots,K-1$, and~$k_2 = 0,\ldots,2K-1$, and~$k_3 = -2K,\ldots,2K-1$.
Once we have computed the coefficients $c_{(i,j),(\ell,m,r)}$, the third term in \eqref{su2Example:distTermByTerm} can be computed for all $A_{k_1,k_2,k_3}$ with $O(K^3\log^2 K)$ operations by using a generalized FFT algorithm for $SU(2)$ (see~Appendix~\ref{appnedix:numerIntegration}). 
Now, since the $\ell$-th IUR consists of $(2\ell+1)^2$ elements, the number of coefficients $c_{(i,j),(\ell,m,r)}$ that need to be computed for a fixed pair~$i$ and~$j$ amounts to
\begin{equation}\label{su2Example:cijComplex}
	\sum_{\ell\in\I_{\man}}\sum_{k:\ell_k=\ell} (2\ell_k+1)^2.
\end{equation}
By \eqref{secDenoise:xCoordinates} and \eqref{secDenoise:xCoordinatesMatForm}, we have
\begin{equation}\label{su2Example:cijComplexBound}
	\sum_{\ell\in\I_{\man}}\sum_{k:\ell_k=\ell} (2\ell_k+1) = n,
\end{equation}
where~$n$ is the dimension of the points $x_i$. Since \eqref{su2Example:cijComplex} is bounded from above by the square of \eqref{su2Example:cijComplexBound}, we have that \eqref{su2Example:cijComplex} is $O(n^2)$. 
Finally, once we have computed the squared distances~\eqref{su2Example:distTermByTerm}, we use Algorithm~\ref{alg:integrateSU(2)} to compute the elements of the matrices~$S^{\ell}$ in~\eqref{GinvProp:normFourierMat}, and compute their eigenvectors and eigenvalues. The entire procedure is described in~Algorithm~\ref{alg:Evaluating the steerable manifold harmonics}. 

\begin{algorithm}
	\caption{Evaluating the $G$-invariant manifold harmonics}\label{alg:Evaluating the steerable manifold harmonics}
	\begin{algorithmic}[1]
		\Statex{\textbf{Input:} A data set of $N$ points $\left\{x_1,\ldots,x_N\right\}\subset\CD$, integration parameter $K$, and bandwidth parameter~$\epsilon$}.
		\State For every $i,j\in\left\{1,\ldots,N\right\}$, apply Algorithm~\ref{alg:integrateSU(2)} with integration parameter~$K$, in conjunction with \eqref{su2Example:distTermByTerm} and \eqref{secDenoise:distThirdTerm} to compute the affinities
		\begin{equation}\label{algSteerHarm:WijDiscrete}
			{W}_{ij}(I,A_{k_1,k_2,k_3}) = \exp{\left\lbrace-{\left\Vert x_{i} - A_{k_1,k_2,k_3}\cdot x_{j}  \right\Vert^2 }{/\epsilon}\right\rbrace},
		\end{equation}
		where $A_{k_1,k_2,k_3}$ is defined in~\eqref{secDenoise:SU2DiscreteDef}.
		\State For every $i,j\in\left\{1,\ldots,N\right\}$ and $\ell\in \left\{0,\ldots,K-1\right\}$, apply Algorithm~\ref{alg:integrateSU(2)} to evaluate the generalized Fourier coefficient matrices $\hat{W}_{ij}^{\ell}$ of \eqref{sec2:hat{W}Def}.  
		\State For every $\ell\in\left\{0,\ldots,K-1\right\}$ form the matrix
		\begin{equation}\label{algSteerHarm:matSeq}
			\tilde{S}_\ell = I-\left(D^\ell\right)^{-1}\hat{W}^{\ell},
		\end{equation}
		from \eqref{GinvProp:normFourierMat}, and return its eigenvectors $\left\{\tilde{v}_{n,\ell}\right\}_{n=1}^N$ and eigenvalues $\left\{\tilde{\lambda}_{n,\ell}\right\}_{n=1}^N$.
	\end{algorithmic}
\end{algorithm}

We now summarize the computational complexity of Algorithm \ref{alg:Evaluating the steerable manifold harmonics}. Given that we evaluate the $SU(2)$ Fourier series over $O(K)$ points for each Euler angle, the sampling resolution of $SU(2)$ amounts to $O(K^3)$ points. Denoting $\tilde{K}=O(K^3)$, computing the distances in \eqref{algSteerHarm:WijDiscrete} requires $O(N^2\tilde{K}\log^2 \tilde{K} + N^2n^2)$ operations, out of which $O(N^2n^2)$ operations are required to compute the coefficients $c_{(i,j),(\ell,m,r)}$, and $O(N^2\tilde{K}\log^2 \tilde{K})$ operations to compute \eqref{secDenoise:distThirdTerm} using a fast polynomial transform based $SU(2)$-FFT. 
Forming the generalized Fourier coefficients matrices $\hat{W}^\ell$ of~\eqref{sec2:hat{W}Def} when using a $SU(2)$-FFT requires $O(N^2\tilde{K}\log^2 \tilde{K})$ operations. Forming the sequence of matrices~\eqref{algSteerHarm:matSeq} in the last step of Algorithm~\ref{alg:Evaluating the steerable manifold harmonics} requires $O(N^2K)$ operations, and evaluating the eigenvalues and eigenfunctions of~\eqref{algSteerHarm:matSeq} requires additional~$O(N^3K)$ operations. Thus, the computational complexity of Algorithm~\ref{alg:Evaluating the steerable manifold harmonics} amounts to $O(N^3K+N^2n^2+N^2\tilde{K}\log^2\tilde{K})$ operations in total. 

\section{Summary and future work}\label{SecSummary}
In this work, we extended the graph Laplacian to data sets that are closed under the action of a matrix Lie group. To that end, we introduced the $G$-invariant graph Laplacian (the $G$-GL), that incorporates the group action into its construction, by considering the pairwise distances between all points generated by applying the group action to the given data set. We have shown that the $G$-GL converges to the Laplace-Beltrami operator $\Delta_\man$, at a rate accelerated proportionally to the dimension of the group. This accelerated rate implies that it is advantageous to employ the $G$-GL for graph Laplacian based methods \cite{diffMaps,lapMap,manReg} whenever the data set is equipped with a known group action, since faster convergence implies that significantly less data is required for a prescribed accuracy. We also derived the eigendecomposition of the $G$-GL, showing that its eigenfunctions have a separable form, where the dependence on the group is expressed analytically using the irreducible unitary representations of the group. 
We then demonstrated how the $G$-GL can be employed to denoise a noisy sample from the 4-sphere $S^4$ by using a discrete Fourier analysis type algorithm, with the Fourier modes replaced by the eigenfunctions of the~$G$-GL. 

As of future research, an important direction is to investigate the spectral convergence (see \cite{CHENG2022132}) of the $G$-GL, that is, the convergence of its eigenvectors and eigenvalues to those of~$\Delta_{\man}$. Another, could be to further develop applications of the $G$-GL, e.g., in electron-microscopy imaging \cite{Frank}.

\section*{Acknowledgements}
PH and JK were supported in part by NSF Award DMS-2309782 and start-up grants provided by the College of Natural Sciences and Oden Institute for Computational Engineering and Sciences at the University of Texas at Austin. XC was supported in part by NSF-BSF award 2019733. ER and YS were supported by NSF-BSF award 2019733 and by the European Research Council (ERC) under the European Union's Horizon 2020 research and innovation programme (grant agreement 723991 - CRYOMATH). YS was supported also by the NIH/NIGMS Award R01GM136780-01.

\appendix
\section{The FFT over $SU(2)$}\label{appnedix:numerIntegration}
We now describe how to efficiently compute the Fourier series of a function defined over the group $SU(2)$ whose elements are given by~\eqref{secLieGroupAction:fundIUR}.
The explicit form of the series is given in~\eqref{harmAnalysisOnG:SO3Fourier}, with the IURs of~$SU(2)$ enumerated by the set of non-negative half integers~$\I_{SU(2)}=\left\{0,1/2,1,3/2,\ldots\right\}$, and $d_\ell = 2\ell+1$ for all~$\ell\in \I_{SU(2)}$. 

Recall that the Fourier series of a function over a Lie group $G$ is given by the elements of its IURs. 
The elements of the IURs of $SU(2)$ are given by (see \cite{specialFunc})
\begin{equation}\label{secSU2Quad:SU2IURs}
\begin{gathered}
	U^\ell_{mn}(\alpha,\beta,\gamma) = e^{-im\alpha}P_{mn}^\ell(\cos \beta)e^{-in\gamma}, \\ 
 m,n\in \left\{-\ell,\cdots,\ell\right\},\quad  \ell = 0,\frac{1}{2},1,\frac{3}{2},\ldots,
 \end{gathered}
\end{equation}
with $P_{mn}^\ell(\cos \beta)$ given by
\begin{equation}
	P_{mn}^\ell(\cos \beta) = \bigg[ \frac{(\ell-m)!(\ell+m)!}{(\ell-n)!(\ell+n)!} \bigg]^{\frac{1}{2}}\sin^{m-n}\left(\frac{\beta}{2}\right)\cos^{m+n}\left(\frac{\beta}{2}\right)P_{\ell-m}^{(m-n,m+n)}(\cos\beta),
\end{equation}
where  $P^{(a,b)}_r$ are the Jacobi polynomials (see \cite{specialFunc}).
Now, by~\eqref{secLieGroupAction:SU2IntEuler} and~\eqref{harmAnalysisOnG:SO3Fourier}, the generalized Fourier coefficients of a function~$f:SU(2)\rightarrow \mathbb{C}$ are given by 
\begin{equation}\label{eq:SU2 Fourier coefficient}
\hat{f}_{mn}^\ell=	\frac{1}{16\pi^2}\int_0^{2\pi}\int_0^{\pi}\int_{-2\pi}^{2\pi}f(B(\alpha,\beta,\gamma))\overline {U^\ell_{mn}(\alpha,\beta,\gamma)}\sin\beta d\alpha d\beta d\gamma.
\end{equation}
These coefficients can be approximated rapidly to an arbitrary accuracy as we now describe. 
Note that the functions $U^\ell_{mn}(\alpha,\beta,\gamma)$ in $\eqref{secSU2Quad:SU2IURs}$ separate into a product of factors each depending on a single angle $\alpha$, $\beta$, or $\gamma$. Thus, \eqref{eq:SU2 Fourier coefficient} can be computed by integrating over each of the angles successively, as we now show. 

Set a bandlimit $L\geq 0$ depending on the required accuracy, and an integration parameter~$K>2L$. We begin by evaluating the integrals
\begin{equation}\label{secSU2Quad:fn}
	\tilde{f}_{m}(\beta,\gamma) = \int_{0}^{2\pi}f(B(\alpha,\beta,\gamma))e^{im\alpha}d\alpha, 
\end{equation}
of $f$ multiplied by the conjugate of the factor depending on $\alpha$ in \eqref{secSU2Quad:SU2IURs}
for each $m\in \left\{-L,\ldots,L\right\}$, all~$\gamma \in \left\{ -2\pi,-2\pi+2\pi/K,\ldots,2\pi(K-1)/K  \right\}$, and all~$\beta\in\left\{\arccos(y_k)\right\}_{k=1}^M$, where~$y_k$ are the Gauss-Legendre quadrature nodes for some~$M=O(K)$ (the reason for this choice of $\beta$s will become apparent shortly), by 
\begin{equation}
	\tilde{f}_{m}(\beta,\gamma) \approx \tilde{f}_{\left[m\right]}(\beta,\gamma) =  \frac{2\pi}{K}\sum_{k=0}^{K-1} f \left ( B\left ( 2\pi k/K,\beta,\gamma \right ) \right )e^{2\pi i m k/K}.
\end{equation} 
Using $O(K^2)$ applications of the classical FFT, the entire computation is accomplished by using $O(K^3\log K)$ operations. Next, we evaluate the integrals 
\begin{equation}\label{secSU2Quad:fnm}
	\dbtilde{f}_{mn}(\beta) = \int_{-2\pi}^{2\pi} \tilde{f}_{m}(\beta,\gamma)e^{i n \gamma}d\gamma=\int_{-2\pi}^{2\pi}\int_{0}^{2\pi} f(\alpha,\beta,\gamma)e^{im\alpha}e^{in\gamma}d\alpha \, d\gamma,
\end{equation}
of $f$ multiplied by the conjugate of the factors in~\eqref{secSU2Quad:SU2IURs} that depend on $\alpha$ and $\gamma$, for each $n \in \left\{-L,\ldots,L\right\}$, and all values of $\beta$ and $m$ used in the previous computation, by
\begin{equation}
	\dbtilde{f}_{mn}(\beta) \approx \dbtilde{f}_{\left[m\right]\left[n\right]}(\beta) =  \frac{4\pi}{K}\sum_{k=0}^{K-1} \tilde{f}_{\left[m\right]}(\beta,2\pi k/K)e^{2\pi i n k/K}.
\end{equation}
Using $O(K^2)$ applications of the FFT, the latter computation amounts to a total of $O(K^3\log K)$ operations.
\begin{algorithm}
	\caption{$SU(2)$-FFT}\label{alg:integrateSU(2)}
	\begin{algorithmic}[1]
		\Statex{\textbf{Input:}}
		\begin{enumerate}
			\item Integration parameter $K$.
			\item Function $f:SU(2)\rightarrow \mathbb{C}$. 
			\item Precomputed weights $\left\{w_0,\ldots,w_{M}\right\}$ and nodes~$\left\{y_1,\ldots, y_M\right\}$ for Gauss-Legendre quadrature.
		\end{enumerate}			 
		\For{$\ell\in\left\{0,\ldots,L \right\}$}
		\For{$m\in\left\{-\ell,\ldots,\ell \right\} $}
		\For{$n\in\left\{-\ell,\ldots,\ell \right\}$}	
		
		\For{$\beta\in \left\{\arccos(y_0),\ldots,\arccos(y_{M})\right\}$}	
		\For{$\gamma\in \left\{-2\pi ,-2\pi+\frac{2\pi}{K}\ldots,\frac{2\pi (K-1)}{K}\right\}$}			
		\begin{equation}
			\tilde{f}_{[m]}(\beta,\gamma) = \frac{2\pi}{K}\sum_{k=0}^{K-1} f\left(\frac{2\pi k}{K},\beta,\gamma\right)e^{i2\pi m k/K}, 
		\end{equation}
		\EndFor 
			\begin{equation}
			\dbtilde{f}_{\left[m\right]\left[n\right]}(\beta) = \frac{4\pi}{K}\sum_{k=-K}^{K-1} \tilde{f}_{[m]}\left(\beta,\frac{2\pi k}{K}\right)e^{i2\pi n k/K}
		\end{equation}
		\EndFor 
		\begin{equation}
			\hat{f}_{\left[m\right]\left[n\right]}^{\left[\ell\right]} = \frac{1}{16\pi^2}\sum_{k=0}^{M} w_{k} \cdot \dbtilde{f}_{\left[m\right]\left[n\right]}\left(\arccos(y_k)\right)P^{\ell}_{mn}\left( y_k\right)
		\end{equation}
		\EndFor
		\EndFor		
		\EndFor
		\Ensure{The generalized Fourier coefficients $\hat{f}_{\left[m\right]\left[n\right]}^{\left[\ell\right]}$}.
	\end{algorithmic}
\end{algorithm}
Lastly, we evaluate 
\begin{align}\label{secSU2Quad:fHatnm}
	\hat{f}_{mn}^\ell &= \frac{1}{16\pi^2}\int_{-2\pi}^{2\pi}\int_0^{\pi}\int_{0}^{2\pi}f(B(\alpha,\beta,\gamma))\overline {U^\ell_{mn}(\alpha,\beta,\gamma)}\sin\beta \, d\alpha \, d\beta \, d\gamma, \nonumber \\ 
	&= \frac{1}{16\pi^2}\int_0^\pi 	\dbtilde{f}_{mn}(\beta)P_{mn}^\ell(\cos\beta)\sin\beta \, d\beta 
	=  \frac{1}{16\pi^2}\int_{-1}^1 	\dbtilde{f}_{mn}(\arccos(y))P_{mn}^\ell(y) \, dy
\end{align}
for each $\ell \in \left\{0,\ldots,L\right\}$, and all $m$ and $n$ from the previous computation, by
\begin{equation}\label{secSU2Quad:fhatnm}
	\hat{f}_{mn}^\ell \approx \hat{f}_{\left[m\right]\left[n\right]}^{\left[\ell\right]} = \frac{1}{16\pi^2}\sum_{k=1}^{M} w_k\cdot \dbtilde{f}_{\left[m\right]\left[n\right]}(\arccos(y_k))P_{mn}^\ell(y_{k}),
\end{equation}
using Gauss-Legendre quadrature with precomputed weights $w_1,\ldots,w_{M}$. The latter computation is accomplished using $O(K^3)$ direct evaluations of size $O(K)$, amounting to a total complexity of $O(K^4)$ operations. 

We point out that \eqref{secSU2Quad:fHatnm} can also be computed using $O(K^3\log^2K)$ operations by applying fast polynomial transforms (see e.g. \cite{fastPTPotts}), bringing the overall complexity of the entire algorithm to $O(K^3\log^2 K)$ operations. However, after some experimentation, we found that while the direct computation of~\eqref{secSU2Quad:fhatnm} is asymptotically more expensive, in practice, utilizing GPUs to evaluate it is substantially faster than the available $O(K^3\log^2K)$ algorithms. Unfortunately, utilizing GPUs does not easily lend itself for speeding up fast polynomial transform algorithms, due to their iterative nature. The entire procedure of evaluating the integrals in~\eqref{eq:SU2 Fourier coefficient} is outlined in Algorithm~\ref{alg:integrateSU(2)}.

Lastly, we note that the method described above can be applied to $SO(3)$ by restricting all computations to the integer valued IURs of $SU(2)$, and the angle~$\gamma$ to~$[0,2\pi)$. 

\section{Proof of Lemma~\ref{GinvDef:quadFormLemma}}\label{quadFormLemmaPrf}
For any~$f\in\Hspace$, expanding~\eqref{GinvDef:Ldef} by using~\eqref{GinvDef:Wdef} and~\eqref{GinvDef:Ddef}, we obtain that
\begin{equation}\label{GinvDef:Ldef2}
	\left\{Lf\right\}(i,A) =D_{ii}\cdot f_i(A)-\sum_{i=1}^{N}\int_{G} W_{ij}(A,B)f_j(B)d\eta(B), \quad (i,A)\in \Gamma,
\end{equation}
which implies that
\begin{equation}\label{quadFormLemmaPrf:eq1}
	\dprod{f}{Lf}_{\Hspace} = \sum_{i=1}^N D_{ii}\cdot \int_G \left|f_i(A)\right|^2d\eta(A)-\sum_{i,j=1}^N\int_G \int_G W_{ij}\overline{f_i}(A)f_j(B) d\eta(A)d\eta(B). 
\end{equation}
Next, by using the left-invariance property~\eqref{secLieGroupAction:haarIntLeftInvar} of~$\eta$ and \eqref{GinvSec:affinityKernelDef}, for any~$i,j\in\left\{1,\ldots,N\right\}$ and~$A\in G$ we have that
\begin{equation}
	\int_G W_{ij}(I,C)d\eta(C) = \int_G W_{ij}(I,C)d\eta(AC) = \int_G W_{ij}(I,A^*B)d\eta(B) = \int_G W_{ij}(A,B)d\eta(B), 
\end{equation}
where we made the change of variables~$B=AC$ in the second equality. 
Thus, using that~$W_{ij}(A,B) = W_{ji}(B,A)$ (by the definition of $W_{ij}$ in~\eqref{GinvDef:Wdef}), we can write the first expression on the r.h.s of~\eqref{quadFormLemmaPrf:eq1} as
\begin{align}\label{quadFormLemmaPrf:eq2}
	\sum_{i=1}^N D_{ii}\cdot \int_G \left|f_i(A)\right|^2d\eta(A) &= \sum_{i,j=1}^N\int_G\int_G W_{ij}(A,B)\left|f_i(A)\right|^2d\eta(A) d\eta (B) \nonumber \\
	&= \sum_{i,j=1}^N\int_G\int_G W_{ij}(A,B)\left|f_j(B)\right|^2d\eta(A) d\eta (B).	
\end{align}
Plugging~\eqref{quadFormLemmaPrf:eq2} into~\eqref{quadFormLemmaPrf:eq1} we obtain that
\begin{align}
	\dprod{f}{Lf}_{\Hspace} &= \frac{1}{2}\sum_{i,j=1}^N \int_G\int_G W_{ij}(A,B)\Big[\left|f_i(A)\right|^2+\left|f_j(B)\right|^2-f_i(A)\overline{f_j}(B)-\overline{f_i(A)}f_j(B)\Big] d\eta(A)d\eta(B)\nonumber \\
	&=  \frac{1}{2}\sum_{i,j=1}^N \int_G\int_G W_{ij}(A,B)\left|f_i(A)-f_j(B)\right|^2 d\eta(A)d\eta(B).
\end{align}

\section{Proof of Theorem \ref{GInvLapProp:Thrm1}}\label{eigDecomPrf}
	For any $\Psi\in \mathcal{H}$, by plugging \eqref{sec1:fourierSO3} into \eqref{GinvDef:Wdef}, and using~\eqref{harmAnalysisOnG:SO3Fourier} we have for any~$i\in\{1,\ldots,N\}$ that
	\begin{align}
		\left\{W\Psi\right\}(i,A)= \sum_{j=1}^N\int_G \sum_{\ell\in\I_G} d_\ell\cdot\sum_{m,n=1}^{d_\ell}\left(\hat{W}^\ell_{ij}\right)_{mn}U^\ell_{mn}\left(A^*B\right) \Psi_j(B)d\eta(B),
	\end{align}
	where we denote by $(\hat{W}^\ell_{ij})_{mn}$ and $U^\ell_{mn}(\cdot)$ the $(m,n)^{\text{th}}$ entries of $\hat{W}^\ell_{ij}$ and $U^\ell(\cdot)$, respectively, and $\I_G$ enumerates the IURs of~$G$. Next, by using the homomorphism property of group representations~\eqref{harmAnalysisOnG:hMorphProp}, we have 
	\begin{align}\label{eigdecompProof:afterDecoupling}
		\left\{W\Psi\right\}(i,A)= \sum_{j=1}^N \sum_{\ell\in \I_G} d_\ell\cdot\sum_{m,n=1}^{d_\ell}\left(\hat{W}^\ell_{ij}\right)_{mn}\sum_{r=1}^{d_\ell}U^\ell_{mr}\left(A^*\right)\int_GU^\ell_{rn}\left(B\right) \Psi_j(B)d\eta(B).
	\end{align}
	We now show that for a given $q\in \I_G$, $p\in\left\{1,\ldots,d_\ell\right\}$ and $s\in\{1,\ldots,N\}$ the function~$\Phi_{q,p,s}$ of \eqref{GInvLapProp:EigenFuncForm} is an eigenfunction of~$L$. 	

	Extending the notation of \eqref{sec1:parVecNotation}, for any $v\in \mathbb{C}^{N\ell}$ and all $j\in\{1,\ldots,N\}$, we denote the $d_\ell$ entries of the vector $e^j(v)\in \mathbb{C}^{d_\ell}$ by
	\begin{equation}\label{eigdecompProof:ejellNotation}
		e^j(v)=\left(e^j_{1}(v),\ldots,e^j_{d_\ell}(v)\right).
	\end{equation}		
	Now, the homomorphism property~\eqref{harmAnalysisOnG:hMorphProp} implies that $U^q_{\cdot,p}\left(A^*\right) =\overline{U^q_{p,\cdot}\left(A\right)}$. Thus, plugging~$\Psi=\Phi_{q,p,s}$ into \eqref{eigdecompProof:afterDecoupling}, and using the notation in \eqref{eigdecompProof:ejellNotation}, and that 
	 \begin{equation}\label{eigdecompProof:PhiDef}
	 	\Psi_j(A)= \Phi_{q,p,s}(j,A) = e^j\left(v_s\right)U^q_{\cdot,p}(A^*),
	 \end{equation}
 	the expression for $\left\{W \Phi_{q,p,s}\right\}(i,A)$ is given by
	\begin{align}\label{eigdecompProof:afterPluggingPhi}
		&\sum_{j=1}^N \sum_{\ell\in\I_G} d_\ell\cdot\sum_{m,n=1}^{d_\ell}\left(\hat{W}^\ell_{ij}\right)_{mn}\sum_{r=1}^{d_\ell}U^\ell_{mr}\left(A^*\right)\int_GU^\ell_{rn}\left(B\right)\sum_{k=1}^{d_\ell} e^j_k\left(v_s\right)\overline{U^q_{p,k}\left(B\right)}d\eta(B) \\ \nonumber
		&= \sum_{j=1}^N \sum_{\ell\in\I_G} d_\ell\cdot\sum_{m,n=1}^{d_\ell}\left(\hat{W}^\ell_{ij}\right)_{mn}\sum_{r=1}^{d_\ell}\sum_{k=1}^{d_\ell}U^\ell_{mr}\left(A^*\right) e^j_k\left(v_s\right)\int_GU^\ell_{rn}\left(B\right)\overline{U^q_{p,k}\left(B\right)}d\eta(B)
	\end{align}
	By Schur's orthogonality relations (see e.g. \cite{nonComHarmAnalys}), we have
	\begin{equation}
		\int_GU^\ell_{rn}\left(B\right)\overline{U^q_{p,k}\left(B\right)}d\eta(B) = d_q^{-1}\delta_{rp}\delta_{nk}\delta_{\ell q},
	\end{equation}
	by which we get that the expression in \eqref{eigdecompProof:afterPluggingPhi} for $\left\{W\Phi_{q,p,s}\right\}(i,A)$ becomes
	\begin{align}\label{eigdecompProof:evecResultForWMidCalc}
		\left\{W\Phi_{q,p,s}\right\}(i,A) &= \sum_{j=1}^N \sum_{m,n=1}^{d_q}\left(\hat{W}^q_{ij}\right)_{mn}U^q_{mp}\left(A^*\right) e^j_n\left(v_s\right)\\ \nonumber
		& = \sum_{m=1}^{d_q} U^q_{mp}\left(A^*\right) \sum_{j=1}^N\sum_{n=1}^{d_q}\left(\hat{W}^q_{ij}\right)_{mn}e^j_n\left(v_s\right). 
	\end{align}
	Next, we notice that 
	\begin{equation}
		\sum_{j=1}^N\sum_{n=1}^{d_q}\left(\hat{W}^q_{ij}\right)_{mn}e^j_n\left(v_s\right) = \left(\hat{W}^q\right)_{(i-1)d_q+m,\cdot}\cdot v_s,
	\end{equation}
	where $\hat{W}^q$ is the block matrix of Fourier coefficients matrices of $q^{\text{th}}$ order that was defined in \eqref{sec2:blockFourierMat}, and $(\hat{W}^q)_{((i-1)d_q+m,\cdot)}$ is the $m^{\text{th}}$ row of the $d_q\times Nd_q$ matrix consisting of blocks $(i,1),(i,2),\ldots,(i,N)$ of $\hat{W}^q$. Thus, we get that
	\begin{equation}\label{eigdecompProof:evecResultForW}
		\left\{W\Phi_{q,p,s}\right\}(i,A)= \sum_{m=1}^{d_q} U^q_{mp}\left(A^*\right)\left(\hat{W}^q\right)_{(i-1)d_q+m,\cdot}\cdot v_s.
	\end{equation}
	Next, we notice that by the definition of $D^\ell$ in statement of the theorem, we have that 
	\begin{equation}\label{eigdecompProof:diagOfDl}
		D^\ell_{(i-1)d_\ell+m,(i-1)d_\ell+m}= D_{ii}, \quad m\in\{1,\ldots d_\ell\}, \quad i\in\{1,\ldots,N\}.
	\end{equation}
	That is, the $(m,m)$-th element of the $(i,i)$-th block of the $Nd_\ell \times Nd_\ell$ matrix $D^\ell$ is given by $D_{ii}$ defined in $\eqref{GinvDef:Ddef}$, for all $m\in \{1,\ldots, d_\ell\}$. 
	Thus, by \eqref{GinvDef:Ldef}, \eqref{eigdecompProof:evecResultForW} and~\eqref{eigdecompProof:diagOfDl}, we have
	\begin{align}\label{eigdecompProof:evProp1}
		L\left\{\Phi_{q,p,s}\right\}(i,A) &= D_{ii}\Phi_{q,p,s}(i,A) - \sum_{m=1}^{d_q} U^q_{mp}\left(A^*\right)\left(\hat{W}^q\right)_{(i-1)d_q+m,\cdot}\cdot v_s \nonumber \\ 
		&= D_{ii}e^i\left(v_s\right)U^q_{\cdot,p}(A^*) - \sum_{m=1}^{d_q} U^q_{mp}\left(A^*\right)\left(\hat{W}^q\right)_{(i-1)d_q+m,\cdot}\cdot v_s\nonumber \\ 
		&= \sum_{m=1}^{d_q}U_{m,p}^q(A^*)D^q_{(i-1)d_q+m,(i-1)d_q+m}e^i_m(v_s) \nonumber \\
        &\qquad\qquad\qquad\qquad\qquad -\sum_{m=1}^{d_q} U^q_{mp}\left(A^*\right)\left(\hat{W}^q\right)_{(i-1)d_q+m,\cdot}\cdot v_s \nonumber 	\\
		&=\sum_{m=1}^{d_q}U_{m,p}^q(A^*)\left(D^q_{(i-1)d_q+m,\cdot}\cdot v_s-\left(\hat{W}^q\right)_{(i-1)d_q+m,\cdot}\cdot v_s\right)\nonumber \\ 
		&=\sum_{m=1}^{d_q}U_{m,p}^q(A^*)\left(D^q_{(i-1)d_q+m,\cdot}-\left(\hat{W}^q\right)_{(i-1)d_q+m,\cdot} \right)v_s.
	\end{align}
	Thus, since $v_s$ is an eigenvector of  $D^q-\hat{W}^q$ corresponding an eigenvalue $\lambda$, we get 
	\begin{align}\label{eigdecompProof:evProp2}
		L\left\{\Phi_{q,p,s}\right\}(i,A) &= \sum_{m=1}^{d_q}U_{m,p}^q(A^*)\lambda e^i_m(v_s) = \lambda \sum_{m=1}^{d_q}e^i_m(v_s)U_{m,p}^q(A^*) \\ \nonumber 
		&= \lambda e^i(v_s)U^q_{\cdot,p}(A^*)= \lambda \Phi_{q,p,s}(i,A),
	\end{align}
	showing that the function $\Phi_{q,p,s}$
	is an eigenfunction of~$L$ in \eqref{GinvDef:Ldef}. 
	
	Next, we show that the eigenfucntions in \eqref{GInvLapProp:EigenFuncForm} are orthogonal. Indeed, we have that
	\begin{align}\label{eigdecompProof:Orthogonality1}
		\langle\Phi_{m_1,k_1,\ell_1},\Phi_{m_2,k_2,\ell_2}\rangle_{\mathcal{H}}&=\sum_{j=1}^N\int_G\Phi_{m_1,k_1,\ell_1}(j,A)\Phi_{m_2,k_2,\ell_2}^*(j,A)d\eta(A) = \\ \nonumber
		&=\sum_{j=1}^N \int_Ge^j(v_{m_1,\ell_1})U_{\cdot,k_1}^{\ell_1}(A) (U_{\cdot,k_2}^{\ell_2}(A))^*(e^j(v_{m_2,\ell_2}))^*d\eta(A) \\ \nonumber
		&=\sum_{j=1}^N e^j(v_{m_1,\ell_1})\left(\int_GU_{\cdot,k_1}^{\ell_1}(A) \left(\overline{U_{\cdot,k_2}^{\ell_2}(A)}\right)^TdA\right)(e^j(v_{m_2,\ell_2}))^*.
	\end{align}
	The outer product of rows $U_{\cdot,k_1}^{\ell_1}(A) \left(\overline{U_{\cdot,k_2}^{\ell_2}}\right)^T(A)$ is a $d_{\ell_1}\times d_{\ell_2}$ matrix of products between elements of the IURs of $G$, and by Schur's orthogonality relations, we have that 
	\begin{equation}
		d_\ell\cdot\int_GU_{\cdot,k_1}^{\ell_1}(A) \overline{U_{k_2,\cdot}^{\ell_2}}(A)d\eta(A) =
		\begin{cases}
			I_{d_\ell\times d_\ell} & \ell_1=\ell_2=\ell, k_1=k_2=k, \\
			0 & \text{otherwise}.
		\end{cases} 
	\end{equation}
	Thus, when $\ell_1=\ell_2=\ell$ and  $k_1=k_2=k$, we are left with
	\begin{align}
		\langle\Phi_{m_1,k_1,\ell_1},\Phi_{m_2,k_2,\ell_2}\rangle_{\mathcal{H}} &=d_\ell\cdot\sum_{j=1}^N e^j(v_{m_1,\ell})(e^j(v_{m_2,\ell}))^*\\ \nonumber 
		&= d_\ell\cdot\langle v_{m_1,\ell},v_{m_2,\ell}\rangle_{\mathbb{C}^{Nd_\ell}}= 
		\begin{cases}
			d_\ell  & m_1 = m_2 = m,\\
			0 & m_1\neq m_2, 
		\end{cases}
	\end{align}
	which shows that $\Phi_{m_1,k_1,\ell_1}$ and $\Phi_{m_2,k_2,\ell_2}$ are orthogonal. 
	
	To show that the eigenfunctions in \eqref{GInvLapProp:EigenFuncForm} form a basis for $\Hspace$, we first assert that the matrices~$\hat{W}^{(\ell)}$ in \eqref{sec2:hat{W}Def} are hermitian. For the latter we require the following result (see p.82 in~\cite{ChirikjianStochasticLieGroups}).
	\begin{lemma}\label{conjInvarLemma}
		Let $G$ be a compact unitary matrix Lie group. Then, we have that 
		\begin{equation}
			\int_{G} f(A^*)d\eta(A) = \int_{G} f(A)d\eta(A), 
		\end{equation}
		where $\eta$ is the Haar measure over $G$. 
	\end{lemma}
	Now, by~\eqref{sec2:hat{W}Def} and~\eqref{GinvDef:Wdef}, we have
	\begin{align*}
		\hat{W}^{(\ell)}_{ji}&=\int_GW_{ji}(I,A)\overline{U^\ell(A)}d\eta(A)=\int_GW_{ij}(I,A^*)
		\overline{\left(U^{\ell}(A^*)\right)^T}d\eta(A)\\ 
		&=\overline{\left(\int_GW_{ij}(I,A^*)U^\ell(A^*)d\eta(A)\right)^T}=\overline{\left(\int_GW_{ij}(I,A)U^\ell(A)d\eta(A)\right)^T}=\left(\hat{W}^{(\ell)}_{ij}\right)^*,
	\end{align*}
	where in passing to the second equality we used the homomorphism property  \eqref{harmAnalysisOnG:hMorphProp} that implies
	\begin{equation*}
		I = U^\ell(AA^*) =  U^\ell(A) \cdot  U^\ell(A^*),
	\end{equation*}  
	hence $U^\ell(A^*) = \left(U^\ell(A)\right)^*$, and Lemma \ref{conjInvarLemma} in passing to the third equality. 

	Now, for $f\in L^2\left(\left\{1,\ldots,N\right\}\times G\right)$ and a fixed $i\in\{1,\ldots,N\}$, we observe that since~$f(i,A)\in L^2(G)$ then we also have ~$\overline{f(i,A)}\in L^2(G)$, and thus we can expand~$\overline{f(i,A)}$ as
	\begin{align}\label{completenessPrf:expInSU(2)Conj}
		\overline{f(i,A)}&=\sum_{\ell\in \I_G} d_\ell\cdot\sum_{m,m'=1}^{d_\ell}\alpha^i_{\ell,m,m'}U^\ell_{m,m'}(A),
	\end{align}
	from which we get
	\begin{align}\label{completenessPrf:expInSU(2)}
		f(i,A)&=\sum_{\ell\in \I_G} d_\ell\cdot\sum_{m,m'=1}^{d_\ell}\tilde{\alpha}^i_{\ell,m,m'}\overline{U^\ell_{m,m'}(A)},
	\end{align}
	where $\tilde{\alpha}^i_{\ell,m,m'} = \overline{\alpha^i_{\ell,m,m'}}$ for all $\ell\in \I$ and $m,m'\in \left\{1,\ldots,d_\ell\right\}$. 
	Fix $\ell$ and $m$. The matrix $W^{(\ell)}$ is hermitian, and thus admits an orthonormal basis of eigenvectors $\left\{v_{n,\ell}\right\}$, $n=1,2,\ldots,d_\ell N$. Thus, we can expand
	\begin{equation*}
		\left(\tilde{\alpha}^1_{\ell,m,1},\ldots,\tilde{\alpha}^1_{\ell,m,d_\ell},\ldots,\tilde{\alpha}^N_{\ell,m,1},\ldots,\tilde{\alpha}^N_{\ell,m,d_\ell}\right)^T=\sum_{n=1}^{Nd_\ell}\beta_{\ell,m,n}v_{n,\ell}, 
	\end{equation*}
	from which we have that
	\begin{equation}\label{completenessPrf:alphaTildCoeff}
		\tilde{\alpha}_{l,m,m'}^i=\sum_{n=1}^{Nd_\ell}\beta_{\ell,m,n}e^i_{m'}(v_{n,\ell}).
	\end{equation}
	Plugging~\eqref{completenessPrf:alphaTildCoeff} back in to \eqref{completenessPrf:expInSU(2)}, we have
	\begin{align*}
		f(i,R)&=d_\ell\cdot\sum_{\ell\in \I}\sum_{m,m'=1}^{d_\ell}\sum _{n=1}^{Nd_\ell}\beta_{\ell,m,n}e^i_{m'}(v_{n,\ell})U_{m',m}^\ell(A^*)\\
		&= d_\ell\cdot\sum_{\ell\in \I}\sum_{m=1}^{d_\ell}\sum _{n=1}^{Nd_\ell}\beta_{\ell,m,n}\sum_{m'=-\ell}^\ell e^i_{m'}(v_{n,\ell})U_{m',m}^\ell(A^*)\\
		&= d_\ell\cdot\sum_{\ell\in \I}\sum_{m=1}^{d_\ell}\sum _{n=1}^{Nd_\ell}\beta_{\ell,m,n}\Phi_{\ell,m,n}(i,A),
	\end{align*}
	which shows directly that any function $f\in L^2\left(\{1,\ldots,N\}\times G\right)$ can be expanded in a series of eigenfunctions of the $G$-GL. 
	
	Lastly, the fact that the eigenvalues of~$L$ are real and non-negative is a direct result of Lemma~\ref{GinvDef:quadFormLemma}, coupled with the fact that~$L$ is a symmetric operator, since we have that~$W_{ij}{(A,B)} = W_{ji}(B,A)$ for all~$A,B\in G$ and all~$i,j\in \left\{1\ldots,N\right\}$. 

\section{Proof of Theorem \ref{sec2:ConvThrmUnnormalized}} \label{convTheoremPrf}
The analysis that follows is a generalization of the proof of Theorem~2 in~\cite{steerMaps}.
The proof is divided into 4 parts, given in appendices \ref{secConvPrf},\ref{secConvRatePrf} \ref{secGinvParam} and \ref{secConvBigLemmaPrf}.
In part~1, we show that the $G$-invariant graph Laplacian converges to the Laplace-Beltrami operator on the data manifold~$\mathcal{M}$. 
In part~2, we derive the convergence rate (the variance term) of our operator, using the proof technique derived in~\cite{convRate} for the standard graph Laplacian. In parts~3 and~4, we provide proof for key results that are used in part~2. Appendices~\ref{secRealMAnifolds} and~\ref{secLieCoordinates} provide some differential geometry background needed in for \ref{secGinvParam}. 

\subsection{Convergence of the $G$-invariant graph Laplacian}\label{secConvPrf}
In this section, we show that for a fixed $\epsilon>0$ and as $N\rightarrow\infty$, the normalized $G$-invariant graph Laplacian approximates the Laplace-Beltrami operator on the data manifold $\mathcal{M}$ up to an $O(\epsilon)$ error at each data point $A\cdot x_i\in G\cdot X$. We will assume w.l.o.g that $A=I$, since all the analysis that follows can be carried out exactly in the same manner and with the same results when $A\neq I$. 

By~\eqref{GinvDef:DMatAction},\eqref{GinvDef:Ddef},\eqref{GinvDef:Ldef}, and \eqref{GinvDef:normGLapDef}, we can write
\begin{align}
	\frac{4}{\varepsilon} \left\{\tilde{L}g\right\}(i,I) &= \frac{4}{\varepsilon}\left[f(x_i) - \sum_{j=1}^N \int_G D^{-1}_{ii}{W}_{ij}(I,B)f(B\cdot x_j)d\eta(B) \right] \nonumber \\
	&= \frac{4}{\varepsilon}\left[f(x_i) -  \frac{\frac{1}{N}\sum_{j=1}^N \int_G \exp{\left\lbrace-{\left\Vert  x_i - B\cdot x_j\right\Vert^2}{/\varepsilon}\right\rbrace}f(B\cdot x_j) d\eta(B)}{\frac{1}{N} \sum_{j=1}^{N}  \int_G \exp{\left\lbrace-{\left\Vert x_i -B\cdot x_j\right\Vert^2}{/\varepsilon}\right\rbrace} d\eta(B)}\right]. \label{eq:steerable graph laplacian fraction}
\end{align}
We now derive the limit of~\eqref{eq:steerable graph laplacian fraction} for $N\rightarrow\infty$ and a fixed $\varepsilon>0$, showing that it is essentially the Laplace-Beltrami operator $\Delta_\man$ with an additional bias error term of~$O(\varepsilon)$. 
First, let us focus on the expression
\begin{equation}\label{convPrf:C1}
	C_{i,N}^1 \coloneq \frac{1}{N}\sum_{j=1}^N \int_G \exp{\left\lbrace-{\left\Vert x_i - B\cdot x_j\right\Vert^2}{/\varepsilon}\right\rbrace}f(B\cdot x_j) d\eta(B),
\end{equation}
which is the numerator of the second term of~\eqref{eq:steerable graph laplacian fraction} (inside the brackets).
Let us define 
\begin{equation}\label{convPrf: groupConvolution}
	H_i(x) \coloneq \int_G \exp{\left\lbrace-{\left\Vert x_i - B\cdot x\right\Vert^2}{/\varepsilon}\right\rbrace}f(B\cdot x)d\eta(B), \quad x\in \man. 
\end{equation}
Since $\left\lbrace x_i \right\rbrace$ are i.i.d samples from $\man$, the law of large numbers implies
\begin{align}
	\lim_{N\rightarrow\infty} C_{i,N}^1 &= \lim_{N\rightarrow\infty} \frac{1}{N}\sum_{j=1}^N H_i(x_j) = \lim_{N\rightarrow\infty} \frac{1}{N}\sum_{j\neq i,j=1}^N H_i(x_j) \label{convPrf:numerLim1}\\
	 &=\mathbb{E}{\left[ H_i(x) \right]} = \int_{\man} H_i(x) p(x)d\omega(x), \label{convPrf:numerLim}
\end{align}
where $p(x)$ is the sampling density of the data over $\man$, and $\omega(x)$ is the measure with respect to the Riemannian metric on $\mathcal{M}$ induced by the standard Euclidean inner product in~$\mathbb{C}^N$. 

Next, we recall that $G$ acts on points $x\in \mathcal{M}$ by multiplication by unitary matrices $A$.
Consider the map $U_A:\man\rightarrow \man$ defined by 
\begin{equation}\label{convPrf:U_ADef}
	U_A(x) = A\cdot x, \quad x\in\man. 
\end{equation}
The pushforward of $\omega$ by $U_A$ is the measure $U_A^*(\omega)(\cdot)$ over $\man$ defined by 
\begin{equation}
	U_A^*(\omega)(S) = \omega(U_A^{-1}(S)), 
\end{equation}
for all Lebesgue-measurable subsets $S\subseteq\man$. 
Since $U_A$ acts as an isometry over $\man$, and the metric tensor over $\man$  is invariant under isometries, we conclude that $\omega(x)$ is $G$-invariant. That is, for fixed $A\in G$ we have 
\begin{equation}\label{convPrf:pushforwardInvarianceProp}
	U_A^*(\omega)(S)=\omega(S), 
\end{equation}
for all Lebesgue-measurable subsets $S\subseteq\man$. 

Using the latter observation, and assuming that $p$ is uniform over $\man$ (and so $p(x) = 1/\operatorname{Vol}\{\mathcal M\}$)  we have
\begin{align}\label{convPrf:measureInv1}
	\int_{\man} H_i(x) p(x)d\omega(x) &= \frac{1}{\operatorname{Vol}\left\lbrace\man\right\rbrace}\int_{\mathcal{M}}\int_G \exp{\left\lbrace-{\left\Vert x_i - B\cdot x \right\Vert^2}{/\varepsilon}\right\rbrace}f(B\cdot x) d\omega(x)d\eta(A) \nonumber \\
	&= \frac{1}{\operatorname{Vol}\left\lbrace\man\right\rbrace}\int_G\int_{\mathcal{M}} \exp{\left\lbrace-{\left\Vert x_i - y \right\Vert^2}{/\varepsilon}\right\rbrace}f(y) dU_A^*(\omega)(y)d\eta(A) \nonumber \\
	&=\frac{1}{\operatorname{Vol}\left\lbrace\man\right\rbrace}\int_G\int_{\mathcal{M}} \exp{\left\lbrace-{\left\Vert x_i - y \right\Vert^2}{/\varepsilon}\right\rbrace}f(y) d\omega(y)d\eta(A) \nonumber \\
	&=	\frac{1}{\operatorname{Vol}\left\lbrace\man\right\rbrace} \int_{\mathcal{M}} \exp{\left\lbrace-{\left\Vert x_i - y \right\Vert^2}{/\varepsilon}\right\rbrace}f(y) d\omega(y), 
\end{align}
where in the second equality we applied the change of variables $y=B\cdot x = U_B(x)$, and in the fourth equality that $\operatorname{Vol}(G)=1$, by~\eqref{secLieGroupAction:probMeasure}. 

In a similar fashion, if we consider the denominator of the second term in~\eqref{eq:steerable graph laplacian fraction}
\begin{equation}\label{convPrf:C2}
	C_{i,N}^2 \coloneq \frac{1}{N}\sum_{j=1}^N \int_G \exp{\left\lbrace-{\left\Vert x_i - B\cdot x_j\right\Vert^2}{/\varepsilon}\right\rbrace} d\eta(B),
\end{equation}
and by repeating the calculations carried above for $C_{i,N}^1$ but with $f\equiv 1$, we get that
\begin{equation}\label{eq: measInvar2}
	\lim_{N\rightarrow\infty} C_{i,N}^2 = \frac{1}{\operatorname{Vol}\left\lbrace\man\right\rbrace} \int_{\man}  \exp{\left\lbrace-{\left\Vert x_i - x \right\Vert^2}{/\varepsilon}\right\rbrace} d\omega(x) = \expect{G_i(x)},
\end{equation}
where we defined 
\begin{equation}\label{convPrf:denomLim}
	G_i(x)  \coloneq \sum_{j=1}^{N}\int_G \expPow{x_i}{B\cdot x}d\eta(B), \quad x\in \man.
\end{equation}

Lastly, if we substitute~\eqref{convPrf:measureInv1} and~\eqref{eq: measInvar2} into~\eqref{eq:steerable graph laplacian fraction}, we have that
\begin{align}
	\lim_{N\rightarrow\infty} \frac{4}{\varepsilon} \left\{\tilde{L}g\right\}(i,I) &=  \frac{4}{\varepsilon}\left[f(x_i) -  \frac{\frac{1}{\operatorname{Vol}\left\lbrace\man\right\rbrace} \int_{\man}  \exp{\left\lbrace-{\left\Vert x_i - x\right\Vert^2}{/\varepsilon}\right\rbrace}f(x) d\omega(x)}{\frac{1}{\operatorname{Vol}\left\lbrace\man\right\rbrace} \int_{\man}  \exp{\left\lbrace-{\left\Vert x_i - x\right\Vert^2}{/\varepsilon}\right\rbrace} d\omega(x)}\right] \label{eq:fraction limit}
	\\ &= \Delta_\man f(x_i) + O(\varepsilon), \label{eq:bias error in proof}
\end{align}
where~\eqref{eq:bias error in proof} is justified in~\cite{convRate}.

\subsection{The convergence rate}\label{secConvRatePrf}
The variance error in the approximation of the Laplace-Beltrami operator by the $G$-GL (second term on the r.h.s of \eqref{convSec:errFormula}),
is attributed to the difference between the values of $C_{i,N}^1$ and $C^2_{i,N}$ for a finite $N$ and their limit when $N\rightarrow \infty$. 
To derive the variance error we employ the proof technique derived in \cite{convRate}. As in Section~\ref{secConvPrf}, we perform all our analysis in a neighbourhood of an arbitrary data point $A\cdot x_i$, assuming w.l.o.g that $A=I$. 

Using the definitions \eqref{convPrf: groupConvolution} and \eqref{convPrf:denomLim}, the normalized $G$-GL $\eqref{GinvDef:normGLapDef}$ applied to an arbitrary smooth function~$f$ on~$\man$, and evaluated at the fixed point~$(i,I)$ can be written as
\begin{equation}\label{convPrf:LHG}
	\tilde{L}\left\{f\right\}(i,I) = f(x_i) - \frac{\sum_{j=1}^{N}H_i(x_j)}{\sum_{j=1}^{N}G_i(x_j)}. 
\end{equation}
Following \cite{convRate}, we employ the Chernoff tail inequality to bound the probability of \eqref{convPrf:LHG} deviating from its mean (the limit of \eqref{convPrf:LHG} when~$N\rightarrow \infty$). 
We now derive a bound on the probability of the $\alpha$-error
\begin{equation}\label{convPrf:pPlus}
	p_+(N,\alpha) = Pr\left\{\frac{\sum_{j\neq i}^N H_i(x_j)}{\sum_{j\neq i }^{N} G_i(x_j)}-\frac{\mathbb{E}\left[H_i\right]}{\mathbb{E}\left[G_i\right]}>\alpha\right\},
\end{equation}
where we point out that excluding the diagonal terms $H_i(x_i)$ and $G_i(x_i)$ results in an even smaller error than the variance error itself, as was shown in \cite{steerMaps} and \cite{convRate}.
We also point out that a bound on the probability
\begin{equation}
		p_-(N,\alpha) = Pr\left\{\frac{\sum_{j\neq i}^N H_i(x_j)}{\sum_{j\neq i }^{N} G_i(x_j)}-\frac{\mathbb{E}\left[H_i\right]}{\mathbb{E}\left[G_i\right]}<-\alpha\right\},
\end{equation}
can be obtained by the same technique that we now apply to bound \eqref{convPrf:pPlus}.

Now, defining
\begin{equation}\label{eq:Ji}
	J_i(x_j) \coloneq \expect{G_i}H_i(x_j)-\expect{H_i}G_i(x_j)+\alpha \expect{G_i}\left( \expect{G_i}-G_i(x_j)\right),
\end{equation}
it was shown in \cite{convRate} that $p(N,\alpha)$ can be rewritten as 
\begin{equation}
	p_+(N,\alpha) = Pr\left\{\sum_{j\neq i}^{N}J_i(x_j)>\alpha(N-1)\left( \expect{G_i} \right)^2\right\}, 
\end{equation}
where $	J_i(x_j)$ are i.i.d random variables.  Using the Chernoff inequality we get
\begin{equation}\label{convPrf:pPlusChernBound}
	p_+(N,\alpha) \leq \exp \left\{ \frac{\alpha^2(N-1)^2 \left( \expect{G_i} \right)^4}{2(N-1)\expect{\left(J_i\right)^2}+O\left(\alpha\right)} \right\}. 
\end{equation}
From~\eqref{eq:Ji} we get by a direct calculation that
\begin{equation}\label{convPrf:JiExpression}
\begin{split}
	\expect{\left( J_i \right)^2} = \left(\expect{G_i}\right)^2\expect{\left( H_i\right)^2}-2\expect{G_i}\expect{H_i}\expect{H_iG_i}\\
 +\left(\expect{H_i}\right)^2\expect{\left(G_i\right)^2}+O\left(\alpha\right).
 \end{split}
\end{equation}
To evaluate all the moments in \eqref{convPrf:JiExpression}, and consequently the quantities~$\expect{\left( J_i \right)^2}$ and~$\expect{\left( G_i \right)^4}$ in~\eqref{convPrf:pPlusChernBound}, we use the following result from~\cite{convRate}, which will be the key instrument in the analysis that follows. 
 \begin{theorem}\label{convPrf: ClassicalConvRateThrm}
	Let $\mathcal{M}$ be a smooth and compact $d$-dimensional submanifold, and let $f:\mathcal{M}\rightarrow\mathcal{R}$ be a smooth function. Then, for any $y\in \mathcal{M}$ 
	\begin{equation}\label{convPrf:gaussIntegral}
		\left(\pi \epsilon\right)^{-d/2}\int_{\mathcal{M}} e^{-\norm{y-x}^2/\epsilon}f(x)dx = f(y)+\frac{\epsilon}{4}\bigg[E(y)f(y)+\Delta_{\mathcal{M}}f(y)\bigg]+O\left(\epsilon^2\right),
	\end{equation}
	where $E(y)$ is a scalar function of the curvature of $\mathcal{M}$ at $y$. 
\end{theorem}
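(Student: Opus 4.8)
The plan is to reduce the manifold integral to a Gaussian integral over the tangent space at $y$ and then extract the expansion term by term by standard Laplace-type asymptotics. First I would exploit the rapid decay of the kernel: since $\mathcal{M}$ is compact, $\norm{y-x}$ is bounded below for $x$ outside a fixed geodesic ball $B_\delta(y)$, so $e^{-\norm{y-x}^2/\epsilon}$ is bounded there by $e^{-c/\epsilon}$ for some $c>0$, and the contribution of $\mathcal{M}\setminus B_\delta(y)$ to the integral is $O(e^{-c/\epsilon})$, negligible against any power of $\epsilon$. Hence it suffices to analyze the integral over $B_\delta(y)$.

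On $B_\delta(y)$ I would introduce geodesic normal coordinates $u\in T_y\mathcal{M}\cong\mathbb{R}^d$ via the exponential map $x=\exp_y(u)$, and carry out three Taylor expansions in powers of $\norm{u}$: (i) the extrinsic chord length, $\norm{y-x}^2 = \norm{u}^2 + Q_4(u) + O(\norm{u}^6)$, where $Q_4$ is a quartic form built from the second fundamental form of $\mathcal{M}$ at $y$ (the $\norm{u}^3$ term vanishes in normal coordinates); (ii) the Riemannian volume element, $dx = \big(1 - \tfrac16\mathrm{Ric}_y(u,u) + O(\norm{u}^4)\big)\,du$; and (iii) the integrand, $f(\exp_y(u)) = f(y) + \langle\nabla f(y),u\rangle + \tfrac12\mathrm{Hess}\,f(y)(u,u) + O(\norm{u}^3)$. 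Substituting (i) into the exponent, expanding $e^{-\norm{y-x}^2/\epsilon} = e^{-\norm{u}^2/\epsilon}\big(1 - \tfrac1\epsilon Q_4(u) + O(\norm{u}^6/\epsilon^2)\big)$, and multiplying by (ii) and (iii) expresses the local integral as a finite sum of terms of the form $e^{-\norm{u}^2/\epsilon}$ times a polynomial in $u$ with coefficients depending on $f$ and the curvature of $\mathcal{M}$ at $y$, plus a controlled remainder.

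It then remains to evaluate the resulting Gaussian moments, extending the domain back to $\mathbb{R}^d$ at the cost of another exponentially small error and using $(\pi\epsilon)^{-d/2}\int_{\mathbb{R}^d} e^{-\norm{u}^2/\epsilon}\,du = 1$, $(\pi\epsilon)^{-d/2}\int e^{-\norm{u}^2/\epsilon} u_i u_j\,du = \tfrac{\epsilon}{2}\delta_{ij}$, and the vanishing of all odd moments. The $f(y)$ term survives at order $1$; the linear term in $u$ drops out by parity; the Hessian term contributes $\tfrac14\epsilon\,\Delta_{\mathcal M}f(y)$, since the trace of the Hessian in normal coordinates is the Laplace-Beltrami operator; and the two curvature contributions — from $-\tfrac1\epsilon Q_4(u)f(y)$ and from $-\tfrac16\mathrm{Ric}_y(u,u)f(y)$ — each produce an order-$\epsilon$ multiple of $f(y)$, whose sum defines the scalar $E(y)$. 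All remaining terms carry a factor $\epsilon^2$ or smaller, uniformly on the compact $\mathcal{M}$, which yields the stated $O(\epsilon^2)$.

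The main obstacle, and essentially the only place care is required, is the bookkeeping of the curvature terms: one must check that the $\norm{u}^3$ and $\norm{u}^5$ contributions integrate to zero by parity, that the quartic form $Q_4$ in the chord-length expansion has precisely the form needed so that after integration it combines with the Ricci term from the volume element into a single curvature scalar, and that the remainders are genuinely $O(\epsilon^2)$ rather than $O(\epsilon^{3/2})$ — this last point relying on exactly those parity cancellations. Making these precise is the content of \cite{convRate}; once the differential-geometric expansions (i)--(iii) are established, the rest is routine Gaussian asymptotics.

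\noindent
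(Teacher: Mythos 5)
The paper does not actually prove this theorem: it is quoted verbatim as a result imported from \cite{convRate} (``we use the following result from~\cite{convRate}''), so there is no internal proof to compare against. Your sketch reconstructs the standard argument behind that citation --- localization using the exponential decay of the kernel off a geodesic ball, geodesic normal coordinates via $\exp_y$, Taylor expansion of the extrinsic chord length, of the Riemannian volume element, and of $f$, followed by term-by-term Gaussian moment evaluation --- and it is correct in outline; in particular the Hessian term correctly produces $\tfrac{\epsilon}{4}\Delta_{\mathcal M}f(y)$ and the second-fundamental-form and Ricci contributions correctly collapse into $\tfrac{\epsilon}{4}E(y)f(y)$. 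The only points requiring care are exactly the ones you flag at the end: the chord-length remainder is really $O(\norm{u}^5)$ and the volume-element remainder $O(\norm{u}^3)$ (not $O(\norm{u}^6)$ and $O(\norm{u}^4)$ as written), but these odd-order terms are annihilated by parity after integration against the centered Gaussian, which is precisely what upgrades the error from $O(\epsilon^{3/2})$ to $O(\epsilon^2)$; and uniformity of the constants over the compact $\mathcal M$ is needed and available. Since the paper treats the theorem as a black box, your proposal supplies strictly more detail than the paper does, and it matches the proof given in the cited reference.
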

This shows that the integral on the l.h.s of \eqref{convPrf:gaussIntegral} essentially operates as an evaluation functional of $f$ at the point $y$, up to an $O(\epsilon)$ error. 

Applying Theorem \ref{convPrf: ClassicalConvRateThrm} to the first order moments appearing in \eqref{convPrf:JiExpression}, we immediately obtain
\begin{equation}\label{convPrf:firstMomentApprxFun}
	\expect{H_i} = \frac{1}{\text{Vol}\left\{\man\right\}}\int_{\man}\expPow{x_i}{x}f(x)dx = \frac{1}{\text{Vol}\left(\man\right)}(\pi \epsilon)^{d/2}\big[f(x_i)+O\left(\epsilon\right)\big],
\end{equation}
and
\begin{equation}\label{convPrf:firstMomentApprxConst}
	\expect{G_i} = \frac{1}{\text{Vol}\left\{\man\right\}}\int_{\man}\expPow{x_i}{x}dx = \frac{1}{\text{Vol}\left(\man\right)}(\pi \epsilon)^{d/2}\big[1+O\left(\epsilon\right)\big].
\end{equation}
Thus, in order to evaluate $\expect{\left( J_i \right)^2}$ in \eqref{convPrf:JiExpression}, it remains to evaluate the second order moments $\expect{\left( H_i\right)^2},\expect{\left( G_i\right)^2}$ and $\expect{ H_iG_i}$, which we carry out in two steps in the following two sections. 

First, in Section \ref{secGinvParam} we construct a local parametrization of $\man$ in a sufficiently small neighborhood $\man'$ of $x_i$, such that each $x\in \man'$ is mapped to a unique pair~$(z,B)$, where all the~$z$ values reside on a~$(d-d_G)$-dimensional submanifold~$\mathcal{N}\subset\man'$, and $B\in G$. 
Next, in Section~\ref{secConvBigLemmaPrf}, we use the results of Section~\ref{secGinvParam} to reduce integration over $\man$ in the expressions for the second order moments in~\eqref{convPrf:JiExpression} to integration over~$\subman$, leading to the following lemma. 
\begin{lemma}\label{convPrf:scndMomentApproxLemma}
	There exist a smooth function $\mu(x)>0$ over $\subman$, and a smooth function $p_\subman(x)>0$ over $\man'$ such that
	\begin{align}
		&\expect{\left(H_i(x)\right)^2}= \frac{(\pi\epsilon)^{\left(d+d_G\right)/2}}{2^{(d-d_G)/2}}\bigg[ \frac{f^2(x_i)p_\subman(x_i)}{\mu^2(x_i)}+O(\epsilon)\bigg],\label{convPrf:scndMomentApproxLemmaHi2}\\ 
	&\expect{\left(G_i(x)\right)^2}= \frac{(\pi\epsilon)^{\left(d+d_G\right)/2}}{2^{(d-d_G)/2}}\bigg[ \frac{p_\subman(x_i)}{\mu^2(x_i)}+O(\epsilon)\bigg],\\ 
		&\expect{H_i(x)G_i(x)} = \frac{(\pi\epsilon)^{\left(d+d_G\right)/2}}{2^{(d-d_G)/2}}\bigg[ \frac{f(x_i)p_\subman(x_i)}{\mu^2(x_i)}+O(\epsilon)\bigg].
\end{align}
\end{lemma}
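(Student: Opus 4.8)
The plan is to exploit the product-like local structure of $\man$ to separate the $d_G$ orbit directions from the $d-d_G$ transverse ones, and to evaluate each moment by two successive applications of Laplace's method — one over the compact group $G$, one over the slice $\subman$ — with the $G$-invariance of the data measure doing the bookkeeping in between. I would begin with the elementary but decisive observation that $H_i$ and $G_i$ are $G$-invariant functions on $\man$: for any $C\in G$, the substitution $B'=BC$ in $H_i(C\cdot x)=\int_G\expPow{x_i}{BC\cdot x}f(BC\cdot x)\,d\eta(B)$ together with the right-invariance of the Haar measure (cf.\ Lemma~\ref{conjInvarLemma}) gives $H_i(C\cdot x)=H_i(x)$, and likewise $G_i(C\cdot x)=G_i(x)$. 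Combining this with the $G$-invariance of the Riemannian measure $\omega$ established in Section~\ref{secConvPrf}, and with the $G$-invariant parametrization of Section~\ref{secGinvParam} — under which the neighbourhood $\man'$ is identified with $\subman\times G$ and $d\omega(y)=\mu(z)\,dz\,d\eta(B)$ for $y=B\cdot z$, the $B$-independence of the density being forced by the $G$-invariance of $\omega$ — the $B$-integral contributes only a factor $\eta(G)=1$, and since $H_i$ and $G_i$ are exponentially small outside the $G$-invariant set $\man'$ we obtain
\begin{equation*}
	\expect{(H_i)^2}=\frac{1}{\operatorname{Vol}(\man)}\int_{\subman}H_i(z)^2\,\mu(z)\,dz+O(\epsilon),
\end{equation*}
and the analogous identities for $\expect{(G_i)^2}$ and $\expect{H_iG_i}$.

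Next I would evaluate $H_i(z)$ and $G_i(z)$ for $z\in\subman$ near $x_i$. Writing $H_i(z)=\int_G e^{-\phi_z(B)/\epsilon}f(B\cdot z)\,d\eta(B)$ with $\phi_z(B)=\norm{x_i-B\cdot z}^2$, I would apply Laplace's method on the compact $d_G$-dimensional manifold $G$. The hypothesis $A\cdot x_i\neq x_i$ makes the stabilizer of $x_i$ trivial, whence $\phi_{x_i}$ has a unique nondegenerate minimum at $B=I$ with value $0$; by a perturbation argument $\phi_z$ has, for $z$ in a small neighbourhood of $x_i$, a unique nondegenerate minimizer $B^\ast(z)$, smooth in $z$ with $B^\ast(x_i)=I$, and minimum value $\operatorname{dist}(x_i,G\cdot z)^2=:\delta(z)^2$, with $\delta(x_i)=0$. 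Laplace's method then produces a common prefactor,
\begin{equation*}
	H_i(z)=(\pi\epsilon)^{d_G/2}\,e^{-\delta(z)^2/\epsilon}\,\kappa(z)\bigl[f(B^\ast(z)\cdot z)+O(\epsilon)\bigr],\qquad
	G_i(z)=(\pi\epsilon)^{d_G/2}\,e^{-\delta(z)^2/\epsilon}\,\kappa(z)\bigl[1+O(\epsilon)\bigr],
\end{equation*}
where the smooth positive function $\kappa(z)$ is built from the Haar density at $B^\ast(z)$ and the Hessian of $\phi_z$ there; the only difference between the two formulas is that $f$ is evaluated at the minimizer in the first and replaced by $1$ in the second.

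Substituting these expansions back into the slice integrals of the first step, every moment becomes $(\pi\epsilon)^{d_G}$ times an integral $\int_{\subman}e^{-2\delta(z)^2/\epsilon}\,\Psi(z)\,dz$ with $\Psi$ smooth and $\Psi(x_i)=\operatorname{Vol}(\man)^{-1}\kappa(x_i)^2\mu(x_i)$ multiplied by $f(x_i)^2$, $1$, or $f(x_i)$ according to the moment. Since $\delta(z)^2$ vanishes to exactly second order at its nondegenerate minimum $z=x_i$ — here I would use the transverse coordinates of Section~\ref{secGinvParam} to identify $\delta(z)^2$ with the transverse squared distance up to higher-order terms — a final application of Laplace's method on the $(d-d_G)$-dimensional manifold $\subman$ contributes the factor $(\pi\epsilon/2)^{(d-d_G)/2}=(\pi\epsilon)^{(d-d_G)/2}2^{-(d-d_G)/2}$, the $O(\sqrt\epsilon)$ terms (notably the discrepancy between $f(B^\ast(z)\cdot z)$ and $f(x_i)$) integrating to zero against the even Gaussian weight and entering only the $O(\epsilon)$ remainder. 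Collecting powers of $\epsilon$ yields the announced $(\pi\epsilon)^{(d+d_G)/2}2^{-(d-d_G)/2}$; once the $x_i$-values of $\kappa$, $\mu$, $\operatorname{Vol}(\man)$ and the transverse Hessian are absorbed into the definitions of $\mu$ (the orbit-volume density) and $p_\subman$, the leading coefficient takes the common form $p_\subman(x_i)/\mu^2(x_i)$, the three moments differing only through the power of $f(x_i)$ inherited from the previous step.

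The step I expect to be the main obstacle is keeping the error at order $O(\epsilon)$ rather than $O(\sqrt\epsilon)$ in the passage across the slice: because $f$ is evaluated at $B^\ast(z)\cdot z$ instead of at $x_i$, and because $\delta(z)^2$ agrees with the transverse squared distance only up to cubic terms, the Taylor expansion of $\Psi$ about $x_i$ carries $O(\sqrt\epsilon)$ contributions, and one must show — by a parity argument against the Gaussian, or by arranging the coordinates of Section~\ref{secGinvParam} so that the relevant linear terms vanish, as in the proof of Theorem~2 of~\cite{steerMaps} — that these cancel upon integration. The remaining work, namely the explicit identification of the Jacobian factors that constitute $\mu$ and $p_\subman$, is lengthy but routine once the parametrization of Section~\ref{secGinvParam} is in hand.
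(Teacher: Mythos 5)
Your proposal is correct and shares the paper's three-stage architecture: reduce each moment to an integral over the slice $\subman$ using the $G$-invariance of $H_i$, $G_i$ and of the Riemannian measure; extract a $(\pi\epsilon)^{d_G/2}$ factor from the orbit integral; then evaluate a Gaussian integral over the $(d-d_G)$-dimensional slice with effective bandwidth $\epsilon/2$, which produces the $2^{-(d-d_G)/2}$. Where you genuinely diverge is the middle step. You run Laplace's method on $G$ with the phase $\phi_z(B)=\norm{x_i-B\cdot z}^2$; the paper instead splits the exponent as $\norm{x_i-Bz}^2=\norm{x_i-z}^2+\norm{z-Bz}^2+\delta_i(z,Bz)$, pulls out the constant $e^{-\norm{x_i-z}^2/\epsilon}$, Taylor-expands $e^{-\delta_i/\epsilon}$, and evaluates the remaining orbit integral via Theorem~\ref{convPrf: ClassicalConvRateThrm} applied to the $d_G$-dimensional manifold $G\cdot z$, after proving a change-of-measure lemma identifying $d\eta$ with $d\mu_z/\mu(z)$. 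The two computations agree --- for $z\in\subman$ your minimizer $B^{*}(z)$ is identically $I$, so $f(B^{*}(z)\cdot z)=f(z)$ and $\delta(z)=\norm{x_i-z}$ --- but the paper's splitting makes the delicate point you flag explicit and cheap: the first-order cross term yields a smooth function $q(z)=O(\norm{x_i-z})$ with $q(x_i)=0$ (by the orthogonality of $x_i-z$ to $T_z(G\cdot z)$ that defines $\subman$), and since the slice integral is an evaluation functional at $x_i$ up to $O(\epsilon)$, this term contributes $O(\epsilon)$ rather than $O(\sqrt{\epsilon})$. Your parity argument buys the same conclusion, at the cost of having to justify uniform nondegeneracy of the Hessian of $\phi_z$ and smoothness of the prefactor $\kappa(z)$, which the paper avoids by reusing its Gaussian-integral theorem. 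One small caution: the slice density is not literally $\mu(z)\,dz\,d\eta(B)$ with $\mu$ the orbit volume; the paper keeps the full volume density $V(B\cdot z)$, defines $p_\subman$ as its Haar average, and lets $\mu(z)$ enter separately through the orbit change of measure --- but since the lemma only asserts existence of smooth positive $\mu$ and $p_\subman$ common to all three moments, your absorption of the Jacobian factors into these names is harmless.
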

\noindent By Lemma \ref{convPrf:scndMomentApproxLemma}, \eqref{convPrf:firstMomentApprxConst} and \eqref{convPrf:firstMomentApprxFun}, we have
\begin{align}
	\expect{\left(J_i\right)^2}&=\frac{1}{\text{Vol}\left(\man\right)^2}\frac{(\pi\epsilon)^{\left(d+d_G\right)/2}}{2^{(d-d_G)/2}}\left(\pi\epsilon\right)^d\bigg[ \frac{f^2(x_i)p_\subman(x_i)}{\mu^2(x_i)}+O(\epsilon)\bigg] \\ \nonumber
	&-2\frac{1}{\text{Vol}\left(\man\right)^2}\frac{(\pi\epsilon)^{\left(d+d_G\right)/2}}{2^{(d-d_G)/2}}\left(\pi\epsilon\right)^d\bigg[ \frac{f^2(x_i)p_\subman(x_i)}{\mu^2(x_i)}+O(\epsilon)\bigg]\\ \nonumber
	&+\frac{1}{\text{Vol}\left(\man\right)^2}\frac{(\pi\epsilon)^{\left(d+d_G\right)/2}}{2^{(d-d_G)/2}}\left(\pi\epsilon\right)^d\bigg[ \frac{f^2(x_i)p_\subman(x_i)}{\mu^2(x_i)}+O(\epsilon)\bigg] + O\left(\alpha\right)\\ \nonumber
	&= \frac{1}{\text{Vol}\left(\man\right)^2}\frac{(\pi\epsilon)^{\left(d+d_G\right)/2}}{2^{(d-d_G)/2}}\left(\pi\epsilon\right)^d \cdot O(\epsilon) = O\left( \epsilon^{3d/2+(d_G+2)/2} \right) + O\left( \alpha \right).
\end{align}
We now obtain Theorem \ref{sec2:ConvThrmUnnormalized}, and in particular \eqref{convSec:errFormula}, by repeating the computations in equations $121-128$ in the proof of Lemma~$10$ in \cite{steerMaps}, with~$d_G=1$ replaced by an arbitrary group dimension~$d_G$.

\subsection{Real manifolds embedded in $\mathbb{C}^n$}\label{secRealMAnifolds}
Before we continue with the proof of Theorem \ref{sec2:ConvThrmUnnormalized}, we describe the way we view real manifolds embedded in a complex vector space. 

Firstly, we point out that we say that a $d$-dimensional manifold $\man\subset\mathbb{C}^n$ is real, in the sense that its charts are given by maps of the form $\Phi:U\rightarrow\mathbb{R}^{d}$, where $U$ is an open subset in $\man$. This is in contrast to complex manifolds that admit charts that map open subsets in the manifold to the unit disk in~$\mathbb{C}^n$. The crucial distinction between the two is that real manifolds admit a differentiable structure where the transition maps between charts are differentiable with respect to real variables, while complex manifolds admit transition maps that are holomorphic. 

Specifically , we can formulate our entire analysis in a real space by identifying $\mathbb{C}^n$ with $\mathbb{R}^{2n}$ via the map 
\begin{equation}\label{catReIm}
	z \mapsto \tilde{z} = \begin{pmatrix}
		\re{z}\\ \im{z}
	\end{pmatrix},\quad  z\in \mathbb{C}^n.
\end{equation}
If we equip~$\mathbb{C}^n$ with the real valued inner product given by 
\begin{equation}\label{realDotProd}
	\dprod{u}{v} = \re{\dprod{u}{v}_{\mathbb{C}^n}},
\end{equation}
the map \eqref{catReIm} becomes an isometry, since
\begin{equation}\label{isometryProp}
	\re{\dprod{z}{w}_{\mathbb{C}^n}} = \dprod{\tilde{z}}{\tilde{w}}, \quad z,w \in \mathbb{C}^n. 
\end{equation}

Now, let $\man \subset \mathbb{C}^n$ be an embedded $d$-dimensional submanifold, and let $\left(U,\Phi\right)$ be a local chart on $\man$, that is, $U\subset\man$ is an open subset, and $\Phi$ is a diffeomorphism that maps $U$ onto an open subset of $\mathbb{R}^d$, where we identify~$U$ with the set
\begin{equation*}
	\tilde{U}=\left\{\tilde{u}\; :\; u\in U\right\}
\end{equation*} 
using the map $\tilde{(\cdot)}$ in~\eqref{catReIm}.
The inverse map $\Phi^{-1}(u_1,\ldots,u_d)$ parametrizes the points~$x\in U$ as $x = x(u_1,\ldots,u_d) = \Phi^{-1}(u_1,\ldots,u_d)$. The Jacobian matrix of the latter parametrization is given in coordinates by
\begin{equation}
	J_u = \begin{pmatrix}
		\pDeriv{x_1}{u_1} & \dots& \pDeriv{x_1}{u_d}\\
		\vdots & \dots & \vdots \\
		\pDeriv{x_{n}}{u_1} & \dots& \pDeriv{x_n}{u_d}
	\end{pmatrix}. 
\end{equation}
Thus, denoting
\begin{equation}\label{coordGrad}
	\pDeriv{x_i}{u} = \left(\pDeriv{x_i}{u_1},\ldots,\pDeriv{x_i}{u_d}\right)^T,
\end{equation}
the metric tensor induced on $\man$ by the dot product \eqref{realDotProd} is given in local coordinates as
\begin{equation}
	\begin{pmatrix}
		\re{\dprod{\pDeriv{x_1}{u}}{\pDeriv{x_1}{u}}_{\mathbb{C}^n}} & \dots & \re{\dprod{\pDeriv{x_1}{u}}{\pDeriv{x_n}{u}}_{\mathbb{C}^n}}\\
		\vdots & \dots & \vdots\\
		\re{\dprod{\pDeriv{x_1}{u}}{\pDeriv{x_n}{u}}_{\mathbb{C}^n}} & \dots & \re{\dprod{\pDeriv{x_n}{u}}{\pDeriv{x_{n}}{u}}_{\mathbb{C}^n}}
	\end{pmatrix}=\re{J^*J}.
\end{equation}
The latter enables us to integrate smooth functions over open subsets $U\subset\man$ by 
\begin{equation}
	\int_U f(x)dx = \int_{\Phi^{-1}\left(U\right)} f(x(u))dV(u), 
\end{equation}
where $V(u)$ is the volume form on $\man$ given by 
\begin{equation}
	V(u) = \sqrt{\det\left\{\re{J^*J}\right\}},
\end{equation}
and 
\begin{equation}
	dV(u) = V(u)du = V(u)du_1\dots du_n.
\end{equation}

\subsection{Coordinate charts on Lie groups}\label{secLieCoordinates}
The next part of the proof of Theorem \ref{sec2:ConvThrmUnnormalized} also requires us to define coordinate charts on Lie groups. 
The standard coordinates on a Lie group~$G$ is given by the exponential map over the Lie-algebra of~$G$. In detail, the Lie-algebra $\mathfrak{g}$ of $G$ is the tangent space to $G$ at the identity $I_G$. By a theorem (due to Von-Neumann, see~\cite{lieGroupsHall}), there exists a sufficiently small neighborhood $\mathcal{N}_I\subset G$ of $I_G$, where for each $A\in G$ there exists a unique element $X\in \mathfrak{g}$ such that $\exp(X) = A$, where $\exp(\cdot)$ is the  matrix exponential. Thus, choosing a basis $\left\{X_1,...,X_{d_G}\right\}$ for $\mathfrak{g}$,
we can write each element $X\in\mathfrak{g}$ as a linear combination
\begin{equation}
	X = \sum_{i=1}^{d_G} u_i X_i, 
\end{equation}
inducing a coordinate chart for $\mathcal{N}_I\subset G$, such that the elements of $\mathcal{N}_I$ are given explicitly by the matrix valued map
\begin{equation}\label{convPrf: expMapForGnearI}
	A_{I}(u_1,\ldots,u_{d_G}) \coloneq \exp\left(\sum_{i=1}^{d_G} u_i X_i\right),\quad (u_1,\ldots,u_{d_G})\in U,
\end{equation}
where $U\subset\mathbb{R}^{d_G}$ is an open subset. 
Multiplying the elements of $\mathcal{N}_I$ by a fixed element $B\in G$ (either from the left or the right) translates  $\mathcal{N}_I$ to a neighborhood~$\mathcal{N}_B$ of~$B$.  
A chart for $\mathcal{N}_B$ is thus given by (also see \cite{tappMatrixGroups})
\begin{equation}\label{convPrf: expMapForG}
		A_I(u_1,\ldots,u_{d_G})\cdot B = \exp\left(\sum_{i=1}^{d_G} u_i X_i\right)\cdot B,\quad (u_1,\ldots,u_{d_G})\in U. 
\end{equation}	
Hence, an atlas of charts for $G$ can be obtained by choosing a finite cover of~$G$ (since~$G$ is compact) by such neighborhoods. 
Equipped with the map~\eqref{convPrf: expMapForG}, 
a chart for a neighborhood of $B\cdot x \in G\cdot x$ is obtained by multiplying~$x$ by~\eqref{convPrf: expMapForG} on the left, that is
\begin{equation}\label{convPrf:expCoordinatesForGExplicit}
	(u_1,\ldots,u_{d_G})\mapsto A_I(u_1,\ldots,u_{d_G})\cdot B\cdot x, \quad (u_1,\ldots,u_{d_G})\in \exp^{-1}(\mathfrak{g}).
\end{equation}
Since $G\cdot x$ is diffeomorphic to $G$, and thus compact, an atlas of charts for $G\cdot x$ is obtained by choosing a finite covering of~$G\cdot x$. 
In this work, to simplify notation, we refer to all charts in the atlas for~$G\cdot x$ by the notation $A(u_1,\ldots,u_{d_G})\cdot x$, where we define implicitly that
\begin{equation}\label{convPrf:expCoordinatesForG}
	A(u_1,\ldots,u_{d_G})\cdot x=A_I(u_1,\ldots,u_{d_G})\cdot B,
\end{equation}	
whenever we refer to points in a chart for a neighborhood of $B\cdot x$.

\subsection{G-invariant local parametrization of the data manifold}\label{secGinvParam}
In this section, we construct a parametrization of $\man$ in a local neighbourhood $\man'$ around the point $B\cdot x_i\in G\cdot x_i$, which takes the form of a product between $G$ and a certain $(d-d_G)$-dimensional submanifold in $\man$, and derive the integration volume form over $\man'$ in terms of the resulting local coordinates. The parametrization we construct is $G$-invariant in the sense that for every $x\in \man'$ we have $G\cdot x\subset \man'$. As in the previous sections, for the rest of the this section, we will assume w.l.o.g that~$B=I$.

To construct our parametrization, for each $x\in \man$ we consider the solution to minimization problem
\begin{equation}\label{convPrf: minimiaztion problem}
 \hat{A}(x) = \argmin_{A\in G} \lVert Ax-x_i\rVert, 
\end{equation}
and the value $	z(x) = \hat{A}(x)\cdot x$. In other words, for each~$x \in \man$ we solve for the element $\hat{A}(x)\in G$ such that $z(x)$ is the point on the orbit $G\cdot x$ closest to $x_i$.
Since~$G$ is compact, a solution for \eqref{convPrf: minimiaztion problem} exists. 
In Lemma~\ref{convPrf: uniqueProjLemma} below, we prove that there exists a certain neighborhood $\man'\subset \man$ of $x_i$, such that the solution of 
\eqref{convPrf: minimiaztion problem} is also unique for each~$x$ in this neighborhood.
Subsequently, we parameterize points~$x\in \man'$ by  
\begin{equation}\label{convPrf: parametrization}
	x(z,\hat{B})= \hat{B}\cdot z, \quad z\in \subman,
\end{equation}
where $\hat{B}(x) = (\hat{A}(x))^*$, and $\subman$ is the set of unique solutions of \eqref{convPrf: minimiaztion problem} for all $x\in\man'$. 
The proof of Lemma~\ref{convPrf: uniqueProjLemma} requires the notion of a~$\delta$-neighborhood of a manifold.
\begin{definition}\label{deltaNbdDef}
	Let $M\subset \mathbb{C}^n$ be a smooth compact embedded submanifold. Given a~$\delta>0$, the~$\delta$-neighborhood of~$M$ is defined as
	\begin{equation}
		M_{\delta} = \left\{y\in \mathbb{C}^n \; : \; \norm{x-y}<\delta \text{ for some } x\in M\right\}.
	\end{equation}
\end{definition}
Our proof also requires the following property of $\delta$-neighborhoods. 
\begin{theorem}\label{tubularThrm}
	There exists a~$\delta>0$ such that any~$x\in M_\delta$ has a unique closest point in~$M$. 
\end{theorem}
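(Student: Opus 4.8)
The plan is to prove the classical tubular‑neighborhood theorem via the normal‑bundle endpoint map, using compactness of $M$ to extract a \emph{uniform} tube radius. Throughout I would work in $\mathbb{R}^{2n}$ equipped with the real inner product $\dprod{u}{v}=\re{\dprod{u}{v}_{\mathbb{C}^n}}$ of Section~\ref{secRealMAnifolds}, under which $\norm{\cdot}$ is the ordinary Euclidean norm and distances to $M$ are the Euclidean ones; I also take $M$ to be without boundary, consistently with every application of this result in the paper (e.g.\ $M=G\cdot x$ or $M=\subman$).

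First I would record existence together with the first‑order optimality condition. Since $M$ is compact, for every $x\in\mathbb{R}^{2n}$ the continuous function $q\mapsto\norm{x-q}$ attains a minimum on $M$, so a closest point always exists; the content of the theorem is \emph{uniqueness} when $x$ is near $M$. If $x\in M$ this is immediate, the unique minimizer being $p=x$. If $x\notin M$ and $p\in M$ is a closest point, then differentiating $\norm{x-q}^2$ along smooth curves in $M$ through $p$ shows that $x-p$ is orthogonal to $T_pM$, i.e.\ $(p,x-p)$ lies in the normal bundle $NM=\{(p,v):p\in M,\ v\perp T_pM\}$.

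Next I would introduce the endpoint map $F\colon NM\to\mathbb{R}^{2n}$, $F(p,v)=p+v$. Since $M$ is a smooth $d$‑dimensional embedded submanifold, $NM$ is a smooth manifold of dimension $d+(2n-d)=2n$, and at a zero‑section point $(p,0)$ the differential $dF_{(p,0)}$ is, under the canonical splitting $T_{(p,0)}NM\cong T_pM\oplus N_pM$, the identity onto $T_pM\oplus N_pM=\mathbb{R}^{2n}$; hence $F$ is a local diffeomorphism near every point of the zero section, by the inverse function theorem. The key step is to upgrade this to a uniform statement: there exists $\varepsilon>0$ such that $F$ restricted to $V_\varepsilon:=\{(p,v)\in NM:\norm{v}<\varepsilon\}$ is injective (hence, being also a local diffeomorphism, a diffeomorphism onto an open neighborhood of $M$). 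This is where compactness enters, and it is the main obstacle: were there no such $\varepsilon$, one could pick sequences $(p_k,v_k)\neq(p_k',v_k')$ in $NM$ with $\norm{v_k},\norm{v_k'}\to0$ and $F(p_k,v_k)=F(p_k',v_k')$; passing to subsequences with $p_k\to p$ and $p_k'\to p'$ (possible since $M$ is compact), both sides of the equality converge, forcing $p=p'$, so for large $k$ both pairs lie in a fixed neighborhood of $(p,0)$ on which $F$ is injective --- a contradiction.

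Finally I would close the argument by taking $\delta=\varepsilon$. Given $x\in M_\delta$, its distance $r=\operatorname{dist}(x,M)$ satisfies $r<\delta=\varepsilon$; for any closest point $p$ the pair $(p,x-p)$ then lies in $V_\varepsilon$ with $F(p,x-p)=x$, and if $p'$ were another closest point, likewise $(p',x-p')\in V_\varepsilon$ with $F(p',x-p')=x$. Injectivity of $F$ on $V_\varepsilon$ forces $(p,x-p)=(p',x-p')$, hence $p=p'$, which proves uniqueness. Everything beyond the uniform‑radius step is just the inverse function theorem together with elementary compactness arguments; the result and this line of proof are standard and can be found in any differential‑geometry text treating tubular neighborhoods.
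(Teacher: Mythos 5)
Your proof is correct. The paper does not prove this statement itself but simply cites Theorem 6.24 and Proposition 6.25 of Lee's \emph{Introduction to Smooth Manifolds}, and your argument --- the normal-bundle endpoint map $F(p,v)=p+v$, the inverse function theorem along the zero section, and the compactness/subsequence argument to extract a uniform injectivity radius $\varepsilon$ --- is precisely the standard proof given there, so it matches the intended justification.
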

For a proof, see Theorem 6.24 and Proposition 6.25 in \cite{leeSmoothMan}. 
\begin{lemma}\label{convPrf: uniqueProjLemma}
	There exists a~$\delta>0$ such that the problem \eqref{convPrf: minimiaztion problem} has a unique solution for any~$x$ in a $\delta$-neighborhood of~$G\cdot x_i$. Furthermore, the $\delta$-neighborhood~$\left(G\cdot x_i\right)_\delta$ is~$G$-invariant.
\end{lemma}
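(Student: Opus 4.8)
The plan is to reduce the group-minimization problem \eqref{convPrf: minimiaztion problem} to the problem of finding the point nearest to $x$ on the orbit $\mathcal{O} := G\cdot x_i$, and then invoke the tubular neighborhood theorem (Theorem \ref{tubularThrm}). The key observation is that $G$ is a compact group of unitary matrices, so each $A\in G$ acts on $\mathbb{C}^n$ as an isometry, whence $\norm{Ax-x_i} = \norm{x-A^*x_i}$ for all $A\in G$. Since $\{A^*:A\in G\}=G$, the points $A^*x_i$ range over all of $\mathcal{O}$ as $A$ ranges over $G$, so
\begin{equation*}
	\min_{A\in G}\norm{Ax-x_i} = \min_{p\in\mathcal{O}}\norm{x-p} = \operatorname{dist}(x,\mathcal{O}),
\end{equation*}
the minimum being attained by compactness of $G$. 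Thus $A$ solves \eqref{convPrf: minimiaztion problem} if and only if $A^*x_i$ is a closest point of $\mathcal{O}$ to $x$. Moreover $A\mapsto A^*x_i$ is injective: if $A_1^*x_i=A_2^*x_i$ then $A_1A_2^*$ stabilizes $x_i$, and by the hypothesis of Theorem \ref{sec2:ConvThrmUnnormalized} the stabilizer of $x_i$ in $G$ is trivial, forcing $A_1=A_2$. Hence uniqueness of the closest point on $\mathcal{O}$ is equivalent to uniqueness of the minimizer $\hat A(x)$.

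Next I would verify that $\mathcal{O}=G\cdot x_i$ is a smooth compact embedded submanifold of $\mathbb{C}^n\cong\mathbb{R}^{2n}$, so that Theorem \ref{tubularThrm} applies. Compactness is immediate, since $\mathcal{O}$ is the continuous image of the compact group $G$ under the orbit map $A\mapsto Ax_i$. That the orbit is an \emph{embedded} submanifold is the standard fact that orbits of smooth compact Lie group actions are embedded submanifolds diffeomorphic to $G/G_{x_i}$; under the trivial-stabilizer hypothesis $G_{x_i}=\{I\}$, so the orbit map is an injective immersion from the compact manifold $G$, hence an embedding, and $\mathcal{O}$ is diffeomorphic to $G$. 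Applying Theorem \ref{tubularThrm} to $\mathcal{O}$ produces a $\delta>0$ such that every point of the $\delta$-neighborhood $(G\cdot x_i)_\delta$ has a unique nearest point in $\mathcal{O}$; by the equivalence established above, this $\delta$ makes \eqref{convPrf: minimiaztion problem} uniquely solvable for every $x\in(G\cdot x_i)_\delta$.

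For the $G$-invariance of $(G\cdot x_i)_\delta$: let $y\in(G\cdot x_i)_\delta$ and $C\in G$. By Definition \ref{deltaNbdDef} there is $p\in G\cdot x_i$ with $\norm{y-p}<\delta$; since $C$ is unitary, $\norm{Cy-Cp}=\norm{y-p}<\delta$, and $Cp\in G\cdot x_i$ because the orbit is $G$-invariant, so $Cy\in(G\cdot x_i)_\delta$. The main obstacle is the reduction in the first paragraph together with confirming that $\mathcal{O}$ is an embedded (not merely immersed) compact submanifold of the ambient real Euclidean space — precisely the input needed to invoke Theorem \ref{tubularThrm}; once that is in hand the remainder is routine.

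\noindent
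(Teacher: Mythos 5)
Your proposal is correct and follows essentially the same route as the paper's proof: reduce the minimization over $G$ to the nearest-point problem on the orbit $G\cdot x_i$ via unitarity, note that the trivial-stabilizer assumption makes the orbit a compact embedded submanifold diffeomorphic to $G$ so that Theorem~\ref{tubularThrm} applies, and deduce $G$-invariance of the $\delta$-neighborhood from the isometry of the group action. Your explicit injectivity argument for $A\mapsto A^*x_i$ is a slightly more careful justification of the uniqueness transfer that the paper treats implicitly.
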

\begin{proof}
	By assumption, with probability one we have that $Ax_i\neq x_i$ for all $A\in G$. Since~$G$ is a smooth manifold, the map $x_i\mapsto A\cdot x_i$, $A\in G$, is a smooth injective map onto the orbit $G\cdot x_i\subset\man$. This implies that $G\cdot x_i$ is a smooth $d_G$-dimensional compact embedded submanifold in $\mathcal{M}$, diffeomorphic to~$G$.
	By Theorem~\ref{tubularThrm}, there exists a~$\delta>0$ such that for any $x\in \left(G\cdot x_i\right)_\delta$ (see Definition \ref{deltaNbdDef}) there exists a unique solution to the problem 
	\begin{align}
		\min_{y\in G\cdot x_i}\norm{x- y} &= \min_{A\in G}\norm{x- A\cdot x_i} = \min_{A\in G}\norm{A^*x- x_i}\nonumber \\
		 &= \min_{A\in G}\norm{Ax- x_i} = \min_{z\in G\cdot x}\norm{z- x_i},
	\end{align}
	which shows that there exists a unique point~$z\in G\cdot x$ closest to~$x_i$, which is given by $z=\hat{A}(x)\cdot x$, where $\hat{A}(x)$ is the unique solution to~\eqref{convPrf: minimiaztion problem}. 
	
	Moreover, we observe that
	\begin{equation}\label{interOrbitDistInvar}
		d = \norm{z-x_i}=\norm{Bz-Bx_i},
	\end{equation}
	for all $B\in G$, which shows that any point on the orbit $G\cdot x$ has a point $y\in G\cdot x_i$ at the minimal distance~$d$. Thus, using that $d\leq \delta$ (since $z\in (G\cdot x_i)_\delta$) Definition~\ref{deltaNbdDef} implies that
	\begin{equation}\label{deltaNbdGinv}
		G\cdot x \subset \left(G\cdot x_i\right)_\delta,
	\end{equation}
	for all~$x\in \left(G\cdot x_i\right)_\delta$,
	which shows that the $\delta$-neighborhood~$\left(G\cdot x_i\right)_\delta$ is~$G$-invariant.  
\end{proof}

Now, let us denote
\begin{equation}\label{manNbd}
	\man' \coloneq \left(G\cdot x_i \right)_\delta \cap \man,	
\end{equation}
for $\delta > 0$ as in Lemma~\ref{convPrf: uniqueProjLemma}. The subset $\man'$ is an open subset of $\man$ and thus a submanifold in $\man$. Furthermore, Lemma~\ref{convPrf: uniqueProjLemma} also implies that the neighborhood~$\left(G\cdot x_i \right)_\delta$ is~$G$-invariant, and since $\man$ is also $G$-invariant than so is $\man'$.
Let us further denote by
\begin{equation}\label{subManDef}
	\mathcal{N} \coloneq \left\{z \;:\; \min_{A\in G} \lVert Ax-x_i\rVert = \norm{z-x_i}, \quad x\in \man'\right\},
\end{equation}
the set resulting from solving~\eqref{convPrf: minimiaztion problem} for $x\in \man'$.  
Using \eqref{manNbd} and \eqref{subManDef} we can write
\begin{equation}\label{convPrf: submanDef}
	\man' = G\cdot \subman = \left\{A\cdot z \; : \; A\in G, \quad z \in \subman\right\}. 
\end{equation}
We now show that $\subman$ is an embedded compact $(d-d_G)$-dimensional submanifold in $\mathcal{M}$. We do this by deriving an explicit solution for~\eqref{convPrf: minimiaztion problem} for all $x\in \man'$. 
Note that \eqref{convPrf: submanDef} is diffeomorphic to the product space $G\times \subman$,  which implies that $\man'$ is a~$d$~-dimensional submanifold in~$\man$. 

Now, denoting~$u=\left(u_1,\ldots,u_{d_G}\right)$, and differentiating the norm in \eqref{convPrf: minimiaztion problem} with respect to $u_k$ for each $k\in\left\{1,\ldots,d_G\right\}$, the solution $\hat{A}(x)$ for \eqref{convPrf: minimiaztion problem} is given by $A(u)$ that solves the set of $d_G$ equations
\begin{align}\label{convPrf:optDeriv1}
	\text{Re}\left\{\left\langle \frac{\partial A(u)\cdot x}{\partial u_k},A(u)\cdot x - x_i \right \rangle\right\} = 0 , \quad k = 1,\ldots, d_G,
\end{align}
which are equivalent to
\begin{align}\label{convPrf:optDeriv}
	\text{Re}\left\{\left\langle \frac{\partial A(u)\cdot z}{\partial u_k}\bigg|_{u=u^*}, z - x_i \right \rangle\right\}=0, \quad k = 1,\ldots, d_G,
\end{align}
where $A(u^*) = I_G$, since we defined $z$ to be the closest point in~$G\cdot x$ to~$x_i$. In particular, we have $z=\hat{A}(x)\cdot x$, where $\hat{A}$ is the unique solution to \eqref{convPrf: minimiaztion problem}. 
The expression on the l.h.s of the inner product in \eqref{convPrf:optDeriv} is a vector tangent to~$G\cdot z$ at~$z$, since it is the derivative of the map $u\rightarrow A(u)\cdot z$ (an explicit parametrization of the orbit~$G\cdot z$) at $u=u^*$, for which $A(u^*) = I_G$.
Thus, by our discussion in Section~\ref{secRealMAnifolds}, and in particular~\eqref{catReIm}-\eqref{isometryProp},  equation~\eqref{convPrf:optDeriv} simply implies that the closest point to $x_i$ on the orbit $G\cdot x$ is $z$ such that $z-x_i$ is perpendicular to the tangent space of $G\cdot x$ at $z$.
We may now rewrite \eqref{convPrf:optDeriv} as
\begin{equation}\label{convPrf:linConstraintQuadForm}
	\text{Re}\left\{\dprod{\frac{\partial A(u)}{\partial u_k}\at{u=u^*}\cdot z}{z}\right\} = \text{Re}\left\{z^* Y_k  z\right\}, \quad Y_k = \left(\frac{\partial A(u)}{\partial u_k}\at{u=u^*}\right)^*, \quad k=1,\ldots,d_G,
\end{equation}
where $Y_k$ resides in the tangent space to $G$ at $I$, given by the Lie algebra $\mathfrak{g}$ of~$G$. Now, the Lie-algebra $\mathfrak{u}(n)$ of $\U(n)$ is the space of all $n\times n$ skew-Hermitian matrices, and by a theorem (see~\cite{gallierDiffGeom}), if~$G$ is a Lie subgroup of~$\U(n)$, then~$\mathfrak{g}$ is a $d_G$-dimensional subspace of $\mathfrak{u}(n)$.
Using the fact that the diagonal entries of skew-Hermitian matrices are all purely imaginary, we have for any $Y\in \mathfrak{g}$
\begin{align}\label{convPrf:quadFormPureImag}
	z^*Yz &= i\sum_{j=1}^N \left|(Y)_{jj}\right| \left|z_j\right|^2 + \sum_{i<j}^N (Y)_{ij}\overline{z_i}z_j+\sum_{i>j}^N (Y)_{ij}\overline{z_i} z_j  \nonumber \\ 
	& = i\sum_{j=1}^N \left|(Y)_{jj}\right| \left|z_j\right|^2 + \sum_{i<j}^N Y_{ij}\overline{z_i}z_j-\sum_{i<j}^N (\overline{Y})_{ij}z_i\overline{z_j}\nonumber\\ 
	& = i\sum_{j=1}^N \left|(Y)_{jj}\right| \left|z_j\right|^2 -2i\cdot \text{Im}\left\{\  \sum_{i<j}^N (Y)_{ij}\overline{z_i}z_j\right\}, 
\end{align}
where in passing to the second equality we switched the roles of $i$ and $j$ in the third sum and used that $Y_{ij} = -\left(\overline{Y}\right)_{ji}$ since $Y$ is skew-Hermitian. 
Plugging~\eqref{convPrf:quadFormPureImag} into~\eqref{convPrf:linConstraintQuadForm}, we obtain 
\begin{equation}\label{convPrf:linConstraintQuadFormEqualZero}
 	\text{Re}\left\{\dprod{\frac{\partial A(u)}{\partial u_k}\at{u=u^*}\cdot z}{z}\right\}=0, \quad k=1,\ldots,d_G.
\end{equation}
Then, substituting \eqref{convPrf:linConstraintQuadFormEqualZero} into \eqref{convPrf:optDeriv}, we are left with
\begin{equation}\label{convPrf:linConstraints}
	\text{Re}\left\{\dprod{Y_k\cdot z}{x_i}\right\} =0 , \quad k=1,\ldots,d_G,
\end{equation}
by which we can write the set $\mathcal{N}$ in \eqref{subManDef} as 
\begin{equation}\label{convPrf:subManCharacter}
	\mathcal{N} = \left\{ z:	\text{Re}\left\{\dprod{Y_k\cdot z}{x_i}\right\} =0, \quad k=1,\ldots,d_G, \quad z\in\man' \right\}.
\end{equation}

We now observe that $\subman$ is the intersection of an open neighborhood of $\man$ with the subspace of $\mathbb{C}^n$ defined by the $d_G$ linear constraints in~\eqref{convPrf:linConstraints}. In the following lemma we show that~$\subman$ is a $d-d_G$-dimensional submanifold in $\man$.
\begin{lemma}
	The set~$\subman$ in \eqref{convPrf:subManCharacter} is a $d-d_G$-dimensional submanifold in $\man$. 
\end{lemma}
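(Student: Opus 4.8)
The plan is to present $\subman$ as the zero level set of a smooth submersion on an appropriately small $G$-invariant neighborhood of $x_i$, and to conclude with the regular value (preimage) theorem. Collecting the $d_G$ relations in \eqref{convPrf:linConstraints}, define $\Psi\colon\man'\to\mathbb{R}^{d_G}$ by
\begin{equation*}
	\Psi(z) = \bigl(\text{Re}\,\dprod{Y_1 z}{x_i},\,\ldots,\,\text{Re}\,\dprod{Y_{d_G} z}{x_i}\bigr),
\end{equation*}
so that $\subman = \Psi^{-1}(0)$ by the characterization \eqref{convPrf:subManCharacter}; each component of $\Psi$ is the restriction to the embedded submanifold $\man'$ of an $\mathbb{R}$-linear functional on $\mathbb{C}^n\cong\mathbb{R}^{2n}$, hence $\Psi$ is smooth. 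I would record two facts. First, by \eqref{convPrf:linConstraintQuadForm} each $Y_k$ lies in $\mathfrak{g}$, and moreover $\{Y_1,\ldots,Y_{d_G}\}$ is a basis of $\mathfrak{g}$: the vectors $\partial A(u)/\partial u_k|_{u^*}$ are the coordinate frame at $I$ of a chart on $G$, hence a basis of $T_I G=\mathfrak{g}$, and $X\mapsto X^*$ is an $\mathbb{R}$-linear automorphism of $\mathfrak{u}(n)$ carrying $\mathfrak{g}$ onto itself. Second, as already shown in the proof of Lemma~\ref{convPrf: uniqueProjLemma}, the assumption $A x_i\ne x_i$ for $A\ne I$ makes $A\mapsto A\cdot x_i$ a smooth embedding of $G$ onto the $d_G$-dimensional submanifold $G\cdot x_i\subseteq\man$, so its differential $X\mapsto X x_i$ at $I$ is injective on $\mathfrak{g}$; therefore $\{Y_1 x_i,\ldots,Y_{d_G}x_i\}$ is a basis of $T_{x_i}(G\cdot x_i)$, which is a subspace of $T_{x_i}\man$.

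With these in hand, the key step is to check that $d\Psi_{x_i}\colon T_{x_i}\man\to\mathbb{R}^{d_G}$ is surjective. Take any nonzero covector $\mu=(\mu_1,\ldots,\mu_{d_G})$ and set $Y_\mu=\sum_k\mu_k Y_k$, a nonzero element of $\mathfrak{g}$; then $Y_\mu x_i\ne 0$, and $Y_\mu x_i\in T_{x_i}(G\cdot x_i)\subseteq T_{x_i}\man$. Using that $Y_\mu$ is skew-Hermitian, so that $\text{Re}\,\dprod{Y_\mu v}{x_i}=-\text{Re}\,\dprod{v}{Y_\mu x_i}$, evaluation of the composite functional $\mu\cdot d\Psi_{x_i}$ at the tangent vector $Y_\mu x_i$ yields $-\norm{Y_\mu x_i}^2\ne 0$. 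Hence no nonzero covector annihilates the image of $d\Psi_{x_i}$, i.e.\ $d\Psi_{x_i}$ is onto. Since surjectivity of the differential is an open condition and $\Psi$ is smooth, $d\Psi_z$ is surjective for every $z$ in some open neighborhood $V\subseteq\man$ of $x_i$.

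It remains to arrange that all of $\subman$ lies in $V$, which is the purpose of the parenthetical shrinking of $\man'$ mentioned before the lemma. I would first note that any $z\in\subman$ satisfies $\norm{z-x_i}<\delta$: since $z\in(G\cdot x_i)_\delta$ there is $A_0\in G$ with $\norm{A_0^{-1}z-x_i}=\norm{z-A_0 x_i}<\delta$, and because $z$ is by definition the point of $G\cdot z$ closest to $x_i$, we get $\norm{z-x_i}\le\norm{A_0^{-1}z-x_i}<\delta$. Thus, choosing $0<\delta'\le\delta$ small enough that $\{y\in\man:\norm{y-x_i}<\delta'\}\subseteq V$ and replacing $\man'$ by $(G\cdot x_i)_{\delta'}\cap\man$ — still $G$-invariant by Lemma~\ref{convPrf: uniqueProjLemma}, and with uniqueness of \eqref{convPrf: minimiaztion problem} still valid since $\delta'\le\delta$ — forces the corresponding $\subman$ into $V$. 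Then $0$ is a regular value of $\Psi$ on this domain, and by the preimage theorem $\subman=\Psi^{-1}(0)$ is an embedded submanifold of $\man'$, hence of $\man$, of dimension $\dim\man'-d_G=d-d_G$.

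The one step that genuinely uses the geometry of the setup, and which I expect to be the crux, is the surjectivity of $d\Psi_{x_i}$; it works precisely because the ``gradients'' of the constraint functions, the vectors $Y_k x_i$, are themselves tangent to the orbit $G\cdot x_i$ and hence to $\man$, so they cannot lie in $(T_{x_i}\man)^\perp$. Everything else — smoothness of $\Psi$, openness of the surjectivity locus, and the $G$-invariant shrinking needed because $d\Psi$ is only a priori surjective near $x_i$ rather than on all of the original $\man'$ — is routine.
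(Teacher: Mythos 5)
Your proof is correct, and it reaches the conclusion by a cleaner route than the paper's. The paper works in the ambient space $\mathbb{R}^{2n}$: it writes a neighborhood of $x_i$ in $\man$ as a level set of $2n-d$ equations $F_j=0$, appends the $d_G$ linear constraints $H_k=0$ from \eqref{convPrf:linConstraints}, shows that the combined $(2n-(d-d_G))\times 2n$ Jacobian has full rank at $x_i$ --- because the rows $\nabla H_k = -\widetilde{Y_k\cdot x_i}$ are linearly independent and tangent to $\man$ while the $\nabla F_j$ span the normal space --- and then invokes the implicit function theorem. You instead work intrinsically, packaging the constraints into a single map $\Psi:\man'\to\mathbb{R}^{d_G}$ and verifying that $d\Psi_{x_i}$ is a submersion via the covector computation $\mu\cdot d\Psi_{x_i}(Y_\mu x_i)=-\norm{Y_\mu x_i}^2\neq 0$. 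The geometric content is identical: in both arguments the crux is that the vectors $Y_k\cdot x_i$ are linearly independent (injectivity of the orbit map's differential, coming from $A\cdot x_i\neq x_i$) and lie in $T_{x_i}(G\cdot x_i)\subseteq T_{x_i}\man$, so the constraints are transverse to $\man$. Your formulation buys a shorter argument --- it avoids the explicit passage to real coordinates and the bookkeeping of the $2n-d$ manifold-defining equations --- and the skew-Hermitian identity makes the transversality quantitative rather than resting on an orthogonality-plus-independence count. Both proofs establish the rank condition only at $x_i$ and therefore need the same final shrinking of $\man'$ to a smaller $G$-invariant neighborhood so that all of $\subman$ lands where the condition persists; your observation that every $z\in\subman$ satisfies $\norm{z-x_i}<\delta$ makes that step explicit, where the paper leaves it implicit in its choice of $\tilde{V}$.
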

\begin{proof}
	In the following, we use the formulation of real manifolds in $\mathbb{C}^n$ presented in Section \ref{secRealMAnifolds}. In particular, by using the map $\tilde{(\cdot)}$ in \eqref{catReIm}, let us define
	\begin{equation}\label{manTilde}
		\tilde{\man} = \left\{\tilde{x}\; : \; x\in \man\right\}, \quad \tilde{x} = \left(\re{x},\im{x}\right)^T.
	\end{equation}
	 Clearly, the manifold $\tilde{\man}$ is diffeomorphic to $\man$, and by \eqref{realDotProd} and \eqref{isometryProp}, the map~$\tilde{(\cdot)}$ restricted to $\man$ is a Riemannian isometry, preserving the metric tensor of~$\man$.
 	Furthermore, defining 
 	\begin{equation}\label{convPrf:subManTagCharacter}
 		\tilde{\mathcal{N}} = \left\{ \tilde{z}\in \mathbb{R}^{2n}:	\text{Re}\left\{\dprod{Y_1\cdot u}{x_i}_{\mathbb{C}^{n}}\right\}=0, \quad k=1,\ldots,d_G, \quad z\in\man' \right\},
 	\end{equation}
 	we have that $z\in \subman \iff \tilde{z}\in \tilde{\mathcal{N}}$, that is, the map~$\tilde{(\cdot)}$ restricted to~$\subman$ is a bijection (and a isometry) onto~$\tilde{\mathcal{N}}$.
 	Thus, it suffices to show that~$\tilde{\mathcal{N}}$ is a $(d-d_G)$-dimensional submanifold in~$\tilde{\man}$, which we now do.  
 	
	The proof utilizes the implicit function theorem. 
	By a theorem (see proposition~5.16 in~\cite{leeSmoothMan}), there exists a neighborhood~$\tilde{U}$ of~$\tilde{x}_i$ in~$\tilde{\man}$, a diffeomorphism onto its image~$\Phi:\tilde{U}\rightarrow\mathbb{R}^{2n-d}$ , and $c\in \mathbb{R}^{2n-d}$ such that~$\tilde{U}$ can be parameterized as 
	\begin{equation}
		\Phi(\tilde{u}_1,\ldots,\tilde{u}_{2n}) = c, 
	\end{equation}
	where $(\tilde{u}_1,\ldots,\tilde{u}_{2n})=\tilde{u}\in\mathbb{R}^{2n}$ are coordinates for~$\tilde{U}$. 
	In other words, the neighborhood~$\tilde{U}\subset\tilde{\man}$ of $\tilde{x}_i$ is a level set of $\Phi$. Now, consider the set of equations 
	\begin{align}\label{convPrf:manDefEq}
		F_1(\tilde{u}) = & \Phi_1(\tilde{u})-c_1 = 0, \nonumber \\  &\vdots \nonumber \\ 	F_{2n-d}(\tilde{u}) = & \Phi_{2n-d}(\tilde{u})-c_{2n-d} = 0.
	\end{align}
	Since $\Phi$ is a diffeomorphism, its differential has full rank for all points in $\tilde{U}$. Hence, the matrix 
	\begin{equation}
		\begin{pmatrix}
			-&\nabla F_1 &-\\ & \vdots& \\ -&\nabla F_{2n-d}&-
		\end{pmatrix}=
	\begin{pmatrix}
		-&\nabla \Phi_1 &-\\ & \vdots& \\ -&\nabla\Phi_{2n-d}&-
	\end{pmatrix}
	\end{equation}
	has full rank for all $\tilde{u}\in \tilde{U}$. 
	
	Next, let $u=(\tilde{u}_1,\ldots,\tilde{u}_n)^T+i\cdot(\tilde{u}_{n+1},\ldots,\tilde{u}_{2n})^T$, and consider the set of equations 
	\begin{align}\label{convPrf:linConstraintsImpct}
		H_1(\tilde{u}) =&  \text{Re}\left\{\dprod{Y_1\cdot u}{x_i}\right\}=0 , \nonumber \\  \vdots& \nonumber \\  H_{d_G}(\tilde{u}) = & \text{Re}\left\{\dprod{Y_{d_G}\cdot u}{x_i}\right\} =0.
	\end{align}
	By a direct computation, we get that
	\begin{equation}\label{convPrf:Hmat}
		\begin{pmatrix}
			-&\nabla H_1&-\\
			&\vdots&\\
			-&\nabla H_{d_G}&-\\
		\end{pmatrix}=
	-\begin{pmatrix}
		-&\widetilde{Y_1\cdot x_i}&-\\
		&\vdots &\\
		-&\widetilde{Y_{d_G}\cdot x_i}&-
	\end{pmatrix},
	\end{equation}
	where $\widetilde{Y_k\cdot x_i}$ is the image of the map $\tilde{(\cdot)}$ applied to $Y_k\cdot x_i$. 
	 Now, we observe that by~\eqref{convPrf:linConstraintQuadForm}, we have that
	 \begin{equation}\label{tangVecGxi}
	 	\pDeriv{A(u)}{u_k}\cdot x_i = Y_k\cdot x_i,\quad k\in \left\{1,\ldots,d_G\right\}
	 \end{equation}
	 hence, the vectors $Y_1\cdot x_i, \ldots, Y_{d_G}\cdot x_i$ are the rows of the differential of the map~$\eqref{convPrf:expCoordinatesForG}$ at~$x_i$, which has full rank, since by assumption the map $\eqref{convPrf:expCoordinatesForG}$ is a diffemorphism. Thus, the vectors $Y_1\cdot x_i, \ldots, Y_{d_G}\cdot x_i$ are linearly independent. 	 
	 Since the map~$\tilde{(\cdot)}$ is an isometry, we infer that the vectors $\widetilde{Y_1\cdot x_i}, \ldots, \widetilde{Y_{d_G}\cdot x_i}$ are also linearly independent, whence we get that \eqref{convPrf:Hmat} has full rank. 

	Next, we observe that since~$A\mapsto Ax_i$ is a diffeomorphism onto~$G\cdot x_i$, by \eqref{tangVecGxi}, the vectors $Y_1\cdot x_i, \ldots, Y_{d_G}\cdot x_i$ reside in~$T_{x_i} (G\cdot x_i)\subset T_{x_i} \man$, the tangent space to~$G\cdot x_i$ at~$x_i$, and since $\tilde{(\cdot)}$ is a Riemannian isometry of $\man$ onto $\tilde{\man}$, we conclude that the vectors  $\widetilde{Y_1\cdot x_i}, \ldots, \widetilde{Y_{d_G}\cdot x_i}$  are tangent to~$\tilde{\man}$. On the other hand, the vectors $\nabla F_1,\ldots,\nabla F_{2n-d}$ are all perpendicular to the neighborhood $\tilde{U}$ of $\tilde{x}_i$, since it is defined as the level set~$F(\tilde{u})=0$, and therefore, they are perpendicular to all the vectors $\widetilde{Y_1\cdot x_i}, \ldots, \widetilde{Y_{d_G}\cdot x_i}$ tangent to $\tilde{\man}$ at~$\tilde{x}_i$. Hence, the $(2n-(d-d_G))\times (2n)$ matrix 
	\begin{equation}\label{convPrf:eqnUnion}
		\begin{pmatrix}
			-&\nabla F_1 &-\\ & \vdots& \\ -&\nabla F_{2n-d}&- \\ \\
				-&\nabla H_1&-\\
			&\vdots&\\
			-&\nabla H_{d_G}&-			
		\end{pmatrix}
	\end{equation}
	has full rank at~$\tilde{x}_i$. Thus, there exists a subset of $2n-(d-d_G)$ columns of \eqref{convPrf:eqnUnion} that form a $(2n-(d-d_G))\times (2n-(d-d_G))$ matrix $\tilde{D}_{\tilde{x_i}}$, which has a full rank. In particular, we have that~$\det\left(\tilde{D}_{\tilde{x_i}}\right)\neq 0$. 
	Lastly, by \eqref{convPrf:linConstraints}, the point $\tilde{x}_i$ is a solution of~\eqref{convPrf:linConstraintsImpct}, and by construction, also a solution of \eqref{convPrf:manDefEq}, and thus a solution of \eqref{convPrf:eqnUnion}.
	Hence, by the implicit function theorem, there exists an open subset $\tilde{V}\subset \tilde{U}$, and open subsets $\tilde{U}_1$ and $\tilde{U}_2$ such that $\tilde{V} = \tilde{U}_1\times \tilde{U}_2$, and coordinates $\tilde{u}_{i_1},\ldots, \tilde{u}_{i_{d-d_G}}\in \tilde{U}_1$, and smooth functions $g_1,\ldots,g_{2n-(d-d_G)}$ from $\tilde{U}_1$ onto $\tilde{U}_2$ such that 
	\begin{equation}\label{submanParametrization}
		\tilde{V} = \left\{(\tilde{u}_{i_1},\ldots, \tilde{u}_{i_{d-d_G}},g_1,\ldots,g_{2n-(d-d_G)})\; : \; \tilde{u}_{i_1},\ldots, \tilde{u}_{i_{d-d_G}}\in \tilde{U}_1 \right\}.
	\end{equation} 
	We can now redefine the set $\tilde{\mathcal{N}}$ in \eqref{convPrf:subManTagCharacter} as
	\begin{equation}\label{convPrf:subManTagCharacterMod}
		\tilde{\mathcal{N}} = \left\{ \tilde{z}\in \mathbb{R}^{2n}:	\text{Re}\left\{\dprod{Y_k\cdot z}{x_i}_{\mathbb{C}^{n}}\right\}=0, \quad k=1,\ldots,d_G, \quad z\in V \right\},
	\end{equation}
	where 
	\begin{equation}\label{Vset}
		V = \left\{x \; : \; \tilde{x}\in \tilde{V}\right\}.
	\end{equation}		
	By \eqref{submanParametrization}, we conclude that $\tilde{\subman}$ is a $(d-d_G)$-dimensional smooth submanifold in $\tilde{\man}$ of \eqref{manTilde}. 
	Now, we can redefine~$\subman$ in~\eqref{convPrf:subManCharacter} as 
	\begin{equation}
		\mathcal{N} = \left\{ z \; :\; \tilde{z}\in \tilde{\subman}\right\}.
	\end{equation}
	Moreover, we can take $\tilde{V}$ to be closed in $\mathbb{C}^N$ and small enough so that $V\subset \man'$, which guarantees that the problem~\eqref{convPrf: minimiaztion problem} has a unique solution for each~$x$ in~$\man'=G\cdot \subman$. Furthermore, since $\subman$ and $\man$ are isometric to~$\tilde{\subman}$ and~$\tilde{\man}$, respectively, we conclude that $\subman$ is a $(d-d_G)$-dimensional compact submanifold in $\man$. 
\end{proof}	

Next, we show how to integrate over $\mathcal{M}'$ using our $G$-invariant parametrization. 
Let $z(w)=z\left(w_1,\ldots, w_{d-d_G}\right)$ denote some coordinate chart on $\subman$ in \eqref{convPrf:subManCharacter}, and let~$A(u)= A(u_1,\ldots,u_{d_G})$ be the coordinate chart on $G$ in \eqref{convPrf: expMapForGnearI}.
The integral of a smooth function $h(x)$ over $\mathcal{M}'$ is given by the change of variables (see \cite{tuDiffGeom})
\begin{equation}\label{convPrf:changeOfVariables}
	\int_{\mathcal{M}'} h(x)dx = \int_{z\in\mathcal{N}}\int_{G}h(A\cdot z )dV(A\cdot z), 
\end{equation}
where we denote 
\begin{equation}
	V(A\cdot z) = \sqrt{\left| \det \left\{ g_{\mathcal{M}'}(A(u)\cdot z(w))\right\}\right|}
\end{equation}
and
\begin{equation}
	dV(A\cdot z) = \sqrt{\left| \det \left\{ g_{\mathcal{M}'}(A(u)\cdot z(w))\right\}\right|}dw_1\ldots dw_{d-d_G}du_1\ldots du_{d_G},
\end{equation}
is the volume form at $x=A\cdot z$, and $g_{\mathcal{M}'(x)}$ is the metric tensor on $\mathcal{M}'$ given by
\begin{equation}
	g_{\mathcal{M}'}(x) = \text{Re}\left\{ J^*_{\mathcal{M}'}(x)J_{\mathcal{M}'}(x)\right\},
\end{equation} 
and $J_{\mathcal{M}'}$ is the Jacobian change of variables matrix, given explicitly by 
\begin{equation}\label{convPrf: jacobMat}
	J_{\mathcal{M}'}(w,u) = \bigg( J_w \quad J_u\bigg), \quad J_w = \left( \pDeriv{x}{w_1} \cdots \pDeriv{x}{w_{d-d_G}}\right),\quad  J_u = \left(\pDeriv{x}{u_1} \cdots \pDeriv{x}{u_{d_G}}\right). 
\end{equation}

In the following section we prove Lemma \ref{convPrf:scndMomentApproxLemma}, which requires a careful asymptotic approximation of the second moment $\expect{\left(H_i\right)^2}$ in \eqref{convPrf:scndMomentApproxLemmaHi2} with respect to the uniform distribution over~$\man$. The proof employs the relationship between the Haar measure on $G$, and a certain measure induced by our $G$-invariant parametrization on orbits of the form $G\cdot z$, which we now define. 

First, we note that diffeomorphism $z\rightarrow A\cdot z$ admits an inverse map $\Phi:G\cdot z \rightarrow G$ given by 
\begin{equation}\label{convPrf:ChangeOfVar}
	\Phi(A\cdot z) = A, \quad A\cdot z\in G\cdot z, 
\end{equation} 
which induces a topology on $G\cdot z$ given by
\begin{equation}
	\mathcal{T}_{G\cdot z} = \left\{\Phi^{-1}(H)\;|\; H \text{ is a Borel measurable subset in }G\right\}.
\end{equation}
Next, consider the function $\mu_z$ over $\mathcal{T}_{G\cdot z}$  defined by 
\begin{equation}\label{convPrf:muDef}
	\mu_z(F) = \int_{u:A(u)\in\Phi(F)} 	d\mu_z(u),
\end{equation}
where
\begin{equation}
	d\mu_z(u) \coloneqq \sqrt{\left| \det(\text{Re}\{J_u^*\left(A(u)\cdot z\right)J_u\left(A(u)\cdot z\right)\}) \right|}du,
\end{equation}
and $J_u$ was defined in \eqref{convPrf: jacobMat}. The following lemma asserts that~$\mu_z$ is a measure over~$G\cdot z$, and characterizes its relationship to the Haar measure on~$G$. 
\begin{lemma}
	For every $z\in \subman$, the function $\mu_z$ is a measure over $G\cdot z$ with the topology $\mathcal{T}_{G\cdot z}$. Furthermore, define the pushforward of $\mu_z$ by the map $\Phi$ in \eqref{convPrf:ChangeOfVar}, as the function $\Phi_*(\mu_z)$ over the Borel $\sigma$-algebra of $G$ given by 
	\begin{equation}
		\Phi_*(\mu_z)(H) = \mu_z(\Phi^{-1}(H)), 
	\end{equation}
	for every Borel subset $H\subseteq G$. Then, with probability one we have that $\mu_z$ is a measure over $G\cdot z$. Furthermore, there exists a constant $\mu(z)>0$ such that 
	\begin{equation}\label{convPrf:changeOfMeasure}
		\Phi_*(\mu_z)(H) = \mu(z)\eta(H), \quad H\subseteq G, 
	\end{equation}
	where $\eta$ is the Haar measure over $G$. 
\end{lemma}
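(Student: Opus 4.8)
The plan is to recognize $\mu_z$ as the Riemannian volume measure of the orbit $G\cdot z$, viewed as a compact embedded submanifold of $\mathbb{C}^n\cong\mathbb{R}^{2n}$ equipped with the real inner product \eqref{realDotProd}, and then to use that left multiplication by a group element acts on $\mathbb{C}^n$ as a linear isometry in order to deduce left-invariance of the pushforward $\Phi_*(\mu_z)$; uniqueness of the Haar measure on a compact group then yields \eqref{convPrf:changeOfMeasure}.

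First I would check that, with probability one over the draw of $x_i$, every orbit $G\cdot z$ with $z\in\subman$ is a compact embedded submanifold of $\man$ diffeomorphic to $G$. The hypothesis $Ax_i\neq x_i$ for $A\neq I_G$ means the stabilizer of $x_i$ is trivial; the set of points with trivial stabilizer is open (by a slice argument), so the same holds for every $z$ in the neighbourhood $\man'\supset\subman$; alternatively one may invoke the identification $\man'=G\cdot\subman$, with $G\cdot\subman$ diffeomorphic to $G\times\subman$, established above. Consequently $A\mapsto A\cdot z$ is a diffeomorphism of $G$ onto $G\cdot z$, its inverse $\Phi$ is a homeomorphism, and $\mathcal{T}_{G\cdot z}$ coincides with the Borel $\sigma$-algebra of the orbit. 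In the chart $u\mapsto A(u)\cdot z$ (built from the exponential charts of Section~\ref{secLieCoordinates}), the matrix $\operatorname{Re}\{J_u^*J_u\}$ in \eqref{convPrf:muDef} is, by the change-of-variables formalism of Section~\ref{secRealMAnifolds}, exactly the metric tensor of $G\cdot z$ in these coordinates; hence $d\mu_z(u)$ is the orbit's Riemannian volume form, and the local formula \eqref{convPrf:muDef} patches, over the atlas of translated charts \eqref{convPrf: expMapForG}, into the Riemannian volume measure on $G\cdot z$. This is a finite Borel measure since $G\cdot z$ is compact, and it is nonzero.

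Next I would establish left-invariance of $\Phi_*(\mu_z)$. For fixed $B\in G$ the map $\lambda_B\colon\mathbb{C}^n\to\mathbb{C}^n$, $x\mapsto Bx$, is a linear isometry of $\mathbb{C}^n$ with the inner product \eqref{realDotProd} because $B$ is unitary, and it restricts to a bijection of $G\cdot z$ onto itself, hence to a Riemannian isometry of the orbit; therefore it preserves the orbit's volume measure, i.e. $\mu_z(\lambda_B(F))=\mu_z(F)$ for every Borel $F\subseteq G\cdot z$. Since $\Phi(\lambda_B(A\cdot z))=BA=L_B(\Phi(A\cdot z))$, i.e. $\Phi\circ\lambda_B=L_B\circ\Phi$ with $L_B\colon A\mapsto BA$ the left translation on $G$, we obtain $\Phi^{-1}(L_BH)=\lambda_B(\Phi^{-1}(H))$ and hence
\[
\Phi_*(\mu_z)(L_BH)=\mu_z(\lambda_B(\Phi^{-1}(H)))=\mu_z(\Phi^{-1}(H))=\Phi_*(\mu_z)(H)
\]
for every Borel $H\subseteq G$. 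Thus $\Phi_*(\mu_z)$ is a finite, nonzero, left-invariant Borel measure on $G$, so by uniqueness of the Haar measure the normalized measure $\Phi_*(\mu_z)/\Phi_*(\mu_z)(G)$ equals $\eta$; setting $\mu(z)=\Phi_*(\mu_z)(G)=\mu_z(G\cdot z)=\operatorname{Vol}(G\cdot z)>0$ gives \eqref{convPrf:changeOfMeasure}. If smoothness of $z\mapsto\mu(z)$ is required downstream, it follows by expressing $\mu(z)$ as an integral over $G$ in a fixed finite atlas and differentiating under the integral sign.

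The main obstacle is the first step: one must argue carefully that $\mu_z$ is a genuine Borel measure on $G\cdot z$, which rests on the orbit being an embedded submanifold diffeomorphic to $G$ (so that \eqref{convPrf:muDef} is chart-independent and the local pieces glue) and on $\mathcal{T}_{G\cdot z}$ coinciding with the Borel $\sigma$-algebra of the orbit --- exactly where the free-action hypothesis and the ``with probability one'' qualifier enter. Once $\mu_z$ is identified with the Riemannian volume of the orbit, invariance under left multiplication and the appeal to Haar uniqueness are immediate.
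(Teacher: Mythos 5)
Your proposal is correct and follows essentially the same route as the paper: identify $\mu_z$ as the Riemannian volume measure of the orbit, show the pushforward $\Phi_*(\mu_z)$ is left-invariant, and invoke uniqueness of the Haar measure, with positivity of $\mu(z)=\mu_z(G\cdot z)$ coming from the free-action (``with probability one'') hypothesis. The only cosmetic difference is that you derive left-invariance from the fact that $x\mapsto Bx$ is an ambient isometry restricting to a Riemannian isometry of the orbit, whereas the paper verifies the same fact by a direct computation showing $z^*\left(\pDeriv{A}{u_i}\right)^*\pDeriv{A}{u_j}z$ is unchanged under $A\mapsto B\cdot A$; both hinge on the unitarity of $B$.
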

\begin{proof}
	To see that $\mu_z $ is a measure, first we note that since $\sqrt{\left| \det(\text{Re}\{J_u^*J_u\}) \right|}$ is non-negative then so is $\mu_z(\cdot)$. Furthermore, $\mu_z(\cdot)$ is bounded since for any $F\in \mathcal{T}_{G\cdot z}$ we have that
	\begin{equation}
		\mu_z(F) = \int_{u:A(u)\in\Phi(F)} 	d\mu_z(u)\leq \int_{u:A(u)\in G} 	d\mu_z(u)=\text{Vol}(G\cdot z)<\infty,
	\end{equation}
	where the last inequality is due to the fact that $G\cdot z$ is compact.  
	Thus, it only remains to show that $\mu_z(\cdot)$ is countably additive over $\mathcal{T}_{G\cdot z}$. Indeed, the map $\Phi$ being a homeomorphism preserves the topology of $G$ (see \cite{munkres}), and in particular, it holds that for any countable family of disjoint open sets $F_1,F_2,\ldots\in \mathcal{T}_{G\cdot z}$ we have 
	\begin{equation}\label{convPrf:disjointUnions}
		\Phi\left(\bigcup_{k=1}^\infty F_k\right) = \bigcup_{k=1}^\infty \Phi\left(F_k\right).
	\end{equation}
	In other words, the map $\Phi$ preserves disjoint unions. 
	Thus, by \eqref{convPrf:muDef} and \eqref{convPrf:disjointUnions} we have
	\begin{equation}
		\mu_z\left(\bigcup_{k=1}^\infty F_k\right) = \sum_{k=1}^{\infty}\mu_z(F_k).
	\end{equation}
	We conclude that~$\mu_z(\cdot)$ is a measure over~$G\cdot z$, the latter having the topology of~$G$ (induced by~$\Phi$).  
	
	Next, we show that~$\mu_z$ is left invariant under the action of~$G$, that is, for any measurable subset~$F\in \mathcal{T}_{G\cdot z}$, we have
	\begin{equation}\label{convPrf:homSpaceMeasureInvar}
		\mu_z(B\cdot F) = \mu_z(F), \quad B\in G. 
	\end{equation} 
	Indeed, by \eqref{convPrf: jacobMat} we have
	\begin{equation}
		J_u = \left[ J_{u_1}\;\cdots \; J_{u_{d_G}} \right] = \left[ \pDeriv{A}{u_1}\cdot z \;\cdots \;\pDeriv{A}{u_{d_G}} \cdot z \right],
	\end{equation}
	and thus
	\begin{equation}
		(J_u^*J_u)_{ij} = z^* \left(\pDeriv{A}{u_i}\right)^*\pDeriv{A}{u_j} z, \quad 1\leq i,j \leq d_G.
	\end{equation}
	Thus, for a fixed $B\in G$ we have
	\begin{equation}
		z^* \left(\pDeriv{\left(B\cdot A\right)}{u_i}\right)^*\pDeriv{\left(B\cdot A\right)}{u_j} z=z^* \left(\pDeriv{A}{u_i}\right)^*\pDeriv{A}{u_j} z.
	\end{equation}
	Now, the map $\Phi : G\cdot z \rightarrow G$
	induces a measure $\Phi_*(\mu_z)$ on $G$ via pushforward, defined explicitly by
	\begin{equation}\label{convPrf:pwMeasure}
		\Phi_*(\mu_z)(H) = \mu_z(\Phi^{-1}(H)),
	\end{equation}
	for every Borel measurable subset $H\subseteq G$. 
	Intuitively, the function $\Phi_*(\mu_z)$ measures the volume of a subset $H\subseteq G$ by first mapping~$H$ into the orbit $G\cdot z$, and then measuring the volume of the image $H\cdot z\subseteq G\cdot z$. 
	By \eqref{convPrf:homSpaceMeasureInvar}, for a fixed $A\in G$ and any open subset $H\subseteq G$ we have
	\begin{equation}
		\Phi_*(\mu_z)(AH)=\mu_z(\Phi^{-1}(AH))=\mu_z(AH\cdot z) = \mu_z (H\cdot z) = \Phi_*(\mu_z)(H),
	\end{equation}
	which shows that the measure $\Phi_*(\mu_z)$ is left-invariant. By Haar's theorem for compact groups there exists, up to multiplication by a positive scalar, a unique left invariant measure over $G$. It follows that for every $z\in \mathcal{N}$ there exists a positive scalar $\mu(z) \in \mathbb{R}$ such that
	\begin{equation}
		\mu(z)\eta(H) = \Phi_*(\mu_z)(H), \quad H\subseteq G,
	\end{equation} 
	which in turn implies that $\Phi_*(\mu_z)$ is related to the Haar measure by
	\begin{equation}\label{convPrf:changeOfMeasure}
		\eta(H) = \frac{\Phi_*(\mu_z)(H)}{\mu(z)}, \quad z\in\mathcal{N}.
	\end{equation}
	In particular, plugging $H=G$ into \eqref{convPrf:changeOfMeasure}, and using \eqref{secLieGroupAction:probMeasure} we get
	\begin{equation}
		\mu(z) = \frac{\Phi_*(\mu_z)(G)}{\eta(G)} =\mu_z(\Phi^{-1}(G))= \mu_z(G\cdot z),
	\end{equation}
	which shows that $\mu(z)$ is the volume of the orbit $G\cdot z$. By assumption, with probability one we have that $A\cdot z \neq z$ for all $A\in G$, and thus the map~$\Phi^{-1}(A) = A\cdot z$ is a diffeomorphism of $G$ onto $G\cdot z$. Hence, we conclude that $\mu_z(G\cdot z)>0$. 
\end{proof}

\subsection{Proof of Lemma \ref{convPrf:scndMomentApproxLemma}}\label{secConvBigLemmaPrf}
In this section, we evaluate the second order moment $\expect{\left(H_i\right)^2}$, which appears in the evaluation of \eqref{convPrf:JiExpression}. The evaluation of the remaining second order moments in $\eqref{convPrf:JiExpression}$ is done in a very similar fashion. 

Now, recall that in Section \ref{secGinvParam} we constructed a $G$-invariant parametrization of a certain neighborhood $\man'\subset\man$ of the data point $x_i$, such that $\norm{x-x_i}>\delta$ for all $x\notin \man'$. 
Thus, by \eqref{convPrf: groupConvolution} we have
\begin{equation}\label{convPrf:expNeglect}
	\expect{\left(H_i\right)^2} = \int_\man H_i^2(x)p(x)dx = \int_{\man'} H_i(x)p(x)dx+O\left(e^{-\delta^2/\epsilon}\right),
\end{equation}
where the second equality stems from the fact that $\man' = \man \cap B_{\delta}(x_i)$, and the integrand of $H_i(x)$ is a Gaussian of width $\epsilon$ centered at $x_i$. 
We point out that the exponentially small error term on the r.h.s of~\eqref{convPrf:expNeglect} is negligible with respect to the polynomial asymptotic error in \eqref{convPrf:scndMomentApproxLemmaHi2} that we are about to derive, and thus will be dropped in all subsequent analysis. 
Furthermore, for a fixed $x\in \man'$ there exist $z\in \subman$ and $B'\in G$ such that 
\begin{equation}
	H_i(x) = \int_G \expPow{x_i}{B\cdot B'z}f(B\cdot B'z)d\eta(B) = \int_G \expPow{x_i}{B\cdot z}f(B\cdot z)d\eta(B) = H_i(z),
\end{equation}
where in the second equality we used the change of variables $B=B\cdot B'$, and that the Haar measure $\eta$ on a compact group is right invariant. Therefore, continuing from \eqref{convPrf:expNeglect} and using \eqref{convPrf:changeOfVariables} we can write
\begin{align}\label{convPrf:secondMomentGInvariance}
	\expect{\left(H_i\right)^2} &= \int_\subman \int_G \left(H_i(B\cdot z)\right)^2 \frac{1}{\text{Vol}\left(\man\right)}V(B\cdot z)d\eta(B) dz  \\ \nonumber
	& = \int_\subman \left(H_i(z)\right)^2 p_\subman(z) dz, 
\end{align}
where we defined 
\begin{equation}
	p_\subman(z) = \frac{1}{\text{Vol}\left(\mathcal{M}\right)}\int_{G}V(B\cdot z)d\eta(B),
\end{equation}
where we used that $\text{Vol}(G)=1$.

Next, writing 
\begin{equation*}
	\left\lVert x_i - B\cdot z\right \rVert^2 = \left\lVert x_i - z+ z- B\cdot z\right \rVert^2= \norm{x_i-z}^2 + 2\text{Re} \left\{\dprod{ x_i-z}{z-B\cdot z}\right\} + \norm{z-B\cdot z}^2,	
\end{equation*}
and defining 
\begin{equation}\label{confPrf:DeltaDef}
	\delta_i(z,x) \coloneqq  2\text{Re} \left\{\dprod{ x_i-z}{z-x}\right\}, 
\end{equation}
we have
\begin{equation*}
	H_i(z) = e^{-\norm{x_i-z}^2/\epsilon}\int_G e^{-\norm{z-B\cdot z}^2/\epsilon} e^{-\delta_i(z,B\cdot z)/\epsilon} f(B\cdot z)d\eta(B). 
\end{equation*}
Taylor expanding the function $e^{-x}$ at $\delta_i(z,B\cdot z)/\epsilon$ we get,
\begin{equation*}
	e^{-\delta_i(z,B\cdot z)/\epsilon} = 1+O\left(\frac{\delta_i(z,B\cdot z)}{\epsilon}\right)
\end{equation*}
from which we have
\begin{align}\label{convPrf: HiPythagorDecomp}
	H_i(z) = e^{-\norm{x_i-z}^2/\epsilon}  \bigg[&\int_{G} e^{-\norm{z - B\cdot z}^2/\epsilon} f(B\cdot z) d\eta(B) \\ \nonumber
	&+O\left(\frac{1}{\epsilon} \int_{G} e^{-\norm{z - B\cdot z}^2/\epsilon} \delta_i(z,B\cdot z)f(B\cdot z)d\eta(B)\right)\bigg]\nonumber.
\end{align}
We now proceed to evaluate \eqref{convPrf: HiPythagorDecomp} term by term. 

For the first term, we have
\begin{align}\label{convPrf:firstTermEval}
	\int_{G} e^{-\norm{z - B\cdot z}^2/\epsilon} f(B\cdot z) d\eta(B) &= \frac{1}{\mu(z)}\int_{G} e^{-\norm{z-B\cdot z}^2/\epsilon}f(B\cdot z)d\Phi_*(\mu_z)(B)\nonumber\\ 
	&= \frac{1}{\mu(z)}\int_{G} e^{-\norm{z-\Phi^{-1}(B)}^2/\epsilon}f\left(\Phi^{-1}(B)\right)d\Phi_*(\mu_z)(B)\nonumber\\ 
	&= \frac{1}{\mu(z)}\int_{G\cdot z} e^{-\norm{z-x}^2/\epsilon}f(x)d\mu_z(x) \nonumber\\
	&= \frac{\left(\pi \epsilon\right)^{d_G/2}}{\mu(z)}(f(z)+O(\epsilon)),
\end{align}
where we used \eqref{convPrf:pwMeasure} and \eqref{convPrf:changeOfMeasure} in the first equality, the definition of the map $\Phi$ in \eqref{convPrf:ChangeOfVar} in the second equality, the change of variables theorem for pushforward measures (see Theorem 3.6.1. in \cite{Bogachev}) for $x=\Phi^{-1}(B)$ in the third one, and Theorem~\ref{convPrf: ClassicalConvRateThrm} in the last equality, where we note that $d\mu_z(x)$ is the Riemannian volume element of the $d_G$-dimensional manifold~$G\cdot z$. 

For the second term in \eqref{convPrf: HiPythagorDecomp}, using the same change of variables as in \eqref{convPrf:firstTermEval} we have 
\begin{align}
	&\frac{1}{\epsilon}\int_{G} e^{-\norm{z-B\cdot z}^2/\epsilon}\delta_i(z,B\cdot z)f(B\cdot z)d\eta(B) = \frac{1}{\epsilon\mu(z)}\int_{G\cdot z} e^{-\norm{z-x}^2}\delta_i(z,x)f(x)d\mu_z(x) \\ \nonumber
	&= \frac{\left(\pi \epsilon\right)^{d_G/2}}{\epsilon\mu(z)} \bigg[\delta_i(z,z)f(z)+\frac{\epsilon}{4}\bigg[ E(z)\delta_i(z,z)+ \Delta_{G\cdot z}\biggl\{\delta_i(z,x)f(x)\biggr\}\bigg|_{x=z} \bigg]+O(\epsilon^2)\bigg]\\ \nonumber
	&=  \frac{\left(\pi \epsilon\right)^{d_G/2}}{\epsilon\mu(z)} \bigg[\frac{\epsilon}{4} \Delta_{G\cdot z}\biggl\{\delta_i(z,x)f(x)\biggr\}\bigg|_{x=z}+O(\epsilon^2)\bigg],
\end{align}
where we used that $\delta_i(z,z)=0$ by \eqref{confPrf:DeltaDef}.  
Furthermore, we have
\begin{align}
	\Delta_{G\cdot z}&\biggl\{\delta_i(z,x)f(x)\biggr\}\big|_{x=z}= \\&= \Delta_{G\cdot z}f(x)\big|_{x=z}\cdot \delta_i(z,z)-2\dprod{\nabla_{G\cdot z}\delta_i(z,x)\big|_{x=z}}{\nabla_{G\cdot z}f(z)}+f(z)\cdot \Delta_{G\cdot z} \delta_i(z,x)\big|_{x=z} \\ \nonumber
	&= f(z)\Delta_{G\cdot z}\delta_i(z,x)\big|_{x=z},
\end{align}
where we have used the multivariate version of the formula for the second derivative of a product of functions (see Lemma 3.3 in \cite{rosenberg}), and that by \eqref{confPrf:DeltaDef}
\begin{align}\label{convPrf:gradDeltai}
	\nabla_{G\cdot z} \delta_i (z,x)\big|_{x=z} = -2\text{Re}\left\{\begin{pmatrix}
		|&\cdots & |\\
		\nabla_{G\cdot z}x_1 & \cdots &\nabla_{G\cdot z}x_n\\
		|&\dots & |
	\end{pmatrix}^* \cdot(x_i-z)\right\}
	=0,
\end{align}
where we used the fact that $x$ in \eqref{convPrf:gradDeltai} is a coordinate function on~$G\cdot z$, that is, the function $x(p)$ returns the coordinates in $\mathbb{C}^n$ of $p\in G\cdot z$, and thus the vectors~$\nabla_{G\cdot z} x_k|_{x=z}$ reside in the tangent space $T_{z} \left\{G\cdot z\right\}$ to $G\cdot z$ at $x=z$, and that, by construction, the vector $x_i-z$ is perpendicular to $T_{z} \left\{G\cdot z\right\}$ (see~\eqref{convPrf:optDeriv}). 
Continuing, we now define the function 
\begin{equation}
	q(z) \coloneqq \frac{f(z)}{4}\Delta_{G\cdot z}\delta_i(z,x)|_{x=z} = -\frac{f(z)}{2}\text{Re}\big\{ \dprod{x_i-z}{\Delta_{G\cdot z}x\big|_{x=z}} \big\},
\end{equation}
for all $z\in\mathcal{N}$, where the expression $\Delta_{G\cdot z}x\big|_{x=z}$ is the vector valued function whose entries are the Laplacians of each coordinate function of the parametrization of $G\cdot z$ via $x(B) = B\cdot z$, evaluated at $z$. 
By the Cauchy-Schwart inequality combined with the compactness of $G\cdot z$ we get
\begin{equation}\label{convPrf:qzProp}
	q(x_i) = 0, \quad q(z) = O(\norm{x_i-z}),
\end{equation}
which leads to 
\begin{equation}\label{convPrf:secondTermEval}
	\frac{1}{\epsilon}\int_{G} e^{-\norm{z-B\cdot z}^2/\epsilon}\delta_i(z,B\cdot z)f(B\cdot z)d\eta(B) = \frac{(\pi \epsilon)^{d_G/2}}{\epsilon \mu(z)}[\epsilon q(z) + O(\epsilon^2)] = \frac{(\pi \epsilon)^{d_G/2}}{\mu(z)}[q(z) + O(\epsilon)].
\end{equation}
Now, substituting \eqref{convPrf:firstTermEval} and \eqref{convPrf:secondTermEval} into \eqref{convPrf: HiPythagorDecomp} we have that
\begin{equation}
	H_i(z) = \frac{e^{-\norm{x_i-z}^2/\epsilon}}{\mu(z)}\left(\pi \epsilon\right)^{d_G/2}\bigg[f(z)+O(q(z))+O\left(\epsilon\right)\bigg].
\end{equation}
Thus, we get
\begin{equation}\label{HsqrdEval}
	\left(H_i(z)\right)^2 = \frac{e^{-2\norm{x_i-z}^2/\epsilon}}{\mu^2(z)}\left(\pi\epsilon\right)^{d_G}\big[f^2(z)+O(2f(z)q(z))+O\left(\norm{x_i-z}^2\right)+O\left(\epsilon\right)\big],
\end{equation}
after suppressing higher  order terms. Plugging \eqref{HsqrdEval} into \eqref{convPrf:secondMomentGInvariance}, we get
\begin{align}\label{convPrf:finalEval}
	&\expect{\left(H_i(x)\right)^2} = \int_{\subman}\left(H_i(z)\right)^2p_\subman(z)dx = \\ \nonumber
	&\left(\pi \epsilon\right)^{d_G}\int_\subman \frac{e^{-2\norm{x_i-z}^2/\epsilon}}{\mu^2(z)}\big[f^2(z)+O(2f(z)q(z))+O\left(\norm{x_i-z}^2\right)+O\left(\epsilon\right)\big]p_\subman(z) dz.
\end{align}
Applying Theorem \ref{convPrf: ClassicalConvRateThrm} to each term inside the integral \eqref{convPrf:finalEval}, we have that
\begin{equation}\label{convPrf:finalTerm1}
	\int_\subman e^{-2\norm{x_i-z}^2/\epsilon} \frac{f^2(z)p_\subman(z)}{\mu^2(z)}dz = \left(\pi\epsilon/2\right)^{(d-d_G)/2}\bigg[ \frac{f^2(x_i)p_\subman(x_i)}{\mu^2(x_i)}+O(\epsilon)\bigg],
\end{equation}
and that
\begin{align}\label{convPrf:finalTerm2}
	\int_\subman e^{-2\norm{x_i-z}^2/\epsilon} \frac{f(z)q(z)p_\subman}{\mu^2(z)}dz &= \left(\pi\epsilon/2\right)^{(d-d_G)/2}\bigg[ \frac{f(x_i)q(x_i)p_\subman(x_i)}{\mu^2(x_i)}+O(\epsilon)\bigg]\\ \nonumber 
	 &= (\pi\epsilon/2)^{\left(d-d_G\right)/2}\cdot O(\epsilon),
\end{align}
where we used that by \eqref{convPrf:qzProp} we have that $q(x_i) = 0$, and using that $\norm{x_i-z}^2$ vanishes at $z=x_i$ we get that
\begin{equation}\label{convPrf:finalTerm3}
	\int_\subman \frac{e^{-2\norm{x_i-z}^2/\epsilon}}{\mu^2(z)} O(\norm{x_i-z}^2)p_\subman(z)dz = \left(\pi\epsilon/2\right)^{(d-d_G)/2}\cdot O(\epsilon).
\end{equation} 
Finally, plugging \eqref{convPrf:finalTerm1}, \eqref{convPrf:finalTerm2} and \eqref{convPrf:finalTerm3} into \eqref{convPrf:finalEval}, we get that
\begin{align}
		\expect{\left(H_i(x)\right)^2} &= \frac{(\pi\epsilon)^{\left(d+d_G\right)/2}}{2^{(d-d_G)/2}}\bigg[ \frac{f^2(x_i)p_\subman(x_i)}{\mu^2(x_i)}+O(\epsilon)\bigg],
\end{align}
which finishes the proof of \eqref{convPrf:scndMomentApproxLemmaHi2} in Lemma \ref{convPrf:scndMomentApproxLemma}.

\section{Non-uniform sampling distribution}\label{secNonUniformDistProof}
First, let us compute the limiting operator resulting from assuming a non-uniform sampling distribution~$p(x)$ in the setting of Theorem~\ref{sec2:ConvThrmUnnormalized}, by repeating the analysis of the bias error at the beginning of~\ref{secConvPrf} under this assumption. Fixing an $\epsilon>0$, we compute the limit of~\eqref{eq:steerable graph laplacian fraction} as~$N\rightarrow\infty$. By~\eqref{convPrf:C1}-\eqref{convPrf:numerLim}, we have that the limit of~$C_{i,N}^1$ in~\eqref{convPrf:C1} evaluates as
\begin{align}
	\lim_{N\rightarrow\infty} C_{i,N}^1&=\int_{\man} H_i(x) p(x)d\omega(x)\nonumber \\
	 &= \int_{\mathcal{M}}\int_G \exp{\left\lbrace-{\left\Vert x_i - B\cdot x \right\Vert^2}{/\varepsilon}\right\rbrace}f(B\cdot x) p(x)d\omega(x)d\eta(A)
 \end{align}
Making the change of variables~$y=A\cdot x$, and using~\eqref{convPrf:U_ADef}-\eqref{convPrf:pushforwardInvarianceProp} we have that
\begin{align}\label{nonUniformPrf:C1}
	\lim_{N\rightarrow\infty} C_{i,N}^1&=
	\int_G\int_{\mathcal{M}} \exp{\left\lbrace-{\left\Vert x_i - y \right\Vert^2}{/\varepsilon}\right\rbrace}f(y)p(A^*\cdot y) dU_A^*(\omega)(y)d\eta(A) \nonumber \\
	&=\int_G\int_{\mathcal{M}} \exp{\left\lbrace-{\left\Vert x_i - y \right\Vert^2}{/\varepsilon}\right\rbrace}f(y)p(A^*\cdot y) d\omega(y)d\eta(A) \nonumber \\
	&=\int_{\mathcal{M}}\int_G \exp{\left\lbrace-{\left\Vert x_i - y \right\Vert^2}{/\varepsilon}\right\rbrace}f(y)p(A^*\cdot y) d\eta(A)d\omega(y) \nonumber \\
	&=	\int_{\mathcal{M}} \exp{\left\lbrace-{\left\Vert x_i - y \right\Vert^2}{/\varepsilon}\right\rbrace}f(y)\tilde{p}(y) d\omega(y), 
\end{align}
where we defined
\begin{equation}
	\tilde{p}(x) = \int_G p(A^*\cdot y)d\eta(A). 
\end{equation}
Similarly, we get that the limit of~$C_{i,N}^2$ in~\eqref{convPrf:C2} as~$N\rightarrow\infty$ is given by
\begin{align}\label{nonUniformPrf:C2}
		\lim_{N\rightarrow\infty}C_{i,N}^2 &= 	\lim_{N\rightarrow\infty} \frac{1}{N}\sum_{j=1}^N \int_G \exp{\left\lbrace-{\left\Vert x_i - B\cdot x_j\right\Vert^2}{/\varepsilon}\right\rbrace} \nonumber \\&=\int_{\mathcal{M}} \exp{\left\lbrace-{\left\Vert x_i - y \right\Vert^2}{/\varepsilon}\right\rbrace}\tilde{p}(y) d\omega(y) d\eta(B).
\end{align}
By~\eqref{nonUniformPrf:C1} and~\eqref{nonUniformPrf:C2}, we obtain that the  the limit of~\eqref{eq:steerable graph laplacian fraction} when~$N\rightarrow\infty$ is given by
\begin{align}
	\lim_{N\rightarrow\infty}\frac{4}{\varepsilon} \left\{\tilde{L}g\right\}(i,I) &=  \lim_{N\rightarrow\infty}\frac{4}{\varepsilon}\left[f(x_i) -  \frac{C_{i,N}^1}{C_{i,N}^2}\right] \nonumber \\
	&= \frac{4}{\varepsilon}\left[f(x_i) -  \frac{\int_{\mathcal{M}} \exp{\left\lbrace-{\left\Vert x_i - y \right\Vert^2}{/\varepsilon}\right\rbrace}f(y)\tilde{p}(y) d\omega(y)}{\int_{\mathcal{M}} \exp{\left\lbrace-{\left\Vert x_i - y \right\Vert^2}{/\varepsilon}\right\rbrace}\tilde{p}(y) d\omega(y) }\right]
\end{align}
By the results in~\cite{diffMaps}, we get that 
\begin{align}
	\lim_{\epsilon\rightarrow 0}\lim_{N\rightarrow\infty}\frac{4}{\varepsilon} \left\{\tilde{L}g\right\}(i,I) = \Delta_\man f(x_i)-2\frac{\dprod{\nabla_\man f(x_i)}{\nabla_\man \tilde{p}(x_i)}}{\tilde{p}(x_i)}.
\end{align}
This shows that in case that~$p(x)$ in non-uniform, the normalized~$G$-GL converges to an operator that is different from the Laplace-Beltrami operator~$\Delta_\man$, namely, the Fokker-Planck operator on~$\man$ which depends on the density~$\tilde{p}(x)$. 

Nevertheless, following \cite{steerMaps} and~\cite{diffMaps}, we now show that we can retrieve~$\Delta_\man$ by normalizing the kernel function~$W_{ij}(A,B)$ in~\eqref{GinvSec:affinityKernelDef}, as follows. Let us define for all $i,j\in \left\{1,\ldots,N\right\}$
\begin{equation}
	\bar{W}_{ij}(A,B) \coloneq \frac{W_{ij}(A,B)}{D_{ii}D_{jj}},
\end{equation}
and for all $i\in \left\{1,\ldots,N\right\}$ 
\begin{equation}
	\bar{D}_{ii} \coloneq \sum_{i=1}^N \int_G \bar{W}_{ij}(I,A)d\eta (A). 
\end{equation}
Then, we define the density-normalized $G$-invariant graph Laplacian~$\bar{L}$ as
\begin{equation}
	\bar{L} = I - \bar{D}^{-1}\bar{W}. 
\end{equation}
By repeating the computations in equations~$(190)-(191)$ in~\cite{steerMaps}, we obtain that 
\begin{align}
	\lim_{N\rightarrow\infty}\frac{4}{\varepsilon} \left\{\tilde{L}g\right\}(i,I) 
	= \frac{4}{\varepsilon}\left[f(x_i) -  \frac{\int_{\mathcal{M}} \exp{\left\lbrace-{\left\Vert x_i - y \right\Vert^2}{/\varepsilon}\right\rbrace}f(y)\hat{p}(y) d\omega(y)}{\int_{\mathcal{M}} \exp{\left\lbrace-{\left\Vert x_i - y \right\Vert^2}{/\varepsilon}\right\rbrace}\hat{p}(y) d\omega(y) }\right],
\end{align}
where
\begin{equation}
	\hat{p}(x) = \frac{\tilde{p}(x)}{\int_{\mathcal{M}} \exp{\left\lbrace-{\left\Vert x - y \right\Vert^2}{/\varepsilon}\right\rbrace}\tilde{p}(y) d\omega(y)}. 
\end{equation}
Then, by a derivation in~\cite{diffMaps} we obtain that
\begin{equation}
	\lim_{\epsilon\rightarrow \epsilon} \lim_{N\rightarrow \infty} \frac{4}{\epsilon}\left\{\bar{L}g\right\}(i,I) = \Delta_\man f(x_i).
\end{equation}

\section{Eigendecomposition of the normalized $G$-GL }\label{secEigDecompNGGL}	
We now restate Theorem \ref{GInvLapProp:Thrm1} for the operator $\tilde{L}$ in \eqref{GinvDef:normGLapDef}, the normalized version of the $G$-GL. The proof is obtained by repeating that of Theorem \ref{GInvLapProp:Thrm1}, with the matrices~$D^{(\ell)}-\hat{W}^{(\ell)}$ replaced by the matrix sequence 
\begin{equation}
	S^{(\ell)} = I-(D^{\ell})^{-1}\hat{W}^{\ell}, \quad \ell \in \I,
\end{equation}
of \eqref{GinvProp:normFourierMat}, with required changes made in equations \eqref{eigdecompProof:evProp1}-\eqref{eigdecompProof:evProp2}, and omitting the proof of orthogonality which doesn't hold in this case. 

\begin{theorem}\label{GInvLapProp:Thrm1Norm}
	For each $\ell\in \mathbb{N}$, let $D^{\ell}$ be the $Nd_\ell\times Nd_\ell$ block-diagonal matrix who's $i$-th block of size $d_\ell\times d_\ell$ on the diagonal is given by the product of the scalar $D_{ii}$ in \eqref{GinvDef:Ddef} with the $d_\ell\times d_\ell$ identity matrix. 
	Then, the normalized $G$-invariant graph Laplacian admits the following:
	\begin{enumerate}
		\item A sequence of non-negative eigenvalues $\{\tilde{\lambda}_{1,\ell},\ldots,\tilde{\lambda}_{Nd_\ell,\ell}\}_{\ell\in \I}$, where $\tilde{\lambda}_{n,\ell}$ is the $n$-th eigenvalue of the matrix $S^{(\ell)} = I-D^{-1}\hat{W}^{\ell}$. 
		\item  A sequence $\{\tilde{\Phi}_{\ell,1,1},\ldots,\tilde{\Phi}_{\ell,d_\ell,Nd_\ell}\}_{\ell\in \I}$ of eigenfunctions, which are complete in $\mathcal{H}$ and are given by 
		\begin{equation}\label{GInvLapProp:EigenFuncFormNorm}
			\tilde{\Phi}_{\ell,m,n}(\cdot,A) = 
			\begin{pmatrix}
				-& e^1(\tilde{v}_{n,\ell})&-\\
				& \vdots &\\
				-& e^N(\tilde{v}_{n,\ell})&-
			\end{pmatrix}\cdot
			U^{(\ell)}_{\cdot,m}(A^*),
		\end{equation}
		where $\tilde{v}_{n,\ell}$ is the eigenvector of $S^{(\ell)} = I-D^{-1}\hat{W}^{\ell}$ which corresponds to its eigenvalue~$\tilde{\lambda}_{n,\ell}$. For each $n\in \{1,\ldots, Nd_\ell\}$ and $\ell\in\I_G$, the eigenvectors $\{\tilde{\Phi}_{\ell,-\ell,n},\ldots,\tilde{\Phi}_{\ell,\ell,n}\}$ correspond to the eigenvalue $\tilde{\lambda}_{n,\ell}$ of the normalized $G$-invariant graph Laplacian. 
	\end{enumerate}
\end{theorem}

\bibliographystyle{plain}
\bibliography{GGLmanuscript}

\end{document}